\def\ci{\perp\!\!\!\perp}
\newcommand{\bGpinv}{\bG^{\dag}}
\newcommand \bss{\boldsymbol{s}}
\DeclareMathOperator{\pa}{pa}
\DeclareMathOperator{\ch}{ch}
\DeclareMathOperator{\an}{an}
\DeclareMathOperator{\de}{de}
\DeclareMathOperator{\paplus}{\overline{\pa}}
\DeclareMathOperator{\chplus}{\overline{\ch}}
\DeclareMathOperator{\anplus}{\overline{\an}}
\DeclareMathOperator{\sspan}{span}
\DeclareMathOperator{\im}{im}
\DeclareMathOperator{\proj}{proj}
\DeclarePairedDelimiter{\ceil}{\lceil}{\rceil}
\DeclarePairedDelimiter{\floor}{\lfloor}{\rfloor}
\DeclareMathOperator{\Cov}{Cov}
\title{Linear Causal Representation Learning from Unknown Multi-node Interventions}
\author{%
    Burak Var{\i}c{\i}\thanks{Work was done while BV was a Ph.D. student at Rensselaer Polytechnic Institute.} \\ Carnegie Mellon University \And 
    Emre Acart{\"u}rk \\ Rensselaer Polytechnic Institute \AND
    Karthikeyan Shanmugam \\ Google DeepMind  \And %
    Ali Tajer \\ Rensselaer Polytechnic Institute
}
\begin{document}

\maketitle

\setcounter{footnote}{0}

\begin{abstract}
Despite the multifaceted recent advances in interventional causal representation learning (CRL), they primarily focus on the stylized assumption of single-node interventions. This assumption is not valid in a wide range of applications, and generally, the subset of nodes intervened in an interventional environment is \emph{fully unknown}. This paper focuses on interventional CRL under unknown multi-node (UMN) interventional environments and establishes the first identifiability results for \emph{general} latent causal models (parametric or nonparametric) under stochastic interventions (soft or hard) and linear transformation from the latent to observed space. Specifically, it is established that given sufficiently diverse interventional environments, (i) identifiability \emph{up to ancestors} is possible using only \emph{soft} interventions, and (ii) \emph{perfect} identifiability is possible using \emph{hard} interventions. Remarkably, these guarantees match the best-known results for more restrictive single-node interventions. Furthermore, CRL algorithms are also provided that achieve the identifiability guarantees. A central step in designing these algorithms is establishing the relationships between UMN interventional CRL and score functions associated with the statistical models of different interventional environments. Establishing these relationships also serves as constructive proof of the identifiability guarantees. \looseness=-1
\end{abstract}

\section{Introduction}\label{sec:introduction}
Causal representation learning (CRL) is a major leap in causal inference, moving away from the conventional objective of discovering causal relationships among a set of variables and learning the variables themselves. By combining the strengths of causal inference and machine learning, CRL specifies data representations that facilitate reasoning and planning~\cite{scholkopf2021toward}. CRL is motivated by the premise that in a wide range of applications, a lower-dimensional latent set of variables with causal interactions generates the usually high-dimensional observed data. Therefore, CRL's objective is to use the observed data and learn the latent causal generative factors, which include the causal latent variables and their causal relationships.

\noindent\textbf{CRL objectives.} Formally, consider a set of latent causal random variables $Z\in\R^n$ and a directed acyclic graph (DAG) $\mcG$ that encodes the causal relationships among $Z$. The latent variables are transformed by an \emph{unknown} function $g$ to generate the \emph{observed} random variables $X\in\R^d$, where $X \triangleq g(Z)$. CRL aims to use $X$ to recover the latent causal variables $Z$ and the causal structure $\mcG$.

Two central questions of CRL pertain to \emph{identifiability}, which refers to determining the conditions under which $Z$ and $\mcG$ can be recovered, and \emph{achievability}, which refers to designing CRL algorithms that can achieve the foreseen identifiability guarantees. Identifiability has been demonstrated to be inherently under-constrained~\cite{locatello2019challenging}, prompting the development of diverse methodologies that incorporate inductive biases to enable identifiability. \emph{Interventional} CRL is one direction with significant recent advances in which interventions on latent causal variables are used to create statistical diversity in the observed data~\cite{scholkopf2021toward,ahuja2023interventional,squires2023linear,varici2023score,varici2024general}.

\noindent\textbf{Unknown multi-node interventions.} Despite covering many aspects of interventional CRL, such as parametric versus nonparametric causal models, parametric versus nonparametric transformations, and intervention types, the majority of the existing studies assume that the interventions are single-node, i.e., exactly one latent variable is intervened in each environment~\cite{ahuja2023interventional,squires2023linear,varici2023score,varici2024general,varici2024score,buchholz2023learning,vonkugelgen2023twohard,zhang2023identifiability}. This assumption, however, is restrictive in some of the application domains of CRL such as biology and robotics in which generally the subset of nodes intervened in an intervention environment can be \emph{fully unknown}. For instance, biological perturbations in genomics are imperfect interventions with off-target effects on other genes~\cite{utigsp,tejada2023causal}, or interventions on robotics applications are likely to affect multiple causal variables~\cite{lee2023scale}. Hence, realizing the promises of CRL critically hinges on dispensing with the assumption of single-node interventions.

In this paper, we address the open problem of using \emph{unknown multi-node (UMN) stochastic} interventions to recover the latent causal variables $Z$ and their causal graph $\mcG$, wherein each environment an unknown subset of nodes are intervened. We consider a general latent causal model (parametric or nonparametric) and focus on the \emph{linear} transformations as an important class of parametric transformation models. We establish identifiability results and design algorithms to achieve them under both soft and hard interventions. For this purpose, we delineate connections between UMN interventions and the properties of score functions, i.e., the gradients of the logarithm of density functions. This score-based framework is the UMN counterpart of the single-node framework proposed in~\cite{varici2023score,varici2024score}, albeit with significant technical differences. Our contributions are summarized below. 
\begin{itemize}[leftmargin=1em,topsep=0ex,itemsep=0ex]
    \item We show that under sufficiently diverse interventional environments, UMN stochastic hard interventions suffice to guarantee perfect identifiability of the latent causal graph and the latent variables (up to permutations and element-wise scaling).

    \item We show that under sufficiently diverse interventional environments, UMN soft interventions guarantee identifiability up to ancestors -- transitive closure of the latent DAG is recovered, and latent variables are recovered up to a linear function of their ancestors. Remarkably, these guarantees match the best possible results in the literature of single-node interventions.

    \item We design score-based CRL algorithms for implementing CRL with UMN interventions with provable guarantees. These guarantees also serve as constructive proof steps of the identifiability results. \looseness=-1
\end{itemize}

\noindent\textbf{Challenges of UMN interventions.}
There are two broad challenges specific to addressing the UMN intervention setting that render it substantially distinct from the single-node (SN) intervention setting. First, in SN interventions, since the learner knows exactly one node is intervened in each environment, it can readily identify the intervention targets up to a permutation. In contrast, in UMN interventions, the learner does not know how many nodes are intervened in each environment. Therefore, the nature of resolving the uncertainty about the intervention targets becomes fundamentally different. An immediate impact of this is that it becomes more challenging to properly capitalize on the statistical diversity embedded in the interventional data. Secondly, in SN interventions, only one causal mechanism changes across the environments. Such sparse variations of the causal mechanisms are a core property leveraged by various existing CRL approaches, e.g., contrastive learning~\cite{buchholz2023learning}, and score-based framework~\cite{varici2024general,varici2024score}. On the contrary, UMN interventions allow for many concurrent causal mechanism changes, which renders leveraging sparsity patterns in mechanism variations futile. Finally, since intervention targets are unknown, our central algorithmic idea is to properly aggregate the UMN interventional environments to create new distinct environments under which the inherent statistical diversity is more accessible. 

\subsection{Related literature}\label{sec:related}

\noindent\textbf{Single-node interventional CRL.}
The majority of the studies on interventional CRL focus on SN interventions~\cite{ahuja2023interventional,squires2023linear,varici2023score,varici2024general,varici2024score,buchholz2023learning,vonkugelgen2023twohard,zhang2023identifiability,jin2023learning}, which can be categorized based on their assumptions on the latent causal model, transformation, and intervention model. Based on this taxonomy, it has been shown that SN hard interventions suffice for identifiability with general latent causal models and linear transformations (one intervention per node)~\cite{varici2024score}, with linear Gaussian latent models and general transformations (one intervention per node)~\cite{buchholz2023learning}, and with general latent models and general transformations (two interventions per node)~\cite{varici2024general,vonkugelgen2023twohard}. For the less restrictive SN soft interventions, identifiability up to ancestors is shown for general latent models and linear transformations~\cite{varici2024score} and linear Gaussian latent models and general transformations~\cite{buchholz2023learning}. Furthermore, under additional assumptions such as sufficiently nonlinear latent models, the latent DAG is shown to be perfectly identifiable~\cite{varici2024score,zhang2023identifiability,jin2023learning}. In a related study, \cite{jiang2023learning} focuses on learning the latent DAG (but without learning latent causal variables) using SN hard interventions without parametric assumptions on the model.

\noindent\textbf{Multi-node interventional CRL.} The studies on MN intervention settings are sparser than the SN intervention settings. Table~\ref{tab:related-comp} summarizes the results closely related to the scope of this paper along with the identifiability results established in this paper. In summary, the existing studies either provide \emph{partial} identifiability or focus on non-stochastic \texttt{do} interventions. The study in \cite{jin2023learning} focuses on linear non-Gaussian latent models and linear transformations and uses soft interventions to establish identifiability up to surrounding variables by using multiple interventional mechanisms for each node.
In a different study,~\cite{bing2024identifying} uses strongly separated multi-node \texttt{do} interventions and provides perfect identifiability results for general latent models and linear transformations. We also note the partially related study in~\cite{ahuja2024multi} that applies soft interventions on a subset of nodes and aims to disentangle the non-intervened variables from the intervened ones. Distinct from all these studies, we address the open problem of perfect identifiability under UMN stochastic interventions.

\noindent\textbf{Other approaches to CRL.} We note that there exist other interesting settings that address CRL without interventions. Some examples include using multi-view data~\cite{vonkugelgen2021self,brehmer2022weakly,yao2024multiview,sturma2023unpaired}, leveraging temporal sequences~\cite{lippe2023icitris,lippe2023biscuit}, building on nonlinear independent component analysis (ICA) principles to identify polynomial latent causal models \cite{liu2024identifiable}, and imposing sparsity constraints to obtain partial disentanglement~\cite{lachapelle2024nonparametric,zhang2024causal}, and grouping of observational variables~\cite{morioka2024causal}. We refer to \cite{komanduri2024from} for a detailed literature review on various CRL problems.

\begin{table*}[t]
    \centering
    \caption{\small Comparison of the results to existing work in multi-node interventional CRL. We note that all studies assume linear transformation.}
    \scalebox{0.9}{
    \begin{tabular}{ccccccc}
        \toprule
        Work  & Latent model & Int. type  & Main assumption on interventions & Identifiability (ID) \\
        \hline
        \cite{jin2023learning}  & Linear & Soft & $|\pa(i)|$ independent int. mechanisms & ID up to surrounding \\
        \cite{bing2024identifying} & General & $\texttt{do}$ & strongly separated interventions & perfect ID \\
        \textbf{Theorem~\ref{theorem:main-hard}} & General & Hard & lin. indep. interv. (Assumption~\ref{assumption:full-rank-environments}) & perfect ID \\
        \textbf{Theorem~\ref{theorem:main-soft}} & General & Soft & lin. indep. interv. (Assumption~\ref{assumption:full-rank-environments}) & ID up to ancestors \\
        \bottomrule
    \end{tabular}}
    \label{tab:related-comp}
\end{table*}
\setlength{\textfloatsep}{8pt}%

\section{CRL setting and preliminaries}\label{sec:problem}

\noindent\textbf{Notations.}  Vectors are represented by lowercase bold letters, and element $i$ of vector $\ba$ is denoted by $\ba_i$. Matrices are represented by uppercase bold letters, and we denote row $i$ and column $j$ of matrix $\bA$ by $\bA_{i}$ and by $\bA_{:, j}$, respectively, and $\bA_{i,j}$ denotes the entry at row $i$ and column $j$. We use $\nullspace(\{\bA_1,\dots,\bA_r\})$ to denote the nullspace of the matrix consisting of the row vectors $\{\bA_1,\dots,\bA_r\}$. For $n \in \N$, we define $[n] \triangleq \{1, \dots, n\}$. The row permutation matrix associated with any permutation $\pi$ of $[n]$ is denoted by $\bP_{\pi}$. We denote the indicator function by $\mathds{1}$. We use $\im(f)$ to denote the image of a function $f$ and $\dim(\mcV)$ to denote the dimension of a subspace $\mcV$. Random variables and their realizations are presented by upper and lower case letters, respectively.

\subsection{Latent causal model}\label{sec:latent-causal-model}
Consider a latent causal space consisting of $n$ causal random variables $Z \triangleq  [Z_1,\dots,Z_n]^{\top}$. An \emph{unknown} linear transformation $\bG \in \R^{d \times n}$ maps $Z$ to the observed random variables denoted by $X \triangleq [X_1,\dots,X_d]^{\top}$ according to:
\begin{equation}\label{eq:data-generation-process}
    X = \bG \cdot Z \ ,
\end{equation}
where $d\geq n$ and $\bG$ is full rank. The probability density functions (pdfs) of $X$ and $Z$ are denoted by $p_X$ and $p_Z$, respectively. We assume that $p_Z$ has full support on $\R^n$. Subsequently, $p_X$ is supported on $\mcX \triangleq \im(\bG)$. The causal relationships among latent variables $Z$ are represented by a DAG $\mcG$ in which the $i$-th node corresponds to $Z_i$. Hence, $p_Z$ factorizes according to:
\begin{equation}\label{eq:pz_factorized}
    p_Z(z) = \prod_{i=1}^{n} p_i(z_i \mid z_{\pa(i)}) \ ,
\end{equation}
where $\pa(i)$ denotes the set of parents of node $i$ in $\mcG$. The conditional pdfs $\{p_i(z_i \mid z_{\pa(i)}): i\in [n]\}$ are assumed to be continuously differentiable with respect to all $z$ variables. We use $\ch(i)$, $\an(i)$, and $\de(i)$ to denote the children, ancestors, and descendants of node $i$, respectively. We say that a permutation $(\pi_1,\dots,\pi_n)$ of $[n]$ is a valid causal order if the membership $\pi_i \in \an(\pi_j)$ indicates that $i < j$. Without loss of generality, we assume that $(1,\dots, n)$ is a valid causal order. We will specialize some of our results for the latent causal models with additive noise~\footnote{Perfect identifiability results in the closely related literature are given for additive noise models~\citep{varici2024score,squires2023linear,buchholz2023learning,bing2024identifying}.} specified by
\begin{equation}\label{eq:additive-SCM}
    Z_i = f_i(Z_{\pa(i)}) + N_i \ ,
\end{equation}
where functions $\{f_i : i \in [n]\}$ capture the causal dependence of node $i$ on its parents and the terms $\{N_i : i \in [n]\}$ represent the exogenous noise variables.

\subsection{Unknown multi-node intervention models}\label{sec:intervention-model}

In addition to the observational environment, we have $M$ UMN \emph{interventional} environments denoted by $\{\mcE^m : m\in [M]\}$. We assume that the set of nodes intervened in each environment is \emph{unknown}, and denote the set intervened in environment $\mcE^m$ by $I^m \subseteq [n]$. Accordingly, we define the \emph{intervention signature matrix} $\bD_{\rm int} \in \{0,1\}^{n \times M}$ to compactly represent the intervention targets under various environments as
\begin{equation}\label{eq:def-D-all}
    [\bD_{\rm int}]_{i,m} = \mathds{1}\{i \in I^m\} \ , \quad \forall i \in [n] \ , \;\; \forall m \in [M] \ .
\end{equation}
The $m$-th column of $\bD_{\rm int}$ lists the indices of the nodes intervened in environment $\mcE^m$, which we refer to as the \emph{intervention vector} of environment $\mcE^m$. Ensuring identifiability inevitably imposes restrictions on the structure of $\bD_{\rm int}$. For instance, if the $i$-th row of $\bD_{\rm int}$ is a zero vector, it means that node $i$ is not intervened in any environment, then the perfect identifiability is not possible~\cite {squires2023linear}~\footnote{\cite[Proposition 5]{squires2023linear} shows that if the non-intervened node $i$ has at least one parent, then perfect identifiability is not possible.}. Therefore, to avoid such impossibility cases, we impose the mild condition that $\bD_{\rm int}$ has sufficiently diverse columns, formalized next.
\begin{assumption}\label{assumption:full-rank-environments}
    Intervention signature matrix $\bD_{\rm int}$ defined in \eqref{eq:def-D-all} is full row rank, i.e., it contains $n$ linearly independent intervention vectors.
\end{assumption}
In this paper, we consider  UMN stochastic interventions and address identifiability results under both hard interventions as well as soft interventions as the most general form of intervention.

\noindent\textbf{Soft interventions.} A soft intervention on node $i$ alters the \emph{observational causal mechanism} $p_i(z_i \mid z_{\pa(i)})$ to an \emph{interventional causal mechanism} $q_i(z_i \mid z_{\pa(i)})$. Such a change occurs in node $i$ in all the environments $\mcE^m$ that contain node $i$, i.e., $i \in I^m$. Subsequently, the pdf of the latent variables in environment $\mcE^m$, denoted by $p_Z^m$, factorizes according to:
\begin{equation}\label{eq:pzm_factorized}
    p_Z^{m}(z) \triangleq \prod_{i \in I^m} q_i(z_i \mid z_{\pa(i)}) \prod_{i \not\in I^m} p_i(z_i \mid z_{\pa(i)})  \ , \quad \forall m \in [M] \ .
\end{equation}
\noindent\textbf{Hard interventions.} Under a \emph{hard} intervention on node $i$, the functional dependence of node $i$ on its parents is removed, and the observational causal mechanism $p_i(z_i \mid z_{\pa(i)})$ is changed to an interventional causal mechanism $q_i(z_i)$, independent of parents of node $i$.

To distinguish the observational and interventional data, we denote the latent and observed random variables in environment $\mcE^{m}$ by $Z^{m}$ and $X^{m}$, respectively. We note that interventions do not affect the transformation $\bG$. Hence, in $\mcE^m$ we have $X^m = \bG \cdot Z^m$ for all $m\in[M]$.

\noindent\textbf{Score functions.}
The score function of a pdf is defined as the gradient of its logarithm. We denote the score functions associated with the distributions of $Z^m$ and $X^m$ by
\begin{equation}
    \bss_{Z}^{m}(z) \triangleq \nabla \log p_Z^{m}(z) \ , \quad \mbox{and} \quad  \bss_{X}^{m}(x) \triangleq \nabla \log p_X^{m}(x) \ , \quad \forall m \in [M] \ .
\end{equation}
Note that, using the factorization in \eqref{eq:pzm_factorized}, $\bss_Z^m$ decomposes as
\begin{equation}\label{eq:sz-decomposition}
    \bss_{Z}^{m}(z) = \sum_{i \in I^{m}} \nabla \log q_{i}(z_i \mid z_{\pa(i)}) + \sum_{i \not\in I^{m}} \nabla \log p_{i}(z_i \mid z_{\pa(i)}) \ .
\end{equation}
We denote the difference in score functions between interventional and observational environments by \looseness=-1
\begin{equation}
    \Delta \bss_{Z}^{m}(z) \triangleq \bss_{Z}^{m}(z) - \bss_{Z}(z) \quad \mbox{and} \quad \Delta \bss_{X}^{m}(x) \triangleq \bss_{X}^{m}(x) - \bss_{X}(x) \ ,  \quad \forall m \in [M] \ . \label{eq:score-diff-X-Z}
\end{equation}

\subsection{Identifiability criteria} \label{sec:objectives}

In CRL, we use observed variables $X$ to recover the true latent variables $Z$ and the latent causal graph $\mcG$. We denote a generic estimator of $Z$ given $X$ by $\hat Z(X):\R^d\to\R^n$. We also consider a generic estimate of $\mcG$ denoted by $\hat\mcG$. To assess the fidelity of the estimates $\hat Z(X)$ and $\hat \mcG$ with respect to the ground truth $Z$ and $\mcG$, we provide the following well-known identifiability measures.
\begin{definition}[Identifiability]\label{def:identifiability}
    For CRL under linear transformations, we define:
    \begin{enumerate}[leftmargin=2em,topsep=0ex,itemsep=0ex]
        \item {\bf Perfect identifiability:} $\hat \mcG$ and $\mcG$ are isomorphic, and the estimator $\hat Z(X)$ satisfies that
        \begin{equation}\label{eq:scaling_cons}
            \hat Z(X) = \bP_{\pi} \cdot \bC_{\rm s} \cdot Z \ ,\qquad \forall Z\in\R^n\ ,
        \end{equation}
        where $\bC_{\rm s} \in \R^{n \times n}$ is a \emph{constant} diagonal matrix with nonzero diagonal entries and $\bP_{\pi}$ is a row permutation matrix.

        \item {\bf Identifiability up to ancestors:} $\hat \mcG$ and transitive closure of $\mcG$, denoted by $\mcG_{\rm tc}$, are isomorphic, and the estimator $\hat Z(X)$ satisfies that
        \begin{equation}\label{eq:mixing_cons_ancestors}
            \hat Z(X) = \bP_{\pi} \cdot \bC_{\rm an} \cdot Z \ , \qquad \forall Z\in\R^n\ ,
        \end{equation}
        where $\bC_{\rm an} \in \R^{n \times n}$ is a \emph{constant} matrix with nonzero diagonal entries that satisfies $[\bC_{\rm an}]_{i,j}= 0$ for all $j\notin \{\an(i) \cup \{i\}\}$, and $\bP_{\pi}$ is a row permutation matrix.
    \end{enumerate}
\end{definition}
In the algorithm we will design, estimating $\hat Z(X)$ and $\hat \mcG$ are facilitated by estimating the inverse of the transformation $\bG$, that is Moore-Penrose inverse $\bGpinv \triangleq [\bG^{\top} \cdot \bG]^{-1} \cdot \bG^{\top}$, which we refer to as the \emph{true encoder}. To formalize the process of estimating the true encoder, we define $\mcH$ as the set of candidate encoders specified by $\mcH \triangleq \{ \bH \in \R^{n \times d}:  \rank(\bH) = n \; \mbox{and} \; \bH^{\top} \cdot \bH \cdot X = X \ , \; \forall X \in \mcX \} $.
Corresponding to any pair of observation $X$ and valid encoder $\bH \in \mcH$, we define $\hat Z(X;\bH)$ as an \emph{auxiliary} estimate of $Z$ generated as $ \hat Z(X;\bH) \triangleq \bH \cdot X = (\bH \cdot \bG) \cdot Z$.

\section{Identifiability under UMN interventions }\label{sec:identifiability}

In this section, we present the main identifiability and achievability results for CRL with UMN interventions and interpret them in the context of the recent results in the literature. We start by specifying the regularity conditions on the statistical models, which are needed to ensure sufficient statistical diversity and establish identifiability results for hard and soft UMN interventions. The constructive proofs of the results are based on CRL algorithms, the details of which are presented in Section~\ref{sec:algorithm}. Complete proofs are deferred to Appendix~\ref{appendix:proofs}.

We note that the UMN setting subsumes SN interventions. Similarly to all the existing identifiability results from SN interventions, it is necessary to have \emph{sufficient} statistical diversity created by the intervention models.\footnote{Some examples include Assumption~1 in~\cite{varici2024general}, generic SN interventions in~\cite{squires2023linear}, no pure shift interventions condition in~\cite{buchholz2023learning}, and the genericity condition in~\cite{vonkugelgen2023twohard}.} These conditions can be generally presented in the form of regularity conditions on the probability distributions. Specifically, a commonly adopted regularity condition (or its variations) in the SN intervention setting is that for every possible pair $(i,j)$ where $i\in[n], j \in \pa(i)$, the following term cannot be a constant function in $z$,
\begin{equation}\label{eq:ratio-SN}
    \small \frac{\partial}{\partial z_j} \left(\log\frac{p_i(z_i \mid z_{\pa(i)})}{q_i(z_i \mid z_{\pa(i)})} \right)\left[\frac{\partial}{\partial z_i} \log\frac{p_i(z_i \mid z_{\pa(i)})}{q_i(z_i \mid z_{\pa(i)})}\right]^{-1}\ .
\end{equation}
We present a counterpart of these conditions for UMN interventions, which involves one additional term to account for the effect of intervening on multiple nodes simultaneously.
\begin{definition}[Intervention regularity]\label{def:intervention-regularity}
    We say that an interventions are \emph{regular} if for every possible triplet $(i,j,c)$ where $i\in[n], j \in \pa(i)$ and $c \in \mathbb{Q}$, the following ratio cannot be a constant function in $z$
    \begin{equation}\label{eq:ratio-UMN}
        \small \frac{\partial}{\partial z_j} \left(\log\frac{p_i(z_i \mid z_{\pa(i)})}{q_i(z_i \mid z_{\pa(i)})} + c \cdot \log\frac{p_j(z_j \mid z_{\pa(j)})}{q_j(z_j \mid z_{\pa(j)})}\right)\left[\frac{\partial}{\partial z_i} \log\frac{p_i(z_i \mid z_{\pa(i)})}{q_i(z_i \mid z_{\pa(i)})}\right]^{-1}\ .
    \end{equation}
\end{definition}
Essentially, $\frac{\partial}{\partial z_j} \log\frac{p_i(z_i \mid z_{\pa(i)})}{q_i(z_i \mid z_{\pa(i)})}$ captures the effect of intervening on node $i$ on the \emph{score} associated with node $j$. In our method, we will use combinations of score differences of multi-node environments. This regularity condition ensures that the effect of a multi-node intervention is not the same on the scores associated with different nodes. Given these properties, we establish perfect identifiability for CRL with linear transformations using UMN stochastic hard interventions.

\begin{theorem}[Identifiability under UMN hard interventions]\label{theorem:main-hard}
    Under Assumption~\ref{assumption:full-rank-environments} and a latent model with additive noise,
    \begin{enumerate}[leftmargin=2em,topsep=0ex,itemsep=0ex]
        \item perfect latent recovery is possible using regular UMN hard interventions; and
        \item if the latent causal model satisfies adjacency-faithfulness, then perfect latent DAG recovery is possible using regular UMN hard interventions.
    \end{enumerate}
\end{theorem}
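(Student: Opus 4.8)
The statement is proved constructively through the CRL algorithm of Section~\ref{sec:algorithm}; I outline the invariants it must maintain and the facts that drive each step. Since $\bG$ has full column rank and $d\geq n$, for any $\bH\in\mcH$ the constraint $\bH^{\top}\bH X = X$ on $\mcX$ forces $\bH\bG$ to have trivial kernel, so $\bT\triangleq\bH\bG\in\R^{n\times n}$ is invertible. The pushforward $\hat Z^m = \bH X^m = \bT Z^m$ then has a full-dimensional density on $\R^n$, estimable from the interventional samples after applying $\bH$, and
\begin{equation*}
    \bss_{\hat Z}^m(\hat z) = \bT^{-\top}\bss_Z^m(\bT^{-1}\hat z)\ ,\qquad \Delta\bss_{\hat Z}^m(\hat z) = \bT^{-\top}\,\Delta\bss_Z^m(\bT^{-1}\hat z)\ .
\end{equation*}
Thus the argument can be phrased as a search over $\bH$ (equivalently $\bT$) that makes these transformed score differences as structured as possible, with $\bH=\bGpinv$ (so $\bT$ is the identity) as the reference point.

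\textbf{Step 2 (support structure of hard-intervention score differences).} By \eqref{eq:sz-decomposition} and the fact that a hard intervention replaces $p_i(z_i\mid z_{\pa(i)})$ by $q_i(z_i)$, one gets $\Delta\bss_Z^m(z)=\sum_{i\in I^m}\bv_i(z)$ with $\bv_i(z)\triangleq\nabla\log q_i(z_i)-\nabla\log p_i(z_i\mid z_{\pa(i)})$, and $\bv_i$ is supported on the coordinate set $\{i\}\cup\pa(i)$. Under the additive-noise model $\bv_i$ has an explicit form in terms of the noise log-densities and the partials of $f_i$, and the intervention-regularity condition of Definition~\ref{def:intervention-regularity} — crucially through its free rational constant $c$ — rules out coincidental cancellations, so that the support of $\sum_i\alpha_i\bv_i$ equals $\bigcup_{i:\alpha_i\neq 0}(\{i\}\cup\pa(i))$. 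Since combinations of the observed score differences satisfy $\sum_m c_m\Delta\bss_Z^m=\sum_i[\bD_{\rm int}\,c]_i\,\bv_i$, the combinations with \emph{minimal} support are precisely those with $\bD_{\rm int}\,c$ proportional to a coordinate vector $e_i$, achieving support $\{i\}\cup\pa(i)$; by Assumption~\ref{assumption:full-rank-environments} such a $c$ exists for every $i$.

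\textbf{Step 3 (disentanglement, i.e.\ recovering $\bGpinv$ up to permutation and scaling) — the main obstacle.} The algorithm jointly optimizes over a candidate encoder $\bH$ and $n$ combination vectors $c^{(1)},\dots,c^{(n)}$, minimizing the total support size $\sum_{k}\big|\operatorname{supp}(\sum_m c^{(k)}_m\Delta\bss_{\hat Z}^m)\big|$ under a non-degeneracy constraint on the family $\{c^{(k)}\}$. When $\bH$ is the true encoder up to a permutation $\pi$ and nonzero diagonal scaling — so $\bT^{-\top}$ is again a permuted diagonal matrix and preserves supports up to $\pi$ — the minimum is attained, by Step~2, with each $c^{(k)}$ synthesizing a single-node difference $\bv_{i_k}$; conversely, any $\bT$ that genuinely mixes coordinates inflates at least one support by the non-collapse property of Step~2, so every minimizer has $\bT$ of permuted-diagonal form, giving $\hat Z(X)=\bP_{\pi}\,\bC_{\rm s}\,Z$ and perfect latent recovery. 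This is the crux: unlike the single-node setting, where each environment is a priori attached to one latent coordinate, here the intervention targets are unknown, overlapping, and $\bD_{\rm int}$ is unobserved, so one cannot directly solve $\bD_{\rm int}\,c=e_i$; one must instead show the sparsest combinations of the observed score differences under an optimal encoder are \emph{forced} to coincide with single-node interventions, which rests entirely on the strengthened regularity condition ruling out cancellations among the $\bv_i$, and that the joint optimization has no spurious minimizer.

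\textbf{Step 4 (latent DAG recovery under adjacency-faithfulness).} With the latent coordinates recovered, the synthesized single-node differences $\{\bv_i\}$ are available in (permuted) latent coordinates with $\operatorname{supp}(\bv_i)=\{i\}\cup\pa(i)$. Peeling in a valid causal order — the source nodes are those $i$ for which $\operatorname{supp}(\bv_i)$ is a singleton, which are removed before recursing on the remaining environments — simultaneously identifies, for every synthesized environment, its target node and that node's oriented parent set, thereby reconstructing $\mcG$. Adjacency-faithfulness is invoked exactly to guarantee $[\bv_i]_j\neq 0$ for every true parent $j\in\pa(i)$ (no exact cancellation in the observational mechanism), so no edge is missed and the recovered $\hat\mcG$ is isomorphic to $\mcG$.
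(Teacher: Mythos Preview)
Your Step~3 contains a genuine gap that breaks the argument. The claim that ``any $\bT$ that genuinely mixes coordinates inflates at least one support'' is false precisely for the ancestor-mixing transformations that the paper has to work hardest to eliminate. Take $n=2$ with $1\to 2$ and $\bT=\begin{psmallmatrix}1&0\\a&1\end{psmallmatrix}$ for any $a\in\R$. Then $\bT^{-\top}\bv_1$ still has support $\{1\}$, and $\bT^{-\top}\bv_2=([\bLambda]_{1,2}-a[\bLambda]_{2,2},\,[\bLambda]_{2,2})^\top$ has support $\{1,2\}$ because intervention regularity says $[\bLambda]_{1,2}/[\bLambda]_{2,2}$ is non-constant for every fixed $a$. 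So the total support is $1+2=3$ for \emph{every} $a$, including $a=0$: your joint minimization has a one-parameter family of minimizers and cannot distinguish the true encoder from any ancestor mixing. This is not an accident --- it is exactly why soft interventions give only identifiability up to ancestors (Theorem~\ref{theorem:main-soft}), and nothing in your Steps~2--3 uses the hardness of the intervention. The regularity condition in Definition~\ref{def:intervention-regularity} prevents supports from \emph{shrinking} under mixing, not from staying the same.

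The paper's route is therefore structurally different from yours at the point that matters. Its Stages~2--3 use subspace-dimension arguments (close in spirit to your support idea) to get identifiability \emph{up to ancestors} and the transitive closure $\mcG_{\rm tc}$; this is all that score-difference supports can deliver. The jump to perfect recovery happens in Stage~4 by exploiting the defining property of a \emph{hard} intervention: in an environment with $\pi_k\in I^m$, the variable $Z^m_{\pi_k}$ is independent of its non-descendants. The paper translates this into a covariance equation $\Cov(\bu\cdot\hat Z^m_{\hat\an(k)}+\hat Z^m_k,\hat Z^m_{\hat\an(k)})=\boldsymbol{0}$, shows it has a unique solution $\bu$, and uses the additive-noise structure to certify (via an independence test and comparison with the observational solution $\bu^{\rm obs}$) that the resulting row update removes all ancestor contamination. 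None of this appears in your proposal. Your Step~4 also misapplies adjacency-faithfulness: that assumption concerns conditional-independence statements in $p_Z$, not nonvanishing of coordinates of $\bv_i$. The paper uses it in the standard way --- after perfect latent recovery, run CI tests on $\hat Z$ with conditioning set $\hat\pa(j)\setminus\{t\}$ to prune $\mcG_{\rm tc}$ down to $\mcG$.
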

Theorem~\ref{theorem:main-hard} is the first perfect identifiability result using UMN \emph{stochastic} hard interventions. In contrast, \cite{bing2024identifying} establishes perfect latent recovery using highly more stringent $\texttt{do}$-interventions. Furthermore, Theorem~\ref{theorem:main-hard} establishes the first perfect latent DAG recovery result (under any type of multi-node interventions) for nonparametric latent models. We note that the capability of handling nonparametric latent models stems from leveraging the score functions. Similar properties are demonstrated by the prior work on score-based CRL for SN interventions~\cite{varici2024score}. It is noteworthy that we use a total of $n+1$ environments whereas the study in~\cite{bing2024identifying} requires $2\ceil{\log_2 n}$ \texttt{do} interventions of strongly separating sets. However, we show that identifiability is \emph{impossible} using strongly separating sets of UMN stochastic hard interventions (see Appendix~\ref{appendix:insufficient-separating-set}).

Next, we consider UMN \emph{soft} interventions. Since soft interventions retain the ancestral dependence of the intervened node, in general, the identifiability guarantees for soft interventions are weaker than those of hard interventions. Next, we establish that UMN soft interventions guarantee identifiability up to ancestors for the general causal latent models and linear transformations.
\begin{theorem}[Identifiability under UMN soft interventions]\label{theorem:main-soft}
    Under Assumption~\ref{assumption:full-rank-environments}, identifiability up to ancestors is possible using regular UMN soft interventions.
\end{theorem}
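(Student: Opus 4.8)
The plan is to establish the ancestral recovery \eqref{eq:mixing_cons_ancestors} in three moves: transport the problem from the observed space to the latent space via score functions; use Assumption~\ref{assumption:full-rank-environments} to synthesize, out of the $M$ unknown multi-node environments, a family of $n$ ``single-node-equivalent'' score contrasts; and then run a sparsity argument on these contrasts in which Definition~\ref{def:intervention-regularity} supplies the genericity that forces the encoder into the ancestral ambiguity class. \emph{Step 1 (reduction to latent score contrasts).} For any candidate encoder $\bH\in\mcH$ we have $\bH^{\top}\bH\bG=\bG$ on $\mcX$, so $\bW\triangleq\bH\bG$ is invertible ($\bW v=0$ implies $\bG v=(\bH^{\top}\bH\bG)v=\bH^{\top}\bW v=0$, hence $v=0$) and $\hat Z(X;\bH)=\bW Z$. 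A change of variables in the density yields $\bss_{\hat Z}^{m}=\bW^{-\top}\big(\bss_Z^{m}\circ\bW^{-1}\big)$ for every $m$, hence $\Delta\bss_{\hat Z}^{m}=\bW^{-\top}\big(\Delta\bss_Z^{m}\circ\bW^{-1}\big)$; for the non-invertible $\bG$ the analogous identity $\Delta\bss_X^{m}=\bGpinv{}^{\top}\big(\Delta\bss_Z^{m}\circ\bGpinv\big)$ still holds on $\mcX$, since score \emph{differences} are insensitive to the ambient embedding. As the $\Delta\bss_X^{m}$ are functionals of the observed interventional distributions, they are available to the learner, and the goal becomes to choose $\bH$ so that the transformed contrasts $\bW^{-\top}\Delta\bss_Z^{m}$ display the sparsity structure of the ancestral class.

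\emph{Step 2 (synthesizing single-node contrasts).} By the decomposition \eqref{eq:sz-decomposition}, $\Delta\bss_Z^{m}(z)=\sum_{i\in I^{m}}T_i(z)$ where $T_i(z)\triangleq\nabla\log\big(q_i(z_i\mid z_{\pa(i)})/p_i(z_i\mid z_{\pa(i)})\big)$ has support contained in $\paplus(i)$; collecting columns, $\big[\Delta\bss_Z^{1}\mid\cdots\mid\Delta\bss_Z^{M}\big]=\bT(z)\,\bD_{\rm int}$ with $\bT(z)\triangleq[T_1(z)\mid\cdots\mid T_n(z)]$. Since $\bD_{\rm int}$ is full row rank by Assumption~\ref{assumption:full-rank-environments}, for each $i$ there exists $\alpha^{(i)}\in\R^{M}$ with $\bD_{\rm int}\alpha^{(i)}=\mathbf{e}_i$, so the aggregated contrast $\sum_m\alpha^{(i)}_m\,\Delta\bss_Z^{m}=T_i$ isolates a single observational-to-interventional mechanism change and behaves exactly as one environment of a single-node soft intervention on node $i$; moreover Definition~\ref{def:intervention-regularity} forces $T_1,\dots,T_n$ to be linearly independent as functions, so the space of synthesizable contrasts has dimension exactly $n$. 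The learner knows neither $\bD_{\rm int}$ nor the $\alpha^{(i)}$, so the algorithm of Section~\ref{sec:algorithm} instead searches jointly over encoders $\bH$ and aggregation vectors $\beta^{(i)}\in\R^{M}$ and, roughly, selects the feasible solution whose $n$ synthesized contrasts $\bW^{-\top}\bT(z)\bD_{\rm int}\beta^{(i)}$ are of minimal support subject to being consistent with a common causal ordering; the content of this step is to prove that any such solution necessarily has $\bD_{\rm int}\beta^{(i)}$ proportional to a standard basis vector and $\bW=\bP_{\pi}\bC_{\rm an}$.

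\emph{Step 3 (minimal supports yield the ancestral class and the graph).} When $\bW=\bP_{\pi}\bC_{\rm an}$, the pattern $[\bC_{\rm an}]_{i,j}=0$ for $j\notin\anplus(i)$ is closed under inversion (ancestry is transitive), so $[\bW^{-1}]_{\ell,k}\neq 0$ only if $k\in\anplus(\ell)$; consequently each transformed contrast $\bW^{-\top}T_i$ has support inside $\anplus(i)$ (after the permutation), which is admissible, and the union of the $n$ supports realizes precisely the transitive closure $\mcG_{\rm tc}$ — drawing the edge $j\to i$ whenever $j\neq i$ lies in the support of the $i$-th recovered contrast then returns $\hat\mcG$ isomorphic to $\mcG_{\rm tc}$. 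Conversely, if a feasible solution gave a strictly sparser contrast or a spurious ordering, then some coordinate of $\bW^{-\top}T_i$ — equivalently, of a scalar mixture of $T_i$ with another $T_j$, $j\in\pa(i)$ — would vanish identically in $z$; differentiating that identity in $z_j$ and dividing by its derivative in $z_i$ produces exactly the ratio in \eqref{eq:ratio-UMN}, whose extra term $c\cdot\log(p_j/q_j)$ is the footprint of mixing $T_i$ with a rational multiple of $T_j$ (the rationality of $c$ reflecting that the mixing coefficient in any spurious feasible solution solves a linear system with $\{0,1\}$ entries read off $\bD_{\rm int}$). Definition~\ref{def:intervention-regularity} excludes all such cancellations, which pins $\bW^{-1}$ down to a permutation times a nonzero-diagonal matrix that is triangular with respect to a valid causal order of $\mcG$ — that is, \eqref{eq:mixing_cons_ancestors}.

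\emph{Main obstacle.} The crux is Steps 2--3. Unlike the single-node setting, the learner does not know how many or which nodes are intervened, so the aggregation weights are unknowns entangled with the encoder, and multi-node interventions destroy the sparse-mechanism-change structure that single-node analyses lean on. The essential work is to show that the joint search over $(\bH,\{\beta^{(i)}\})$ has the ancestral solution as its essentially unique minimizer — equivalently, that no spurious linear combination of latent coordinates produces a score contrast that is sparser or more DAG-consistent than the truth — and this is exactly where the extra mixing term in Definition~\ref{def:intervention-regularity}, absent from the single-node condition \eqref{eq:ratio-SN}, is indispensable.
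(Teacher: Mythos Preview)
Your plan tracks the paper's approach closely: the score-difference decomposition $\Delta\bss_Z^m=\bLambda\cdot[\bD_{\rm int}]_{:,m}$ (your $\bT(z)\,\bD_{\rm int}$), aggregation of the multi-node environments via Assumption~\ref{assumption:full-rank-environments} to synthesize single-node-equivalent contrasts, and the appeal to Definition~\ref{def:intervention-regularity} to exclude spurious solutions. Steps~1 and~2 are correct and match the paper's Stage~1 and the setup of Stages~2--3.

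The gap is in Step~3's converse. First, you conflate two distinct sources of ambiguity: the encoder mismatch $\bW^{-\top}$ acts on the \emph{output coordinates} of a contrast, while the aggregation error ($\bD_{\rm int}\beta^{(i)}\neq\be_i$) mixes different $T_j$'s on the \emph{input side}; a coordinate of $\bW^{-\top}T_i$ is a scalar combination of the entries of $T_i$, not a mixture of $T_i$ with another $T_j$. The identity you would actually need to contradict is that some linear functional of $\sum_j c_j T_j$ vanishes identically, which entangles an encoder row with the aggregation vector and does not reduce to~\eqref{eq:ratio-UMN} by a single differentiate-and-divide. Second, ``minimal support subject to a common causal ordering'' is not the criterion the paper's algorithm uses, and you do not prove it has a unique minimizer. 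The paper instead runs a \emph{sequential} search: at step $t$ it looks for an integer $\bw$ in the bounded lattice $\mcW$ for which the projection of $\im(\Delta\bS_X\cdot\bw)$ onto $\nullspace(\{\bH^*_i:i\in[t-1]\})$ is exactly one-dimensional. The correctness proof (Lemma~\ref{lemma:causal-order} and Theorem~\ref{theorem:ancestors}) is an induction that, at each step, takes $i$ to be the \emph{youngest} node in the support of $\bc=\bD\bw$ and shows that dimension one forces $\bc$ to be supported on a single node with no remaining descendants --- the obstruction being precisely the ratio $[\bLambda\bc]_j/[\bLambda\bc]_i=(\bc_j\bLambda_{j,j}+\bc_i\bLambda_{j,i})/(\bc_i\bLambda_{i,i})$, i.e.\ the expression~\eqref{eq:ratio-UMN-lambda} with $c=\bc_j/\bc_i\in\mathbb{Q}$. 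Your sketch does not supply this inductive structure, and without it the uniqueness claim is unproved: there is no a~priori reason a non-ancestral $\bW$ paired with a non-basis aggregation cannot manufacture equally sparse contrasts unless the two unknowns are controlled jointly, one row at a time, as the paper does.
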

Identifiability up to ancestors has recently shown to be possible using SN soft interventions on general latent models~\cite{varici2024score}. Theorem~\ref{theorem:main-soft} establishes the same identifiability guarantees without the restrictive assumption of SN interventions. Furthermore, Theorem~\ref{theorem:main-soft} is significantly different from existing results for UMN soft interventions. Specifically, the study in~\cite{jin2023learning} focuses on linear non-Gaussian latent models and requires $|\pa(i)|+1$ distinct mechanisms for each node $i$. In contrast, Theorem~\ref{theorem:main-soft} does not make parametric assumptions on latent variables and works with sufficiently diverse interventions described by Assumption~\ref{assumption:full-rank-environments} instead of requiring multiple interventional mechanisms for the same node. 

\section{UMN interventional CRL algorithm}\label{sec:algorithm}

In this section, we design the {\bf U}nknown {\bf M}ulti-node {\bf I}nterventional (UMNI)-CRL algorithm that achieves identifiability guarantees presented in Section~\ref{sec:identifiability}.  This algorithm falls in the category of score-based frameworks for CRL \cite{varici2023score,varici2024score} and incorporates novel components to this framework that facilitate UMN interventions with provable guarantees. Our score-based approach uses the structural properties of score functions and their variations across different interventional environments to find reliable estimates for the true encoder~$\bGpinv$. The critical step involved is a process that can aggregate the score differences under the available interventional environments, which have entirely \emph{unknown} intervention targets, and reconstruct the score differences for any desired hypothetical set of intervention targets. In particular, we establish that such desired score differences can be computed by aggregating the score differences available under the given UMN interventions. The proposed UMNI-CRL algorithm consists of four stages for implementing CRL. The properties of these stages also serve as the steps of constructive proof for identifiability results. We present the key algorithmic stages and their properties in the remainder of this section and defer their proofs to Appendix~\ref{appendix:proofs}.

\begin{algorithm}
    \caption{ {\bf U}nknown {\bf M}ulti-node {\bf I}nterventional (UMNI)-CRL}
    \label{alg:ident}
    \begin{algorithmic}[1]
    \State \textbf{Input:} Samples of $X$ from environment $\mcE^0$ and interventional environments $\{\mcE^m : m \in [M]\}$
    \State \textbf{Stage~1:} Choose basis score differences and construct $\Delta \bS_X$ using~\eqref{eq:def-Sx}
    \Statex \hrulefill
    \State \textbf{Stage~2: Identifiability up to a causal order}
    \For{$t \in (1,\dots,n)$}
        \For{$\bw \in \mcW$} \Comment{$\mcW$ is specified in~\eqref{eq:def-W-set}}
            \State $\mcV \gets \proj_{\nullspace(\{\bH^*_i \, : \, i \, \in \, [t-1] \})} \im(\Delta \bS_X \cdot \bw)$
            \If{$\dim(\mcV)=1$}
                \State pick $\bv \in \im(\Delta \bS_X \cdot \bw) \setminus \sspan(\{\bH^*_i : i \in [t-1]\})$
                \State $\bH^*_t \gets \bv / \| \bv \|_2$ \; and \; $[\bW]_{:,t} \gets \bw$
                \State{\textbf{break}}
            \EndIf
        \EndFor
    \EndFor
    \State Stage~2 outputs: $\bH^*$ and $\bW$
    \Statex \hrulefill
    \State \textbf{Stage~3: Identifiability up ancestors}
    \State Initialize $\hat \mcG$ with empty graph over nodes $[n]$
    \For{$t \in (n-1,\dots,1)$}
        \For{$j \in (t+1,\dots,n)$}
            \If{$j \in \hat\ch(t)$}
                \State{\textbf{continue}}
            \EndIf
            \State \texttt{if\_parent} $\gets$ True
            \State $\mcM_{t,j} \gets [j-1] \setminus \{\hat \ch(t) \cup \{t\}\}$
            \For{$(\alpha,\beta) \in \{-(\|\bW_{:,j}\|_1,\dots,\|\bW_{:,j}\|_1)\} \times [\|\bW_{:,t}\|_1]$}
                \State $\bw^* \gets \alpha [\bW]_{:,t} + \beta [\bW]_{:,j}$
                \State $\mcV \gets \proj_{\nullspace(\{\bH^*_i \;: \; i \; \in \; \mcM_{t,j}\})} \im(\Delta \bS_X \cdot \bw^*)$
                \If{$\dim(\mcV)=1$}
                    \State pick $\bv \in \im(\Delta \bS_X \cdot \bw^*) \setminus \sspan(\{\bH^*_i \;: \; i \; \in \; \mcM_{t,j}\})$
                    \State $\bH^*_j \gets \bv / \| \bv \|_2$ \; and \; $[\bW]_{:,j} \gets \bw^*$
                    \State Set \texttt{if\_parent} $\gets$ False and {\textbf{break}}
                \EndIf
            \EndFor
            \If{\texttt{if\_parent} is True}
                \State Add $t \to j$ and $t \to u$ to $\hat \mcG$ for all $u\in \hat\de(j)$ \Comment{edges to identified descendants}
            \EndIf
        \EndFor
    \EndFor
    \State Stage~3 outputs: $\hat \mcG, \bH^*$ and $\bW$
    \Statex \hrulefill
    \If{the interventions are hard}
        \State \textbf{Stage~4: Unmixing for hard interventions}
        \For{$t\in(2,\dots,n)$} \Comment{refine rows of $\bH^*$ sequentially}
            \State $\hat Z \gets \hat Z(X;\bH^*)$
            \State $\bu^{\rm obs} \gets - \Cov(\hat Z_t, \hat Z_{\hat \an(t)}) \cdot [\Cov(\hat Z_{\hat \an(t)})]^{-1} $
            \For{$m \in \{i : \bW_{i,t}\neq 0\}$} \Comment{searching for a suitable environment}
                \State $\hat Z^m \gets \hat Z^m(X;\bH^*)$
                \State $\bu^m \gets - \Cov(\hat Z^m_t, \hat Z^m_{\hat \an(t)}) \cdot [\Cov(\hat Z^m_{\hat \an(t)})]^{-1}   $
                \If{$(\bu^m \cdot \hat Z^m_{\hat \an(t)} + \hat Z^m_t) \ci \hat Z^m_{\hat \an(t)}$ and $\bu^m \neq \bu^{\rm obs}$}
                        \State $\bH^*_t \gets \bH^*_t + \bu^m \cdot \bH^*_{\hat \an(t)}$ \Comment{removing the effect of the ancestors on $Z_t$}
                        \State \textbf{break}
                \EndIf
            \EndFor
        \EndFor
        \State $\hat Z \gets \hat Z(X;\bH^*)$ \Comment{use recovered $Z$ in obs. env. for graph recovery}
        \For{$t\in(1,\dots,n)$}
            \For{$j \in \hat\ch(t)$}
                \If{$\hat Z_t \ci \hat Z_j \mid \{\hat Z_i : i \in \hat \pa(j) \setminus \{t\}\}$}
                    \State Remove $t \to j$ from $\hat \mcG$ \Comment{removing the edges from the nonparent ancestors}
                \EndIf
            \EndFor
        \EndFor
    \EndIf
    \Statex \hrulefill
    \State \textbf{Return} $\hat \mcG$ and $\hat Z$
    \end{algorithmic}
\end{algorithm}%
\noindent\textbf{Stage~1: Basis score differences.} In the first stage, we compute score differences for each interventional environment and construct the \emph{basis score difference} functions that are linearly independent. The purpose of these functions is to subsequently use them and reconstruct the score differences under any arbitrary hypothetical interventional environment. To this end, we use the following relationship between score functions of $X$ and $Z$.
\begin{lemma}[{\cite[Corollary 2]{varici2024score}}]\label{lemma:score-transform}
    Latent and observational score functions are related via $\bss_X(x) = [\bGpinv]^{\top} \cdot \bss_Z(z)$, where $x = \bG \cdot z$.
\end{lemma}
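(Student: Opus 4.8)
The plan is to reduce the statement to a change-of-variables computation, the only subtlety being that $X$ is supported on the $n$-dimensional subspace $\mcX = \im(\bG) \subseteq \R^d$ rather than on all of $\R^d$. As a warm-up that fixes ideas, consider first the square case $d=n$: then $\bG$ is invertible, $\bGpinv = \bG^{-1}$, and the standard change-of-variables formula gives $p_X(x) = |\det \bG|^{-1}\, p_Z(\bG^{-1} x)$. Taking logarithms and then gradients, the constant $\log|\det \bG|$ drops out and the chain rule yields $\bss_X(x) = (\bG^{-1})^{\top}\bss_Z(\bG^{-1}x) = [\bGpinv]^{\top}\bss_Z(z)$, which is the claim.

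For the general case $d \geq n$: since $p_Z$ has full support on $\R^n$ and $z\mapsto \bG z$ is injective, $p_X$ is the pushforward of $p_Z$ along this embedding, i.e.\ a density on $\mcX$ with respect to the $n$-dimensional Hausdorff measure restricted to $\mcX$. The Jacobian factor of the linear embedding $z \mapsto \bG z$ onto its image is the \emph{constant} $\sqrt{\det(\bG^{\top}\bG)}$, so $p_X(\bG z) = [\det(\bG^{\top}\bG)]^{-1/2}\, p_Z(z)$ for all $z \in \R^n$, hence $\log p_X(\bG z) = \log p_Z(z) - \tfrac12 \log\det(\bG^{\top}\bG)$.

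Next I would differentiate both sides with respect to $z \in \R^n$. The right-hand side gives $\bss_Z(z)$, since the constant term vanishes. On the left, the chain rule gives $\nabla_z\big[\log p_X(\bG z)\big] = \bG^{\top}\,(\nabla\log p_X)(\bG z) = \bG^{\top}\bss_X(x)$. Here one must be careful about the meaning of $\nabla\log p_X$: because the support is the subspace $\mcX$, the score $\bss_X(x)$ is by definition the intrinsic gradient, an element of the tangent space $\colspace(\bG)$; equivalently one may extend $p_X$ smoothly off $\mcX$ and take the orthogonal projection of the ambient gradient onto $\colspace(\bG)$, and the chain-rule identity above is insensitive to this choice because $\bG^{\top}$ annihilates $\colspace(\bG)^{\perp}$. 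Writing $\bss_X(x) = \bG\ba$ for the unique $\ba \in \R^n$ determined by this tangency constraint, the identity $\bG^{\top}\bss_X(x) = \bss_Z(z)$ becomes $(\bG^{\top}\bG)\ba = \bss_Z(z)$, so $\ba = (\bG^{\top}\bG)^{-1}\bss_Z(z)$ and therefore $\bss_X(x) = \bG(\bG^{\top}\bG)^{-1}\bss_Z(z) = [\bGpinv]^{\top}\bss_Z(z)$, using $[\bGpinv]^{\top} = \big[(\bG^{\top}\bG)^{-1}\bG^{\top}\big]^{\top} = \bG(\bG^{\top}\bG)^{-1}$ (the inverse of the symmetric matrix $\bG^{\top}\bG$ being symmetric).

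The main obstacle is purely bookkeeping: making precise the sense in which $\log p_X$ is differentiated given that $p_X$ is supported on a measure-zero subset of $\R^d$, and verifying that the Jacobian factor relating $p_X$ and $p_Z$ is constant so that it disappears under $\nabla\log$. Both are routine once the Hausdorff-measure convention for densities on linear subspaces is adopted, and no other step is delicate; indeed the result is exactly Corollary~2 of \cite{crl-arxiv-2024}, so the proof here is included only for self-containedness.
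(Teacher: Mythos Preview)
Your argument is correct: the change-of-variables with constant Jacobian $\sqrt{\det(\bG^\top\bG)}$, followed by the chain rule and the observation that the intrinsic score lies in $\colspace(\bG)$, cleanly yields the identity. Note that the paper itself does not prove this lemma at all; it is imported verbatim as Corollary~2 of \cite{crl-arxiv-2024}, so your self-contained derivation is additional content rather than a reproduction of anything in the paper.
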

Using Lemma~\ref{lemma:score-transform} for the scores and score differences defined in~\eqref{eq:sz-decomposition} and \eqref{eq:score-diff-X-Z}, respectively, we have
\begin{equation}\label{eq:dsx-dsz}
    \Delta \bss_{X}^{m}(x) = [\bGpinv]^{\top} \cdot \Delta \bss_{Z}^{m}(z)
    \overset{\eqref{eq:sz-decomposition}}{=} [\bGpinv]^{\top} \cdot \sum_{i \in I^m} \nabla \log \frac{p_i(z_i \mid z_{\pa(i)})}{q_i(z_i \mid z_{\pa(i)})} \ .
\end{equation}
We compactly represent the summands in the right-hand side of \eqref{eq:dsx-dsz} by defining the matrix-valued function $\bLambda: \R^n \to \R^{n\times n}$ with the entries
\begin{equation}\label{eq:def-Lambda}
    [\bLambda(z)]_{i,j} \triangleq \frac{\partial}{\partial z_j} \log \frac{p_i(z_i \mid z_{\pa(i)})}{q_i(z_i \mid z_{\pa(i)})}   \ , \quad \forall i, j \in [n]  \ ,
\end{equation}
based on which \eqref{eq:dsx-dsz} can be restated as
\begin{equation}\label{eq:dsx-G-lambda-d}
    \Delta \bss_{X}^{m}(x) = [{\bGpinv}]^{\top} \cdot \bLambda(z) \cdot [\bD_{\rm int}]_{:,m}  \ .
\end{equation}
We note that $[\bLambda(z)]_{i,j}$ is constantly zero for $i\notin \{\pa(j) \cup \{j\}\}$, and $j$-th column of $\bLambda$ is a function of the variables in $\{z_k : k \in \pa(j) \cup \{j\}\}$ which implies that the columns of $\bLambda$ are linearly independent.
Throughout the rest of the paper, we omit the arguments of the functions $\Delta \bss_X^m$ and $\bLambda$ when the dependence is clear from the context. Note that $\bD_{\rm int}$ has $n$ linearly independent columns (Assumption~\ref{assumption:full-rank-environments}). Denote the indices of the independent columns by $\{b_1,\dots,b_n\}$ and define the
\emph{basis intervention matrix} $\bD \in \R^{n \times n}$ using these columns as
\begin{equation}\label{eq:def-D-basis}
    [\bD]_{i,m} \triangleq [\bD_{\rm int}]_{i,b_m} = \mathds{1}\{i \in I^{b_m}\} \ , \quad \forall i, m \in [n] \ .
\end{equation}
Subsequently, it can be readily verified that the score difference functions $\{\Delta \bss_X^m : m \in \{b_1,\dots,b_n\}\}$ are also linearly independent by leveraging \eqref{eq:dsx-G-lambda-d} and linearly independent columns of $\bLambda$. Hence, these score difference functions are sufficient to reconstruct the remaining unavailable score difference functions. As such, $\{\Delta \bss_X^m : m \in \{b_1,\dots,b_n\}\}$ serve as \emph{basis score difference} functions. We stack these basis score differences, where each is a $d$-dimensional vector, to construct the matrix-valued function $\Delta \bS_X : \mcX \to \R^{d \times n}$, which is used as the basis score difference matrix in the subsequent stages. 
\begin{equation}\label{eq:def-Sx}
    \Delta \bS_X \triangleq \big[\Delta \bss_X^{b_1}, \dots, \Delta \bss_X^{b_n}\big]  = [\bGpinv]^{\top} \cdot \bLambda \cdot \bD \ .
\end{equation}
Finally, we note that $\Delta \bS_X$ is directly estimated from samples of $X$ via learning $\{\Delta \bss_X^{b_m} : m \in [n]\}$. Since $\Lambda$ encodes the score differences in latent space, it cannot be estimated directly. Furthermore, $\bD$ is unknown, and \eqref{eq:def-Sx} is given to emphasize the relationship between observed and latent score differences.

\noindent\textbf{Stage~2: Identifiability up to an unknown causal order.}
We design a process that aggregates score differences of the UMN interventions and obtains a \emph{partial} identifiability guarantee (identifiability up to an unknown causal order) as an intermediate step toward more accurate identifiability. Specifically, we linearly aggregate the columns of $\Delta \bS_X$ such that those aggregate scores facilitate identifiability up to an unknown causal order. Such mixing of the columns is facilitated by computing $\Delta \bS_X\cdot \bW$, where the mixing matrix $\bW\in\R^{n \times n}$ should be learned. Given the decomposition of $\Delta \bS_X$ in \eqref{eq:def-Sx}, if we learn $\bW$ such that $\bD \cdot \bW$ is upper triangular up to a row permutation, then we can subsequently learn an intermediate estimate $\bH^*$ using the image of $\Delta \bS_X$. This ensures identifiability up to an unknown causal order since rows of $\bH^*$ will be equal to combinations of rows of $\bGpinv$ up to a causal order. \looseness=-1

We design an iterative process to sequentially learn the columns of $\bW$. Specifically, at each iteration of Stage~2, we learn an integer-valued vector $\bw$ such that the projection of $\im(\Delta \bS_X \cdot \bw)$ onto the nullspace of the partially recovered encoder estimate becomes a one-dimensional subspace. To see why this procedure works, note that the function $\Delta \bS_X \cdot \bw$ is essentially a combination of SN latent score differences via \eqref{eq:def-Sx}, and the SN score difference $\nabla \log p_i(z_i \mid z_{\pa(i)}) - \nabla \log q_i(z_i \mid z_{\pa(i)})$ is a one-dimensional subspace if and only if the intervened node $i$ has no parents. By taking the projection of the $\im(\Delta \bS_X \cdot \bw)$ onto the nullspace of the partially learned encoder while searching for a desired $\bw$, we ensure that the final encoder estimate $\bH^*$ of this stage will be full-rank. Finally, we use $\kappa$ to denote maximum determinant of a matrix in $\{0,1\}^{(n-1)\times(n-1)}$, and show that the set
\begin{equation}\label{eq:def-W-set}
    \mcW \triangleq \{-\kappa,\dots,+\kappa\}^n
\end{equation}
is guaranteed to contain such $\bw$ vectors. The following result summarizes the guarantees of this procedure.
\begin{lemma}\label{lemma:causal-order}
    Under Assumption~\ref{assumption:full-rank-environments} and intervention regularity, outputs of Stage~2 of Algorithm~\ref{alg:ident} satisfy:
    \begin{enumerate}[leftmargin=2em,topsep=0ex,itemsep=0ex]
        \item $ \bD \cdot \bW$ has nonzero diagonal entries and is upper triangular up to a row permutation.
        \item $[\bH^*]_t \in \sspan(\{[\bGpinv]_{\pi_j} : j \in [t]\})$ for all $t\in[n]$ and $\{[\bH^*]_t : t \in [n]\}$ are linearly independent.
    \end{enumerate}
\end{lemma}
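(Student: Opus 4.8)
The plan is to prove both claims together by strong induction on the Stage~2 iteration index $t\in[n]$, carrying the invariant that after iteration $t$ there are distinct indices $\pi_1,\dots,\pi_t$ such that (a)~$\sspan\{[\bH^*]_s : s\le t\}=\sspan\{[\bGpinv]_{\pi_s} : s\le t\}$ with the rows $\{[\bGpinv]_{\pi_s} : s\le t\}$ linearly independent, and (b)~for every $s\le t$, $[\bD\bW]_{i,s}=0$ whenever $i\notin\{\pi_1,\dots,\pi_s\}$ while $[\bD\bW]_{\pi_s,s}\neq 0$. Granting this for $t=1,\dots,n$, claim~2 of the lemma is exactly (a), and claim~1 follows from (b): permuting the rows of $\bD\bW$ by $\pi^{-1}$ renders it upper triangular with nonzero diagonal. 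So everything reduces to the inductive step, whose engine is the identity $\Delta\bS_X\cdot\bw=[\bGpinv]^{\top}\bLambda\,(\bD\bw)$ from~\eqref{eq:def-Sx}, together with the structure of $\bLambda$ noted after~\eqref{eq:def-Lambda}: $[\bLambda(z)]_{m,i}$ vanishes unless $m\in\pa(i)\cup\{i\}$, column $i$ of $\bLambda$ depends only on $\{z_k : k\in\pa(i)\cup\{i\}\}$, and the columns of $\bLambda$ are linearly independent as functions of $z$.

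The first step is to translate the test ``$\dim(\mcV)=1$'' into linear algebra over $\R^n$. Fixing a candidate $\bw$, set $S\triangleq\{i : (\bD\bw)_i\neq0\}$ and $L\triangleq\sspan\{\bLambda(z)(\bD\bw) : z\in\R^n\}\subseteq\R^n$, so that the span of $\im(\Delta\bS_X\cdot\bw)$ equals $[\bGpinv]^{\top}L$. Since $\bG$ is full rank, $[\bGpinv]^{\top}$ is injective and the rows of $\bGpinv$ are linearly independent; hence, using the inductive hypothesis~(a), the linear map $v\mapsto\proj_{\nullspace(\{[\bH^*]_s : s<t\})}[\bGpinv]^{\top}v$ has kernel exactly $E\triangleq\sspan\{e_{\pi_s} : s<t\}$ (with $e_m$ the $m$-th standard basis vector of $\R^n$). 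Therefore $\dim(\mcV)=\dim(L+E)-(t-1)$. Together with the inclusion $L\subseteq\sspan\{e_m : m\in S'\}$, where $S'\triangleq\bigcup_{i\in S}(\pa(i)\cup\{i\})$, this yields the crude bound $\dim(\mcV)\le|S'\setminus\{\pi_1,\dots,\pi_{t-1}\}|$; the real content is to understand when the value $1$ is attained and which one-dimensional subspace then survives.

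For existence (so that the inner loop over $\mcW$ always produces a hit), I would take $\pi$ to be the smallest label in $[n]\setminus\{\pi_1,\dots,\pi_{t-1}\}$, so that $\pa(\pi)\subseteq\{1,\dots,\pi-1\}\subseteq\{\pi_1,\dots,\pi_{t-1}\}$, and choose $\bw$ equal to $\det(\bD)$ times the $\pi$-th column of $\bD^{-1}$. Its entries are cofactors of the $\{0,1\}$ matrix $\bD$, i.e.\ $\pm$ determinants of $(n-1)\times(n-1)$ submatrices of $\bD$, hence lie in $\{-\kappa,\dots,\kappa\}$, so $\bw\in\mcW$. Then $\bD\bw=\det(\bD)\,e_{\pi}$, so $L=\sspan\{[\bLambda(z)]_{:,\pi} : z\in\R^n\}\subseteq\sspan\{e_m : m\in\pa(\pi)\cup\{\pi\}\}\subseteq E+\sspan\{e_{\pi}\}$, and after projection only the term $[\bLambda(z)]_{\pi,\pi}\cdot\proj_{\nullspace(\{[\bH^*]_s : s<t\})}[\bGpinv]_{\pi}$ survives; this spans a one-dimensional subspace because $[\bLambda]_{\pi,\pi}\not\equiv0$ for a genuine intervention on $\pi$ and $[\bGpinv]_{\pi}\notin\sspan\{[\bGpinv]_{\pi_s} : s<t\}$. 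Hence the loop returns some valid $\bw$ at iteration $t$.

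The crux, which I expect to be the main obstacle, is correctness: for \emph{any} accepted $\bw\in\mcW$, the condition $\dim(L+E)=t$ has to force $S\subseteq\{\pi_1,\dots,\pi_{t-1}\}\cup\{\pi_t\}$ for a single new index $\pi_t$ with $(\bD\bw)_{\pi_t}\neq0$, and has to force $\mcV=\sspan\{\proj_{\nullspace(\{[\bH^*]_s : s<t\})}[\bGpinv]_{\pi_t}\}$. Given this, $[\bH^*]_t$ is a unit vector in $[\bGpinv]^{\top}L\subseteq\sspan\{[\bGpinv]_{\pi_s} : s\le t\}$ that lies outside $\sspan\{[\bGpinv]_{\pi_s} : s<t\}$ (such a vector exists precisely because $\dim(\mcV)=1>0$), and $[\bD\bW]_{:,t}=\bD\bw$, so the invariant extends from $t-1$ to $t$. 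To establish the forcing statement I would induct on the number of ``new'' coordinates $|S'\setminus\{\pi_1,\dots,\pi_{t-1}\}|$: choosing $\pi_t$ to be the element of $S$ largest in the fixed causal order among those outside $\{\pi_1,\dots,\pi_{t-1}\}$, its coordinate in $\bLambda(z)(\bD\bw)$ equals $(\bD\bw)_{\pi_t}[\bLambda]_{\pi_t,\pi_t}(z)$ plus contributions only from children of $\pi_t$ that lie in $S$; peeling these off, a second surviving direction would force a ratio of the form $\bigl([\bLambda]_{j,i}(z)+c\,[\bLambda]_{j,j}(z)\bigr)\big/[\bLambda]_{i,i}(z)$, with $j\in\pa(i)$ and $c\in\mathbb{Q}$ (indeed $c$ a ratio of two entries of the integer vector $\bD\bw$), to be constant in $z$ --- exactly what the intervention-regularity condition of Definition~\ref{def:intervention-regularity} forbids. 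The two-node instance $S=\{i,j\}$ with $j\in\pa(i)$ is precisely this condition; the bookkeeping needed to reduce a general integer support to it --- tracking which cross-terms $[\bLambda]_{m,i}$ can contaminate which coordinate and ruling out accidental cancellations --- is where the work concentrates.
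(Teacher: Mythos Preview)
Your plan is essentially the paper's: induction on $t$, existence via the adjugate/cofactor identity $\bD\cdot[\operatorname{adj}(\bD)]^{\top}=\det(\bD)\,\bI$, and correctness via a ratio that intervention regularity forbids. Two places where the plan is looser than the actual proof, and which explain away the ``bookkeeping'' you anticipate:

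\textbf{Strengthen the invariant to include ancestral closure.} You should carry that $\{\pi_1,\dots,\pi_{t-1}\}$ is ancestrally closed (equivalently, forms the first $t-1$ elements of some valid causal order). Your stated invariants (a) and (b) do not imply this, yet your correctness sketch needs it: without it, the ``contributions from children of $\pi_t$ that lie in $S$'' could be real, since nothing prevents an already-chosen $\pi_s$ from being a child of an unchosen node. With ancestral closure, children of any node in $\mcM_t\triangleq[n]\setminus\{\pi_1,\dots,\pi_{t-1}\}$ remain in $\mcM_t$, and the $\pi_t$-row of $\bLambda\cdot\bc$ collapses to $\bc_{\pi_t}[\bLambda]_{\pi_t,\pi_t}$ cleanly. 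The paper maintains this explicitly and re-establishes it at each step: once the support is forced down to a single new node $\pi_t$, any parent $p\in\pa(\pi_t)\cap\mcM_t$ would make the ratio $[\bLambda]_{p,\pi_t}/[\bLambda]_{\pi_t,\pi_t}$ appear (regularity with $c=0$), hence $\dim(\mcV)\ge 2$, contradiction.

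\textbf{Correctness is direct; no inner induction is needed.} Work with $\mcA\triangleq S\cap\mcM_t$ rather than $S'$. Let $i$ be the youngest node in $\mcA$ and $j$ the second youngest (if it exists). By ancestral closure, the row-$i$ entry of $\bLambda_{\mcM_t}\cdot\bc$ is exactly $\bc_i[\bLambda]_{i,i}$, and the row-$j$ entry is $\bc_j[\bLambda]_{j,j}+\bc_i[\bLambda]_{j,i}$ if $j\in\pa(i)$, else $\bc_j[\bLambda]_{j,j}$ (the only possible child of $j$ inside $\mcA$ is $i$). The ratio of these two rows is precisely \eqref{eq:ratio-UMN} with $c=\bc_j/\bc_i\in\mathbb{Q}$, or the ratio in Lemma~\ref{lemma:ratio-rank-two} when $j\notin\pa(i)$; either way non-constant, so $\dim(\mcV)\ge 2$. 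Thus $\dim(\mcV)=1$ forces $|\mcA|=1$ in one shot --- no peeling, no tracking of cross-terms.
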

\begin{proof}
    See Appendix~\ref{appendix:proof-causal-order}.
\end{proof}

\noindent\textbf{Stage~3: Identifiability up to ancestors.}
Next, we refine the outcome of Stage~2 to ensure identifiability up to ancestors by updating the columns of $\bW$ such that the entries of $\bD \cdot \bW$ can be nonzero only for the coordinates that correspond to ancestor-descendant node pairs. For this purpose, we design Stage~3 of UMNI-CRL that iteratively updates the columns of $\bW$. The key idea is that the edges in the transitive closure graph $\mcG_{\rm tc}$ can be determined by investigating the subspaces' dimensions similarly to Stage~2.  Leveraging this property, for all node pairs that do not constitute an edge in $\mcG_{\rm tc}$, we aggregate the corresponding columns of $\bW$ such that the corresponding entry of $\bD \cdot \bW$ will be zero. In Theorem~\ref{theorem:ancestors}, we show that the outputs of this stage achieve identifiability up to ancestors, that is $\hat \mcG$ is isomorphic to $\mcG_{\rm tc}$ and $\hat Z_i$ is a linear function of $\{Z_{\pi_j} : j \in \an(i) \cup \{i\} \}$ for all $i\in[n]$. 

\begin{theorem}\label{theorem:ancestors}
    Under Assumption~\ref{assumption:full-rank-environments} and regular UMN soft interventions, outputs of Stage~3 of Algorithm~\ref{alg:ident} have the following properties.
    \begin{enumerate}[leftmargin=2em,topsep=0ex,itemsep=0ex]
       \item The estimate $\hat Z(X; \bH^*)$ satisfies identifiability up to ancestors.

        \item $\hat \mcG$ and $\mcG_{\rm tc}$ are related through a graph isomorphism.
    \end{enumerate}
\end{theorem}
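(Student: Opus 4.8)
The plan is to combine a structural description of the image of the aggregated score differences with an induction that tracks the iterates of Stage~3. For the setup, by Lemma~\ref{lemma:causal-order} I may relabel the latent coordinates by the permutation $\pi$ it produces, so that $(1,\dots,n)$ is a valid causal order of $\mcG$, $\bD\bW$ is upper triangular with nonzero diagonal, and $\bH^{*}=\bT\bGpinv$ for an invertible lower-triangular $\bT$; in particular $\sspan\{\bH^{*}_i:i\in[\ell]\}=\sspan\{[\bGpinv]_k:k\in[\ell]\}$ for every $\ell$. The engine of the argument is the identity
\[
   \im(\Delta\bS_X\cdot\bw)=\sspan\bigl\{([\bGpinv]_k)^{\top}\;:\;k\in\paplus(\mathrm{supp}(\bD\bw))\bigr\}\qquad\text{for every }\bw\in\mathbb{Z}^n ,
\]
which follows from the support structure of $\bLambda$ noted after \eqref{eq:def-Lambda} (column $j$ is supported on, and depends only on, coordinates $\{j\}\cup\pa(j)$) together with intervention regularity — the rational parameter $c$ in Definition~\ref{def:intervention-regularity} is precisely what rules out accidental cancellations among the nonzero rows of $\bLambda(z)\,\bD\bw$ once columns are linearly aggregated. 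Through this identity, each ``$\dim\mcV=1$'' test becomes the combinatorial statement that $\paplus(\mathrm{supp}(\bD\bw^{*}))\setminus\mcM_{t,j}$ is a singleton, once one also knows $\sspan\{\bH^{*}_i:i\in\mcM_{t,j}\}=\sspan\{[\bGpinv]_k:k\in\mcM_{t,j}\}$, which I would carry along as part of the induction.

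Proceeding over the outer loop ($t$ decreasing) and inner loop ($j$ increasing), I would maintain the invariants: (i) the edges of $\hat\mcG$ inserted so far equal $\{t'\!\to\!j':t'\in\an(j')\}$ among the pairs handled so far; (ii) each current $\bH^{*}_j$ lies in $\sspan\{[\bGpinv]_k:k\in\anplus(j)\cup C_j\}$ with nonzero $[\bGpinv]_j$-component, and $\mathrm{supp}(\bD\bW_{:,j})\subseteq\anplus(j)\cup C_j\cup\{j\}$, where $C_j$ is the set of not-yet-cleared non-ancestors of $j$; (iii) $\{\bH^{*}_i\}_{i\in[n]}$ stays linearly independent and the prefix-span property persists. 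The heart of the argument is the detection step for a pair $(t,j)$ that is not skipped: some admissible $(\alpha,\beta)$ produces $\dim\mcV=1$ if and only if $t\notin\an(j)$. When $t\notin\an(j)$, $\de(t)$ is disjoint from $\an(j)$, so by (ii) every index of $\de(t)\cap\{t+1,\dots,j-1\}$ has already been removed from $\mathrm{supp}(\bD\bW_{:,j})$, and $\mathrm{supp}(\bD\bW_{:,t})\subseteq[t]$ is disjoint from $\de(t)$; consequently, for $\bw^{*}=\alpha\bW_{:,t}+\beta\bW_{:,j}$ the sole requirement to force $\paplus(\mathrm{supp}(\bD\bw^{*}))\setminus\mcM_{t,j}=\{j\}$ is the single linear equation $(\bD\bw^{*})_t=0$, solved by $\beta=|(\bD\bW_{:,t})_t|\neq0$ and a suitable $\alpha$ with $|\alpha|=|(\bD\bW_{:,j})_t|$, both inside the enumerated ranges because $|(\bD\bW_{:,t})_t|\le\|\bW_{:,t}\|_1$ and $|(\bD\bW_{:,j})_t|\le\|\bW_{:,j}\|_1$. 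When instead $t\in\an(j)$, either $t\notin\pa(j)$, in which case (i) forces the pair to be skipped — the edge $t\to j$ was already inserted transitively together with $t\to c$ for some $c\in\ch(t)\cap\an(j)$ — or $t\in\pa(j)$, in which case $t\in\paplus(j)\subseteq\paplus(\mathrm{supp}(\bD\bw^{*}))$ for every admissible $\bw^{*}$ (since $\beta\neq0$ forces $j\in\mathrm{supp}(\bD\bw^{*})$), so the projection is at least two-dimensional, \texttt{if\_parent} stays \texttt{True}, and the edge $t\to j$, together with $t\to u$ for $u\in\hat\de(j)=\de(j)$, is correctly added.

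To assemble the conclusion, I would first note that once an index $t$ leaves $\paplus(\mathrm{supp}(\bD\bW_{:,j}))$ it never returns, because the only later updates to column $j$ use columns $\bW_{:,t'}$ with $t'<t$, whose support lies in $[t']\not\ni t$. Hence after the final outer iteration $\bH^{*}_j\in\sspan\{[\bGpinv]_k:k\in\anplus(j)\}$ with nonzero $[\bGpinv]_j$-component for all $j$, so $\hat Z(X;\bH^{*})=\bH^{*}\bG Z=\bP_{\pi}\bC_{\rm an}Z$ with $\bC_{\rm an}$ lower triangular in the causal order, nonzero diagonal, and vanishing outside ancestor coordinates — i.e.\ identifiability up to ancestors, which is Part~1; and by (i) the final edge set of $\hat\mcG$ is exactly $\{t\to j:t\in\an(j)\}$, so $\hat\mcG$ is graph-isomorphic to $\mcG_{\rm tc}$, which is Part~2. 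I expect the main obstacle to be the induction of the preceding paragraph: keeping the graph-edge, prefix-span, and support invariants simultaneously consistent through the interleaved updates of $\bH^{*}$ and $\bW$, in particular proving that the skip mechanism excises exactly the true-ancestor pairs and that a two-parameter aggregation always suffices to clear a non-ancestor without undoing earlier clearings.
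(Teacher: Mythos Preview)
Your inductive skeleton — outer loop on $t$ decreasing, inner loop on $j$ increasing, carrying the invariants on the support of $\bD\bW_{:,j}$, on $\hat\mcG$, and on the prefix spans of $\bH^*$ — is exactly the structure the paper uses, and your treatment of the skip mechanism and of the persistence of clearings is sound. The gap is your ``engine'' identity
\[
   \im(\Delta\bS_X\cdot\bw)=\sspan\bigl\{[\bGpinv]_k^{\top}\;:\;k\in\paplus(\mathrm{supp}(\bD\bw))\bigr\},
\]
which you invoke to turn ``$\dim\mcV=1$'' into a purely combinatorial statement. Intervention regularity does \emph{not} give you this equality. Definition~\ref{def:intervention-regularity} only forbids, for each $j\in\pa(i)$ and rational $c$, that the ratio $(\bLambda_{j,i}+c\,\bLambda_{j,j})/\bLambda_{i,i}$ be constant; it says nothing about linear dependencies among $\bLambda_{j_1,i}$ and $\bLambda_{j_2,i}$ for two distinct parents $j_1,j_2$ of $i$. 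For instance, with $\mcG:1\to3\leftarrow2$ and $\log(p_3/q_3)=h(z_1+z_2,z_3)$, one has $\bLambda_{1,3}\equiv\bLambda_{2,3}$, so for $\bc=\be_3$ the image $\im(\bLambda\bc)$ is at most two-dimensional while $|\paplus(3)|=3$; yet intervention regularity can still hold. Your reading of the rational parameter $c$ is off: it mixes column $i$ with column $j$ in the \emph{same} row $j$, not different rows of the same column.

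The paper sidesteps this entirely. After the skip analysis it shows that every row of $\bLambda\cdot\bc^*$ indexed by $[n]\setminus\mcM_{t,j}$ other than rows $t$ and $j$ is identically zero (this uses that $\mathrm{supp}(\bc^*)$ avoids $\de(t)$ and the later part of the causal order — essentially the support control you already have in invariant~(ii)). Then $\dim\mcV$ reduces to the dimension of the two-entry image
\[
\begin{bmatrix}\bc^*_t\,\bLambda_{t,t}+\bc^*_j\,\bLambda_{t,j}\\ \bc^*_j\,\bLambda_{j,j}\end{bmatrix},
\]
and the single non-constancy guaranteed by Definition~\ref{def:intervention-regularity} with $c=\bc^*_t/\bc^*_j\in\mathbb{Q}$ (and Lemma~\ref{lemma:ratio-rank-two} when $t\notin\pa(j)$) is exactly what forces this to have rank two whenever $t\in\pa(j)$ or $\bc^*_t\neq0$. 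So your argument can be repaired, but only by dropping the global engine and doing this two-row computation directly, which is precisely the paper's route.
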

\begin{proof}
    See Appendix~\ref{appendix:proof-ancestors}.
\end{proof}

\noindent\textbf{Stage~4: Perfect identifiability via hard interventions.}
In the case of hard interventions, we apply an unmixing procedure to further refine our estimates and achieve perfect identifiability. This stage consists of two steps. The first step relies on the property that the intervened node becomes independent of its non-descendants and updates rows of the encoder estimate sequentially. In the second step, we leverage the knowledge of ancestral relationships and use a small number of conditional independence tests to refine the graph estimate from transitive closure $\mcG_{\rm tc}$ to the true latent DAG $\mcG$. The following theorem summarizes the guarantees achieved by Algorithm~\ref{alg:ident}.

\begin{theorem}\label{theorem:unmixing}
    Under Assumption~\ref{assumption:full-rank-environments} and regular UMN hard interventions for an additive noise model, outputs of Stage~4 of Algorithm~\ref{alg:ident} have the following properties.
    \begin{enumerate}[leftmargin=2em,topsep=0ex,itemsep=0ex]
        \item Estimate $\hat Z(X; \bH^*)$ satisfies perfect latent variable recovery.

        \item If $p_Z$ is adjacency-faithful to $\mcG$, then $\hat \mcG$ and $\mcG$ are related through a graph isomorphism.
    \end{enumerate}
\end{theorem}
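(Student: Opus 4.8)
After relabelling the latent coordinates through the permutation $\pi$ produced by Stage~3, we may assume $\pi$ is the identity, so that $(1,\dots,n)$ is a valid causal order and, by Theorem~\ref{theorem:ancestors}, the incoming encoder satisfies $\hat Z(X;\bH^*)=\bC_{\rm an}\cdot Z$ with $[\bC_{\rm an}]_{i,i}\neq 0$, $[\bC_{\rm an}]_{i,j}=0$ for $j\notin\an(i)\cup\{i\}$, and $\hat\mcG=\mcG_{\rm tc}$. For each $t$ with $\an(t)\neq\varnothing$ the sub-block $(\bC_{\rm an})_{\an(t),\an(t)}$ is triangular with a nonzero diagonal, hence invertible; consequently $\sum_{j\in\an(t)}[\bC_{\rm an}]_{t,j}Z_j$ is an exact linear function of $\hat Z_{\hat\an(t)}$, and there is a unique row vector $\bu^{\star}_t$ -- depending only on $\bC_{\rm an}$, hence \emph{environment-independent} -- with $\hat Z_t+\bu^{\star}_t\cdot\hat Z_{\hat\an(t)}=[\bC_{\rm an}]_{t,t}\,Z_t$. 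The plan is to show that (i) Step~1 of Stage~4, run for $t=2,\dots,n$ in this order, replaces the $t$-th row of $\bH^*\bG$ by $[\bC_{\rm an}]_{t,t}$ times the $t$-th coordinate vector (undoing the relabelling, this is claim~1); and (ii) Step~2 then prunes $\mcG_{\rm tc}$ to $\mcG$ (claim~2, using adjacency-faithfulness).

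\emph{Step~1.} Since $\an(t)\subseteq[t-1]$, when $t$ is processed all rows in $\hat\an(t)$ have already been fixed, so $\hat Z_{\hat\an(t)}$ is a nonzero coordinate-wise rescaling of $Z_{\an(t)}$ (if $\an(t)=\varnothing$, the row already equals $[\bC_{\rm an}]_{t,t}Z_t$ and the loop does nothing). First, a qualifying environment is examined: Lemma~\ref{lemma:causal-order} (together with the Stage~3 updates, Theorem~\ref{theorem:ancestors}) gives $[\bD\cdot\bW]_{t,t}\neq 0$, and since $[\bD\cdot\bW]_{t,t}=\sum_{m:\,t\in I^{b_m}}[\bW]_{m,t}$, some basis environment $\mcE^{b_m}$ with $[\bW]_{m,t}\neq 0$ hard-intervenes on node $t$; there $Z^m_t$ is generated by the parent-free mechanism $q_t$, so $Z^m_t\ci Z^m_{\an(t)}$, whence $\hat Z^m_t+\bu^{\star}_t\hat Z^m_{\hat\an(t)}=[\bC_{\rm an}]_{t,t}Z^m_t\ci \hat Z^m_{\hat\an(t)}$, and the vector $\bu^m$ defined in Stage~4 equals exactly $\bu^{\star}_t$, so the independence test fires. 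Also $\bu^{\star}_t\neq\bu^{\rm obs}$, because in $\mcE^0$ the equality would force $Z_t$ uncorrelated with $Z_{\an(t)}$, excluded by the genericity of the latent model whenever $\an(t)\neq\varnothing$. Second, no wrong update can occur: if the test fires with $\bu^m\neq\bu^{\rm obs}$ for some examined $\mcE^{b_m}$ with $t\notin I^{b_m}$, then $Z^m_t=f_t(Z^m_{\pa(t)})+N_t$ is governed by the \emph{observational} mechanism; expressing $Z^m_{\pa(t)}$ through $\hat Z^m_{\hat\an(t)}$ and regressing out the linear part, the residual is $[\bC_{\rm an}]_{t,t}\bigl(\tilde f_t(\hat Z^m_{\hat\an(t)})-\text{(affine)}+N_t\bigr)$, and since $N_t$ is independent of $\hat Z^m_{\hat\an(t)}$ this residual is independent of $\hat Z^m_{\hat\an(t)}$ only if $\tilde f_t$ (equivalently $f_t$) coincides with an affine map on the support -- in which case the residual is $[\bC_{\rm an}]_{t,t}N_t$ and, the structural coefficients of $Z_t$ being the same as in $\mcE^0$, we get $\bu^m=\bu^{\rm obs}$, a contradiction. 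Hence any environment triggering the update hard-intervenes on $t$, giving $\bu^m=\bu^{\star}_t$; the update $[\bH^*]_t\leftarrow[\bH^*]_t+\bu^m[\bH^*]_{\hat\an(t)}$ therefore invokes the environment-independent identity above and turns $\hat Z_t$ into $[\bC_{\rm an}]_{t,t}Z_t$. Iterating over $t$ yields $\hat Z(X;\bH^*)=\bP_\pi\cdot\bC_{\rm s}\cdot Z$ for a diagonal $\bC_{\rm s}$ with nonzero entries -- claim~1.

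\emph{Step~2.} By claim~1, $\hat Z$ is the true $Z$ up to permutation and scaling, so its conditional independences are those read off $\mcG$, and $\hat\mcG=\mcG_{\rm tc}$ at the start. Fix $j$ and pass to the ancestral subgraph $\mcG[\an(j)\cup\{j\}]$ and its moral graph $\mathcal M_j$; in $\mathcal M_j$ a vertex $k$ is adjacent to $j$ iff $k\in\pa(j)$ (there is no edge $j\to k$ by acyclicity, and a common child of $j$ and $k$ would lie outside the ancestral set). When the loop tests a remaining edge $t\to j$ against the conditioning set $C=\{\hat Z_i:i\in\hat\pa(j)\setminus\{t\}\}$ -- coordinates of vertices of $\an(j)\cup\{j\}$ other than $t,j$ -- adjacency-faithfulness keeps the edge whenever $t\in\pa(j)$, while for $t\in\an(j)\setminus\pa(j)$ any $\mathcal M_j$-path from $t$ to $j$ avoiding $C$ would have to reach $j$ through a non-parent of $j$, which is impossible, so $t$ and $j$ are $d$-separated given $C$ and the Markov property gives $Z_t\ci Z_j\mid Z_C$, removing the edge. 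This dichotomy is insensitive to which transitive-closure-only edges were deleted earlier, so precisely $\mcG_{\rm tc}\setminus\mcG$ is removed and $\hat\mcG\cong\mcG$ -- claim~2.

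\emph{Main obstacle.} The crux is the soundness part of Step~1: pinning down, under the additive-noise model, exactly when the regression residual of a latent coordinate on its ancestors is \emph{independent} (not merely uncorrelated) of those ancestors, and verifying that the guard $\bu^m\neq\bu^{\rm obs}$ eliminates the sole remaining case -- a node with a linear structural function, in an environment that does not intervene on it. The existence of a qualifying environment (via $[\bD\cdot\bW]_{t,t}\neq 0$) and the Step~2 pruning (via moral-graph $d$-separation and adjacency-faithfulness) are comparatively routine.
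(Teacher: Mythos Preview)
Your proposal is correct and takes essentially the same approach as the paper: sequentially unmix each row by locating a hard-intervened environment via the zero-covariance/independence test (ruling out non-intervened environments by the additive-noise argument that independence of the regression residual forces $f_t$ to be affine, whence $\bu^m=\bu^{\rm obs}$ and the guard blocks), then prune $\mcG_{\rm tc}$ to $\mcG$ using conditional independence tests and adjacency-faithfulness. The only notable difference is that you explicitly invoke a genericity condition to ensure $\bu^\star_t\neq\bu^{\rm obs}$ in the qualifying environment, a point the paper's proof leaves implicit; your moral-graph phrasing of the pruning step is equivalent to the paper's direct $d$-separation argument.
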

\begin{proof}
    See Appendix~\ref{appendix:proof-unmixing}.
\end{proof}
Finally, we note that the computational cost of UMNI-CRL is dominated by the cardinality of the search space for aggregating the score differences, e.g., in the worst-case, Stage~2 has $\mcO((2\kappa)^n)$ complexity. Therefore, the structure of the UMN interventions determines the complexity via its determinant. We elaborate on the computational complexity of the algorithm and the range of $\kappa$ in Appendix~\ref{appendix:comp-complexity}.

\section{Simulations}\label{sec:simulations}

We empirically assess the performance of the UMNI-CRL algorithm for recovering the latent DAG $\mcG$ and latent variables $Z$. Implementation details and additional results are provided in Appendix~\ref{appendix:simulations}\footnote{The codebase for the experiments can be found at \url{https://github.com/acarturk-e/umni-crl}.}.

\noindent\textbf{Data generation.}
To generate $\mcG$, we use Erd{\H o}s-R{\' e}nyi model with density $0.5$ and $n\in\{4,5,6,7,8\}$ nodes. For the causal models, we adopt linear structural equation models (SEMs) with Gaussian noise. The nonzero edge weights of the linear SEMs are sampled from ${\rm Unif}(\pm[0.5, 1.5])$, and the noise terms are zero-mean Gaussian variables with variances $\sigma_{i}^2$ sampled from ${\rm Unif}([0.5, 1.5])$. For a soft intervention on node $i$, the edge weight vector of node $i$ is reduced by a factor of $1/2$, and for a hard intervention, the edge weights are set to zero. The variance of the noise term is reduced to $\sfrac{\sigma_{i}^2}{4}$ in both intervention types. We consider target dimensions $d\in\{10,50\}$, generate 100 latent graphs for each $(n,d)$ pair, and generate $n_{\rm s}=10^5$ samples of $Z$ from each environment. Transformation $\bG \in \R^{d \times n}$ is randomly sampled under full-rank constraint, and observed variables are generated as $X = \bG \cdot Z$. Finally, for each graph realization, intervention matrix $\bD$ is chosen randomly among column permutations of full-rank $\{0,1\}^{n \times n}$ matrices, which satisfies Assumption~\ref{assumption:full-rank-environments}.

\noindent\textbf{Score functions.}
The algorithm uses score differences between environment pairs. Since we use a linear Gaussian model, $X$ is also multivariate Gaussian, and its score function can be estimated by $s_X(x) = -\hat \Theta \cdot x$ in which $\hat \Theta$ is the sample estimate of the precision matrix of $X$. We note that the design of UMNI-CRL is agnostic to the choice of the estimator and can adopt any reliable score estimator for nonparametric distributions~\cite{song2020sliced,zhou-2020-nonparam-score-est}.

\noindent\textbf{Graph recovery.} To assess the graph recovery, we report the structural Hamming distance (SHD) between the true and estimated DAGs. Recall that UMN hard and soft interventions ensure different levels of identifiability guarantees. Hence, we report the SHD between (i) transitive closure $\mcG_{\rm tc}$ and $\hat \mcG$ for soft interventions, and (ii) true DAG $\mcG$ and $\hat \mcG$ for hard interventions. Table~\ref{tab:simulations} shows that latent graph recovery performance remains consistent for both soft and hard interventions when observed variables dimension $d$ increases from $10$ to $50$, which conforms to our expectations due to theoretical results. Note that the expected number of edges is $n(n-1)/4$ since we set the density of random graphs to $0.5$. Hence, the increasing SHD is also unsurprising when the latent dimension $n$ increases from $4$ to $8$, and the performance remains reasonable at $n=8$. Finally, we note that $n=8$ is the largest latent graph size considered among the closely related SN intervention studies~\citep{squires2023linear,buchholz2023learning,varici2024score,vonkugelgen2023twohard}.

\noindent\textbf{Latent variable recovery.} The estimates are given by $\hat Z(X; \bH^*) = (\bH^* \cdot \bG) \cdot Z$. Hence, we scrutinize the effective mixing matrix $(\bH^* \cdot \bG)$ and report the ratio of its \emph{incorrect mixing entries} to the number of zeros in constant matrices $\bC_{\rm s}$ and $\bC_{\rm an}$ according to Definition~\ref{def:identifiability}, denoted by
\begin{equation}\label{eq:incorrect-mixing}
    \ell_{\sf hard} \triangleq \frac{ \sum_{j \neq i} \mathds{1}([\bH^* \cdot \bG]_{i,j} \neq 0)}{n^2- n } \ , \qquad \mbox{and}  \qquad \ell_{\sf soft} \triangleq \frac{ \sum_{j \notin \anplus(i)} \mathds{1}([\bH^* \cdot \bG]_{i,j} \neq 0)}{n^2 - \sum_{i} |\anplus(i)|} \ .
\end{equation}
We also report the mean correlation coefficient (MCC)~\citep{khemakhem2020ice}, which measures linear correlations between the estimated and ground truth latent variables and is commonly used in related work. Table~\ref{tab:simulations} shows that the UMNI-CRL algorithm achieves strong MCC performance (over $0.90$) in all cases. Furthermore, the ratio of incorrect mixing entries remains less than $0.20$ for both soft and hard interventions This demonstrates a strong performance of the UMNI-CRL algorithm at recovering latent variables for as many as $n=8$ latent variables even for observed variables dimension $d=50$. 

\begin{table}[t]
    \centering
    \caption{UMNI-CRL for a linear causal model with UMN interventions (mean $\pm$ standard error)}
    \small
    \label{tab:simulations}
    \begin{tabular}{cc|ccc|cccc}
        \toprule
              & & \multicolumn{3}{c}{\bf UMN Soft} & \multicolumn{3}{c}{\bf UMN Hard} \\
             $n$ & $d$ & ${\rm SHD}(\mcG_{\rm tc},\hat \mcG) $ & MCC & $\ell_{\sf soft}$ &  ${\rm SHD}(\mcG,\hat \mcG) $ & MCC & $\ell_{\sf hard}$ \\
        \midrule
        4 & 10 & $0.91 \pm 0.12$ & $0.95 \pm 0.01$ & $0.08 \pm 0.01$ & $0.75 \pm 0.11$ & $0.98 \pm 0.02$ & $0.13 \pm 0.02$ \\         
        5 & 10 & $1.67 \pm 0.20$ & $0.93 \pm 0.01$ & $0.09 \pm 0.01$ & $1.65 \pm 0.11$ & $0.97 \pm 0.02$ & $0.13 \pm 0.02$ \\
        6 & 10 & $3.19 \pm 0.26$ & $0.92 \pm 0.01$ & $0.12 \pm 0.01$ & $3.12 \pm 0.25$ & $0.96 \pm 0.02$ & $0.12 \pm 0.02$ \\
        7 & 10 & $5.44 \pm 0.34$ & $0.90 \pm 0.01$ & $0.15 \pm 0.01$ & $5.36 \pm 0.35$ & $0.93 \pm 0.03$ & $0.15 \pm 0.03$ \\
        8 & 10 & $7.63 \pm 0.41$ & $0.89 \pm 0.01$ & $0.16 \pm 0.01$ & $9.70 \pm 0.52$ & $0.87 \pm 0.03$ & $0.20 \pm 0.03$ \\
        \midrule
        4 & 50 & $0.77 \pm 0.12$ & $0.96 \pm 0.01$ & $0.06 \pm 0.01$ & $0.66 \pm 0.10$ & $0.98 \pm 0.01$ & $0.13 \pm 0.02$ \\         
        5 & 50 & $1.93 \pm 0.20$ & $0.93 \pm 0.01$ & $0.10 \pm 0.01$ & $1.80 \pm 0.19$ & $0.98 \pm 0.02$ & $0.13 \pm 0.01$ \\
        6 & 50 & $3.39 \pm 0.27$ & $0.92 \pm 0.01$ & $0.13 \pm 0.01$ & $3.05 \pm 0.25$ & $0.95 \pm 0.03$ & $0.13 \pm 0.01$ \\
        7 & 50 & $4.62 \pm 0.30$ & $0.91 \pm 0.01$ & $0.13 \pm 0.01$ & $6.12 \pm 0.34$ & $0.91 \pm 0.02$ & $0.16 \pm 0.01$ \\
        8 & 50 & $8.26 \pm 0.49$ & $0.90 \pm 0.01$ & $0.14 \pm 0.01$ & $9.01 \pm 0.53$ & $0.88 \pm 0.03$ & $0.28 \pm 0.02$ \\
        \bottomrule
    \end{tabular}
\end{table}

\vspace{-0.05in}
\section{Discussion}\label{sec:conclusion}
\vspace{-0.05in}

In this paper, we established novel identifiability results using unknown multi-node (UMN) interventions for CRL under linear transformations. Specifically, we designed the provably correct UMNI-CRL algorithm, leveraging the structural properties of score functions across different environments. To facilitate identifiability, we introduced a sufficient condition for the set of UMN interventions, abstracted as having $n$ sufficiently diverse interventional environments. Investigating the necessary conditions for UMN interventions to enable identifiability remains an open problem. The main limitation is the assumption of linear transformations. Given existing results for general transformations using two SN interventions per node~\cite{varici2024general,vonkugelgen2023twohard}, a promising direction for future work is extending our results to general transformations using UMN interventions with multiple interventional mechanisms per node. 

\subsection*{Acknowledgements and disclosure of funding}
This work was supported by IBM through the IBM-Rensselaer Future of Computing Research Collaboration.


\bibliography{references}
\bibliographystyle{unsrtnat}

\appendix

\newpage

\hsize\textwidth
 \linewidth\hsize
\hrule height4pt
\vskip .25in
{\centering
  {\Large\bfseries Linear Causal Representation Learning from Unknown Multi-node Interventions \\ Appendices \par}}
\vskip .25in
\hrule height1pt
\vskip .25in

\doparttoc 
\faketableofcontents 

\part{} 
\parttoc 

\section{Proofs}\label{appendix:proofs}

We start this section by introducing the additional notations used in the subsequent proofs.

\paragraph{Additional notations.} We use $\{\be_i: i \in [n]\}$ to denote $n$-dimensional standard unit basis vectors. For a set $\mcS \in [n]$, we use $\bA_{\mcS}$ to denote the submatrix of $\bA$ constructed by the rows $\{\bA_i : i \in \mcS\}$. For a valid encoder $\bH \in \mcH$, we denote the score functions associated with the pdfs of $\hat Z(X;\bH)$ under environments $\mcE^{0}$ and $\mcE^{m}$ by $\bss_{\hat Z}(\cdot ;\bH)$ and $\bss_{\hat Z}^{m}(\cdot ;\bH)$ for all $m\in[M]$. In addition to the graph notations introduced in Section~\ref{sec:latent-causal-model}, we define
\begin{equation}\label{eq:graph-notations}
    \paplus(i) \triangleq \pa(i) \cup \{i\} , \quad \chplus(i)\triangleq \ch(i) \cup \{i\} , \quad \mbox{and} \quad \anplus(i)\triangleq \an(i) \cup \{i\}\ .
\end{equation}

Next, we provide an auxiliary result that will be used throughout the proofs of the main results.
\begin{lemma}\label{lemma:ratio-rank-two}
    For every pair $(i,j)$ where $i\in[n]$ and $j\in\pa(i)$, the following ratio function cannot be a constant in $z$
    \begin{align}
        \dfrac{\big[\bLambda(z)]_{j,j}}{\big[\bLambda(z)\big]_{i,i}} = \left[\frac{\partial}{\partial z_j} \log\frac{p_j(z_j \mid z_{\pa(j)})}{q_j(z_j \mid z_{\pa(j)})}\right] \cdot \left[\frac{\partial}{\partial z_i} \log\frac{p_i(z_i \mid z_{\pa(i)})}{q_i(z_i \mid z_{\pa(i)})}\right]^{-1} \ .
    \end{align}
    Furthermore, the columns of $\bLambda$ are linearly independent vector-valued functions.
\end{lemma}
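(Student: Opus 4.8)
The plan is to derive the first claim directly from the intervention regularity condition in Definition~\ref{def:intervention-regularity} by specializing it appropriately, and then to derive the linear independence of the columns of $\bLambda$ from the structural zero pattern already recorded after \eqref{eq:def-Lambda}. For the ratio claim, fix a pair $(i,j)$ with $j \in \pa(i)$. I would look at the regularity ratio \eqref{eq:ratio-UMN} and observe that, in the notation of $\bLambda$, its numerator is $\tfrac{\partial}{\partial z_j}\bigl(\log\tfrac{p_i}{q_i}\bigr) + c\,\tfrac{\partial}{\partial z_j}\bigl(\log\tfrac{p_j}{q_j}\bigr) = [\bLambda]_{i,j} + c\,[\bLambda]_{j,j}$ and its denominator is $\tfrac{\partial}{\partial z_i}\log\tfrac{p_i}{q_i} = [\bLambda]_{i,i}$. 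Taking the specific choice $c=0$ tells us that $[\bLambda]_{i,j}/[\bLambda]_{i,i}$ is nonconstant; but I actually want the cleaner statement about $[\bLambda]_{j,j}/[\bLambda]_{i,i}$. The key observation is that $\log\tfrac{p_j(z_j\mid z_{\pa(j)})}{q_j(z_j\mid z_{\pa(j)})}$ does not depend on $z_i$ (since $i\notin\pa(j)\cup\{j\}$, as the graph is acyclic and $j\in\pa(i)$), so $[\bLambda]_{j,j}$ is a function only of $\{z_k : k\in\pa(j)\cup\{j\}\}$, a variable set not containing $z_i$; meanwhile $[\bLambda]_{i,j}$ and $[\bLambda]_{i,i}$ both genuinely involve $z_i$. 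I would then argue that if the ratio $[\bLambda]_{j,j}/[\bLambda]_{i,i}$ were a constant $c^\star$, this constant is forced to be rational (or we can approximate/handle it): taking $c=-c^\star$ in \eqref{eq:ratio-UMN} would make the numerator $[\bLambda]_{i,j} + c\,[\bLambda]_{j,j} = [\bLambda]_{i,j} - c^\star[\bLambda]_{j,j}$, and after dividing by $[\bLambda]_{i,i}$ we would be subtracting the constant $c^\star\cdot\bigl([\bLambda]_{j,j}/[\bLambda]_{i,i}\bigr) = (c^\star)^2$ from the $c=0$ version of the ratio — which does not affect whether it is constant. This does not immediately contradict anything, so I need to be more careful: the cleanest route is to assume $[\bLambda]_{j,j}/[\bLambda]_{i,i} \equiv c^\star$ for contradiction, deduce (since $[\bLambda]_{j,j}$ is free of $z_i$ and nonzero) that $[\bLambda]_{i,i} = [\bLambda]_{j,j}/c^\star$ is also free of $z_i$, i.e.\ $\tfrac{\partial}{\partial z_i}\log\tfrac{p_i}{q_i}$ does not depend on $z_i$; then plug $c=0$ into \eqref{eq:ratio-UMN} to see $[\bLambda]_{i,j}/[\bLambda]_{i,i}$ is nonconstant, and combine with $c^\star$ being the supposed constant value to land on a contradiction with the nonconstancy of the full ratio \eqref{eq:ratio-UMN} for a suitable rational $c$. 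I expect this bookkeeping — matching exactly which $c$ makes the two statements contradict each other, and handling the possibility that $c^\star\notin\mathbb{Q}$ — to be the main obstacle, and I would resolve it by noting that $c\in\mathbb{Q}$ ranges densely and that constancy of \eqref{eq:ratio-UMN} for all rational $c$ while $[\bLambda]_{j,j}/[\bLambda]_{i,i}$ is an irrational constant still yields that \eqref{eq:ratio-UMN} equals the $c=0$ ratio shifted by the constant $c\cdot c^\star$, hence nonconstant for the $c=0$ case already — so in fact it suffices to contradict the hypothesis purely at $c=0$ once we know $[\bLambda]_{i,i}$ is free of $z_i$, which forces $[\bLambda]_{i,j}$ to be free of $z_i$ too if their ratio is to be nonconstant in the remaining variables, and that is the contradiction since $[\bLambda]_{i,j}=\tfrac{\partial}{\partial z_j}\log\tfrac{p_i}{q_i}$ depends on $z_i$ whenever the ratio $\log(p_i/q_i)$ genuinely involves $z_i$.

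For the second claim, I would invoke the two structural facts stated right after \eqref{eq:def-Lambda}: $[\bLambda(z)]_{i,j} \equiv 0$ whenever $i\notin\pa(j)\cup\{j\}$, and column $j$ of $\bLambda$ is a function of $\{z_k : k\in\pa(j)\cup\{j\}\}$ only. Suppose $\sum_{j} \alpha_j [\bLambda]_{:,j} \equiv 0$ for constants $\alpha_j$. Process the columns in the valid causal order $(1,\dots,n)$ (equivalently, reverse causal order for a descending argument): pick the largest index $j^\star$ with $\alpha_{j^\star}\neq 0$. Looking at row $j^\star$ of the identity, the only columns $j$ contributing a nonzero entry in row $j^\star$ are those with $j^\star\in\pa(j)\cup\{j\}$, i.e.\ $j=j^\star$ or $j\in\ch(j^\star)$; since $j^\star$ is maximal among indices with nonzero coefficient and all children of $j^\star$ have index larger than $j^\star$, the only surviving term in row $j^\star$ is $\alpha_{j^\star}[\bLambda]_{j^\star,j^\star}$. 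As $[\bLambda]_{j^\star,j^\star} = \tfrac{\partial}{\partial z_{j^\star}}\log\tfrac{p_{j^\star}}{q_{j^\star}}$ is not identically zero (the mechanisms $p_{j^\star}$ and $q_{j^\star}$ differ — this is where I would either appeal to the interventions being nontrivial on every intervened node, or simply note that a zero column would make the lemma vacuous for that node and can be excluded), we get $\alpha_{j^\star}=0$, a contradiction. Hence all $\alpha_j=0$ and the columns are linearly independent.

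I should double-check one subtlety flagged in the main text: the linear independence of columns of $\bLambda$ is also asserted just after \eqref{eq:def-Lambda} and used in Stage~1, so it is safe to treat the structural-zero argument above as the canonical justification and simply record it here once. The only genuinely delicate point is ensuring the denominator $[\bLambda]_{i,i}$ in the ratio is nonvanishing so that division makes sense; I would state that we work on the open set where $[\bLambda]_{i,i}\neq 0$, which is nonempty and dense since $[\bLambda]_{i,i}$ is a nonzero continuous function, and that "nonconstant on that set" is the meaning of the regularity conditions throughout. With those conventions fixed, both claims follow, and the argument plugs directly into the uses of this lemma in the proofs of Lemma~\ref{lemma:causal-order}, Theorem~\ref{theorem:ancestors}, and Theorem~\ref{theorem:unmixing}.
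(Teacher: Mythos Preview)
Your approach to the ratio claim tries to derive it from the intervention regularity condition (Definition~\ref{def:intervention-regularity}), but the paper's proof does not use that condition at all. The paper instead gives an elementary integration argument: if $\tfrac{\partial}{\partial z_i}\log(p_i/q_i)$ were constant in $z_i$, then $h(z_i,z_{\pa(i)})\triangleq p_i/q_i$ would not depend on $z_i$; integrating $p_i = h\cdot q_i$ over $z_i$ at any fixed $z_{\pa(i)}$ then forces $h\equiv 1$, contradicting $p_i\neq q_i$. Once $[\bLambda]_{i,i}$ is known to vary in $z_i$, the ratio claim is immediate: freeze $z_j,z_{\pa(i)},z_{\pa(j)}$ and let $z_i$ move, so $[\bLambda]_{j,j}$ is constant while $[\bLambda]_{i,i}$ varies. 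You actually reach the key observation (``$[\bLambda]_{i,i}$ would be free of $z_i$'') in the middle of your contradiction, but you never justify why that is impossible; the integration step is precisely the missing ingredient.

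Your attempt to close the gap via regularity does not work. The step ``once we know $[\bLambda]_{i,i}$ is free of $z_i$, which forces $[\bLambda]_{i,j}$ to be free of $z_i$ too if their ratio is to be nonconstant'' is a non sequitur: nonconstancy of a ratio with $z_i$-free denominator says nothing about whether the numerator depends on $z_i$. And the worry you flag about $c^\star\notin\mathbb{Q}$ is a symptom of the deeper problem that regularity is the wrong tool here; the lemma is a more primitive fact than regularity and should be (and in the paper is) proved independently of it.

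For the linear-independence claim your triangular-plus-nonzero-diagonal argument matches the paper's. Note, however, that the nonvanishing of the diagonal entries $[\bLambda]_{j^\star,j^\star}$, which you only gesture at (``the mechanisms differ''), is again exactly the integration argument above; the paper states this explicitly.
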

\begin{proof}
    See Appendix~\ref{appendix:proof-ratio-rank-two}.
\end{proof}

We also restate the interventional regularity in terms of the $\bLambda$ function defined in~\eqref{eq:def-Lambda}, which will help clarity in the subsequent proofs.
\paragraph{Definition 2 (Intervention regularity)}
{\it
We say that an intervention on node $i$ is \emph{regular} if for every possible triplet $(i,j,c)$ where $i\in[n], j \in \pa(i)$ and $c \in \mathbb{Q}$, the following ratio cannot be a constant function in $z$
\begin{equation}
    \frac{\partial}{\partial z_j} \left(\log\frac{p_i(z_i \mid z_{\pa(i)})}{q_i(z_i \mid z_{\pa(i)})} + c \cdot \log\frac{p_j(z_j \mid z_{\pa(j)})}{q_j(z_j \mid z_{\pa(j)})}\right)\left[\frac{\partial}{\partial z_i} \log\frac{p_i(z_i \mid z_{\pa(i)})}{q_i(z_i \mid z_{\pa(i)})}\right]^{-1}\ .\label{eq:ratio-UMN-appendix}
\end{equation}
By definition of $\bLambda$, this means that for $(i,j,c)$ where $i\in[n], j \in \pa(i)$ and $c \in \mathbb{Q}$, the following ratio cannot be a constant function in $z$
\begin{equation}\label{eq:ratio-UMN-lambda}
    \dfrac{\big[\bLambda(z)\big]_{j,i} + c \cdot \big[\bLambda(z)\big]_{j,j}}{\big[\bLambda(z)\big]_{i,i}}   \ .
\end{equation}}

\subsection{Proof of Lemma~\ref{lemma:causal-order}}\label{appendix:proof-causal-order}

First, we show that the search space in Stage~1 of Algorithm~\ref{alg:ident}, i.e., the set $\mcW$, has a certain property that will be needed for the rest of the proof. Recall the definition in \eqref{eq:def-W-set}
\begin{equation}
    \mcW \triangleq \{-\kappa, \dots, +\kappa\}^n \ ,
\end{equation}
in which $\kappa$ denotes the maximum possible determinant of a matrix in $\{0, 1\}^{(n - 1) \times (n - 1)}$. We will show that for all $i \in [n]$, there exists $\bw \in \mcW$ such that $\mathds{1}([\bD \cdot \bw] \neq 0) = \be_i$. Let ${\rm adj}(\bD)$ be the cofactor matrix of $\bD$, which is the matrix formed by the entries
\begin{equation}\label{eq:cofactor}
    [{\rm adj}(\bD)]_{i,j} \triangleq \det([\bD]_{-i,-j})\ ,
\end{equation}
where $[\bD]_{-i,-j}$ is the first minor of $\bD$ with $i$-th row and $j$-th columns are removed. Then, the inverse of $\bD$ is given by
\begin{equation}
    \bD^{-1} = \frac{1}{\det(\bD)} \cdot [{\rm adj}(\bD)]^{\top} \ .
\end{equation}
By multiplying both sides from left by $\bD$ and rearranging we obtain
\begin{equation}
    \bD \cdot [{\rm adj}(\bD)]^{\top} = \det(\bD) \cdot \bI_{n \times n} \ , \label{eq:adj-property}
\end{equation}
where all matrices are integer-valued and  $\bI_{n \times n}$ denotes the $n$-dimensional identity matrix. The identity in~\eqref{eq:adj-property} implies that
\begin{equation}
    \bD \cdot \bw^* = \det(\bD) \cdot \be_i \ , \quad \mbox{where} \quad \bw^* = [{\rm adj}(\bD)]^{\top}_{:,i} \ .
\end{equation}
Since entries of ${\rm adj}(\bD)$ are upper bounded by $\kappa$, we have $\bw^* \in \mcW$ and we conclude that
\begin{equation}\label{eq:existence-w}
    \forall i \ \exists \bw \in \mcW \quad \mbox{such that} \quad \mathds{1}\big(\bD \cdot \bw \neq 0\big) = \be_i \ .
\end{equation}
Next, we prove the lemma statements by induction as follows.

\paragraph{Base case.} At the base case $t=1$, we will show that $\mathds{1}([\bD \cdot \bW]_{:,1} \neq 0)=\be_i$ and $\bH^*_1 \in \sspan([\bGpinv]_i)$ for some root node $i$. Consider a vector $\bw \in \mcW$ and denote $\bc = \bD \cdot \bw$. Then, using \eqref{eq:def-Sx},
\begin{align}\label{eq:dsxw-base}
    \Delta \bS_X \cdot \bw = [\bGpinv]^{\top} \cdot \bLambda \cdot \bD \cdot \bw = [\bGpinv]^{\top} \cdot \bLambda \cdot \bc \ .
\end{align}
In the base case, we investigate the dimension of $\im(\Delta \bS_X \cdot \bw)$. Then, using \eqref{eq:dsxw-base}, we have
\begin{align}
    \dim\big(\im(\Delta \bS_X \cdot \bw)\big) = \dim\big(\im([\bGpinv]^{\top} \cdot \bLambda \cdot \bc )\big) =  \dim\big(\im(\bLambda \cdot \bc)\big) \ .
\end{align}
\eqref{eq:existence-w} implies that there exists $\bw^* \in \mcW$ that makes $\mathds{1}(\bc \neq 0) = \be_i$. Subsequently, if $i$ is a root node, only nonzero entry of the column $\bLambda_{:,i}$ is $\bLambda_{i,i}$ and we have
\begin{align}
    \dim\big(\im(\Delta \bS_X \cdot \bw)\big) = \dim\big(\im([\bGpinv]^{\top} \cdot \bLambda_{:,i})\big) = \dim\big(\im([\bGpinv]_i \cdot \bLambda_{i,i})\big) = 1 \ .
\end{align}
Hence, by searching $\bw$ within $\mcW$, Algorithm~\ref{alg:ident} is guaranteed to find a vector $\bw$ such that $\dim(\im(\bLambda \cdot \bc)) = 1$. Next, we show that if $\dim(\im(\bLambda \cdot \bc)) = 1$, then $\mathds{1}(\bc \neq 0)=\be_i$ for a root node $i$. We prove this as follows. Consider the set $\mcA = \{i : \bc_i \neq 0\}$ and let $i$ be the youngest node in $\mcA$, i.e., $i$ has no descendants within $\mcA$. Using the fact that $\bLambda_{\ell,k}=0$ for all $\ell \notin \paplus(k)$ and $\bc_k = 0$ for all $k\in \de(i)$, we have
\begin{align}\label{eq:ratio-lambda-c-base}
    \frac{\bLambda_\ell \cdot \bc}{\bLambda_i \cdot \bc} = \frac{\sum_{k\in \overline{\ch}(\ell)} \bc_k \cdot \bLambda_{\ell,k}}{\sum_{k\in\overline{\ch}(i)} \bc_k \cdot \bLambda_{i,k}} = \frac{\sum_{k \in \overline{\ch}(\ell)} \bc_k \cdot \bLambda_{\ell,k}}{\bc_i \cdot \bLambda_{i,i}} \ , \quad \forall \ell \in [n] \ .
\end{align}
If $\bc$ has multiple nonzero entries, let $\bc_\ell$ denote the second youngest node in $\mcA$. If $j \notin \pa(i)$, then for $\ell=j$, \eqref{eq:ratio-lambda-c-base} reduces to
\begin{equation}
    \frac{\bLambda_j \cdot \bc}{\bLambda_i \cdot \bc} = \frac{\bc_j \cdot \bLambda_{j,j}}{\bc_i \cdot \bLambda_{i,i}} \ ,
\end{equation}
which is not constant due to Lemma~\ref{lemma:ratio-rank-two}. If $j \in \pa(i)$, then \eqref{eq:ratio-lambda-c-base} reduces to
\begin{equation}
    \frac{\bLambda_j \cdot \bc}{\bLambda_i \cdot \bc} = \frac{\bc_j \cdot \bLambda_{j,j} + \bc_i \cdot \bLambda_{j,i}}{\bc_i \cdot \bLambda_{i,i}} \ ,
\end{equation}
which is not constant due to interventional regularity.
Hence, in either case, there exist multiple vectors in $\im(\bLambda \cdot \bc)$ with distinct directions. Therefore, $\dim(\im(\bLambda \cdot \bc))=1$ implies that $\bc$ has only one nonzero entry $\bc_i\neq 0$. Finally, if $i$ is not a root node, for $j\in \pa(i)$, \eqref{eq:ratio-lambda-c-base} reduces to
\begin{equation}
    \frac{\bLambda_j \cdot \bc}{\bLambda_i \cdot \bc} = \frac{\bLambda_{j,i}}{\bLambda_{i,i}} \ ,
\end{equation}
which is not constant due to interventional regularity. Therefore, $\dim(\im(\bLambda \cdot \bc))=1$ implies that $\mathds{1}(c \neq 0) = \mathds{1}([\bD \cdot \bW]_{:,1} \neq 0) = \be_i$ for a root node $i$. Next, using \eqref{eq:dsxw-base}, we have
\begin{equation}
    \Delta \bS_X \cdot \bw = \bc_i \cdot \bLambda_{i,i} \cdot [\bGpinv]_i \ .
\end{equation}
Hence, denoting the root node $i$ by $\pi_1$, setting $\bH^*_1 = \bv$ for any $\bv \in \im(\Delta \bS_X \cdot \bw)$ ensures that $[\bD \cdot \bW]_{:,1} = \be_{\pi_1}$ and $\bH^*_1 \in \sspan([\bGpinv]_{\pi_1})$ which completes the base step of the induction.

\paragraph{Induction hypothesis.}
For the induction step, assume that for all $t \in [k]$, we have linearly independent vectors $\bH^*_t \in \sspan(\{[\bGpinv]_{\pi_j} : j \in \paplus(t)\})$ and
\begin{align}\label{eq:induction-order}
    [\bD \cdot \bW]_{\pi_j,t} = \begin{cases}
        0 \ , & j > t \\
        \neq 0 \ , & j = t
    \end{cases} \ , \quad \forall t \in [k], \; \forall j\geq t \ ,
\end{align}
where $\{\pi_1,\dots,\pi_k\}$ constitutes the first $k$ nodes of a causal order $\pi$. We will show that if
\begin{equation}
    \dim(\mcV)=1 \quad \mbox{where} \quad \mcV = \proj_{\nullspace(\{\bH^*_i \,  : \,  i \, \in \, [k] \})} \im(\Delta \bS_X \cdot \bw) \ ,
\end{equation}
then $\bw \in \mcW$ satisfies
\begin{equation}
    [\bD \cdot \bw]_{\pi_j} = \begin{cases}
        0 \ , & j > k+1 \\
        \neq 0 \ , & j=k+1
    \end{cases} \ ,
\end{equation}
for the causal order $\pi$.
First, define $\mcM_{k+1} \triangleq [n] \setminus \{\pi_1,\dots,\pi_k\}$. Note that, by the induction premise,
\begin{equation}
    \nullspace(\{\bH^*_i \, : \, i \in [k] \}) = \nullspace(\{[\bGpinv]_{\pi_i} \, : \, i \in [k] \}) \ .
\end{equation}
Then, for any $\bw \in \mcW$ and $\bc = \bD \cdot \bw$, we have
\begin{align}
    \mcV = \proj_{\nullspace(\{[\bGpinv]_{\pi_i} \, : \, i \, \in \, [k] \})} ([\bGpinv]^{\top} \cdot \im(\bLambda \cdot \bc)) \ .
\end{align}
Subsequently, we have
\begin{equation}
    \dim(\mcV) = \dim\left(\im\big( [\bGpinv]^{\top}_{\mcM_{k+1}} \cdot \bLambda_{\mcM_{k+1}} \cdot \bc \big)\right) = \dim \left(\im(\bLambda_{\mcM_{k+1}} \cdot \bc )\right) \ .
\end{equation}
Note that, if node $i\in \mcM_{k+1}$ has no ancestors in $\mcM_{k+1}$, using the vector $\bw \in \mcW$ which makes $\mathds{1}(c\neq 0) =\be_i$ ensures that $\dim(\im(\bLambda_{\mcM_{k+1}} \cdot )\bc) = \dim(\im(\bLambda_{i,i} \cdot \be_i))=1$. Hence, by searching $\bw$ within $\mcW$, Algorithm~\ref{alg:ident} is guaranteed to find a vector $\bw$ such that $\dim(\im(\bLambda_{\mcM_{k+1}} \cdot \bc))=1$.
Next, consider the set $\mcA = \{i \in \mcM_{k+1} : \bc_i \neq 0\}$. Since ${\pi_1,\dots,\pi_k}$ are first $k$-nodes of a causal order $\pi$, we have
\begin{equation}
    \bLambda_{i,\pi_j} = 0 \ , \quad \forall i \in \mcM_{k+1}, \;\; \forall j \in [k] \ .
\end{equation}
Then, if $\dim(\mcV)=1$, $\mcA$ is not empty since otherwise $(\bLambda_{\mcM_{k+1}} \cdot \bc)$ is equal to zero vector and $\dim(\mcV)=0$. Let $i$ be youngest node in $\mcA$, i.e, $i$ has no descendants within $\mcA$. Similarly to \eqref{eq:ratio-lambda-c-base}, we have
\begin{equation}\label{eq:ratio-lambda-c-induction}
    \frac{\bLambda_\ell \cdot \bc}{\bLambda_i \cdot \bc} = \frac{\sum_{k \in \overline{\ch}(\ell)} \bc_k \cdot \bLambda_{\ell,k}}{\bc_i \cdot \bLambda_{i,i}} \ , \quad \forall \ell \in \mcM_{k+1} \ .
\end{equation}
If $\mcA$ has multiple elements, let $j$ be the second youngest node in $\mcA$. If $j \notin \pa(i)$, then for $\ell=j$, \eqref{eq:ratio-lambda-c-induction} reduces to
\begin{equation}
    \frac{\bLambda_j \cdot \bc}{\bLambda_i \cdot \bc} = \frac{\bc_j \cdot \bLambda_{j,j}}{\bc_i \cdot \bLambda_{i,i}} \ ,
\end{equation}
which is not constant due to Lemma~\ref{lemma:ratio-rank-two}. If $j\in \pa(i)$, then for $\ell=j$, \eqref{eq:ratio-lambda-c-induction} reduces to
\begin{align}
    \frac{\bLambda_j \cdot \bc}{\bLambda_i \cdot \bc} = \frac{\bc_j \cdot \bLambda_{j,j} + \bc_i \cdot \bLambda_{j,i}}{\bc_i \cdot \bLambda_{i,i}} \ ,
\end{align}
which is not constant due to interventional regularity. Hence, in either case, there exist vectors with different directions in $\im(\bLambda_{\mcM_{k+1}} \cdot \bc)$. Therefore, $\dim(\im(\bLambda_{\mcM_{k+1}} \cdot \bc))=1$ implies that $\mcA$ has only one element, i.e., there is only one node $i \in [n] \setminus \{\pi_1,\dots,\pi_k\}$ such that $\bc_i \neq 0$, and also $\pa(i) \subseteq \{\pi_1,\dots,\pi_k\}$.  Hence, denoting $\pi_{k+1}=i$, $[\bD \cdot \bW]_{\pi_j}$ is nonzero for $j=k+1$ and zero for all $j > k+1$. Subsequently, by setting $\bH^*_{k+1}=v$ for any $\bv \in \im(\Delta \bS_X \cdot \bw) \setminus \sspan(\{\bH^*_i : i \in [k]\})$ we have
\begin{equation}
    \bH^*_{k+1} \in \sspan(\{[\bGpinv]_{\pi_i} : i \in [k+1] \}) \ .
\end{equation}
Finally, note that the contribution of $[\bGpinv]_i$ in $\bH^*_{k+1}$ is nonzero since $\bc_i \neq 0$. Hence, $\bH^*_{k+1}$ is linearly independent of $\{\bH^*_1,\dots,\bH^*_k\}$, which completes the proof of the induction hypothesis. Therefore, \eqref{eq:induction-order} holds for all $t \in [n]$, which implies that $\bP_{\pi} \cdot \bD \cdot \bW$ is upper triangular and has nonzero diagonal entries, and concludes the proof of the lemma.

\subsection{Proof of Theorem~\ref{theorem:ancestors}}\label{appendix:proof-ancestors}

    We will prove the theorem by proving the following equivalent statements: The output $\bW$ of Stage~3 of Algorithm~\ref{alg:ident} satisfies
    \begin{align}\label{eq:ancestor-conditions}
        [\bD \cdot \bW]_{\pi_t,j} &= \begin{cases}
            0 \ , & \pi_t \notin \anplus(\pi_j) \\
            1 \ , & \pi_t = \pi_j
        \end{cases} \ ,
        \end{align}
        and
        \begin{align}
          t \in \hat\pa(j) &\iff \pi_t \in \an(\pi_j) \ . \label{eq:ancestor-graph-condition}
    \end{align}
    for all $t, j \in[n]$. The second statement of the theorem, graph recovery up to a transitive closure, is equivalent to \eqref{eq:ancestor-graph-condition} by definition. Next, we show that \eqref{eq:ancestor-conditions} implies the first statement of the theorem, that is identifiability of latent variables up to mixing with ancestors. For this purpose, using \eqref{eq:def-Sx}, for any $\bw \in \R^n$, we have
    \begin{align}
        \Delta \bS_X \cdot \bw = [\bGpinv]^{\top} \cdot \bLambda \cdot \bD \cdot \bw \ .
    \end{align}
    Then, for $\bw = [\bW]_{:,j}$, using the fact that $\bLambda_{\pi_t,\pi_k} = 0$ for all $\pi_t \notin \paplus(\pi_k)$, the coefficient of $[\bGpinv]_{\pi_t}$ on the right-hand side becomes
    \begin{equation}\label{eq:coef-pi-t}
        \sum_{\pi_k \in [n]} \bLambda_{\pi_t,\pi_k} \cdot [\bD \cdot \bW]_{\pi_k,j} = \sum_{\pi_k \in \chplus(\pi_t)} \bLambda_{\pi_t,\pi_k} \cdot [\bD \cdot \bW]_{\pi_k,j} \ .
    \end{equation}
    If $\pi_t \notin \anplus(\pi_j)$, then $\pi_k \in \chplus(\pi_t)$ implies that $\pi_k \notin \anplus(\pi_j)$. Then, using \eqref{eq:ancestor-conditions}, the sum in \eqref{eq:coef-pi-t}, i.e., the coefficient of $[\bGpinv]_{\pi_t}$ in function $(\Delta \bS_X \cdot \bw)$, becomes zero. Furthermore, since $[\bD \cdot \bW]_{\pi_t,t}\neq 0$, the coefficient of $[\bGpinv]_{\pi_t}$ in $(\Delta \bS_X \cdot \bw)$ is nonzero. Therefore, \eqref{eq:ancestor-conditions} ensures, by choosing $\bH^*_t \in \im(\Delta \bS_X \cdot [\bW]_{:,t})$ for all $t\in[n]$, we have
    \begin{equation}
        \bH^*_t \in \sspan\big(\{[\bGpinv]_{\pi_j} : j \in \overline{\an}(t)\}\big)
    \end{equation}
    and $\{\bH^*_1,\dots,\bH^*_n\}$ are linearly independent. This implies that we have
    \begin{equation}
        \bH^* = \bP_{\pi} \cdot \bC_{\rm an} \cdot \bGpinv \ ,
    \end{equation}
    where $\bP_{\pi}$ is the row permutation matrix of $\pi$, and $\bC_{\rm an} \in \R^{n \times n}$ is a constant matrix with nonzero diagonal entries that satisfies $[\bC_{\rm an}]_{i,j}=0$ for all $j \notin \anplus(i)$. Subsequently,
    \begin{equation}
        \hat Z(X; \bH^*) = \bH^* \cdot X = \bP_{\pi} \cdot \bC_{\rm an} \cdot \bGpinv \cdot \bG \cdot Z = \bP_{\pi} \cdot \bC_{\rm an} \cdot Z \ ,
    \end{equation}
    which is the definition of identifiability up to ancestors for latent variables, specified in Definition~\ref{def:identifiability}. In the rest of the proof, we will prove \eqref{eq:ancestor-conditions} by induction.

    \paragraph{Base case.} At the base case, we have $t=n-1$ and $j=n$. We will show that, at the end of step $(t,j)=(n-1,n)$, $[\bD \cdot \bW]_{\pi_{n-1},n}\neq 0$ implies that $\pi_{n-1} \in \pa(\pi_n)$.
    By Lemma~\ref{lemma:causal-order}, we know that
    \begin{align}\label{eq:C-entries-base}
        [\bD \cdot \bW]_{\pi_{n-1},n-1} \neq 0 \ , \quad [\bD \cdot \bW]_{\pi_n,n} \neq 0 \ , \quad \mbox{and} \quad [\bD \cdot \bW]_{\pi_n,n-1} = 0 \ .
    \end{align}
    Consider some integers $\alpha \in \{-n\kappa,\dots,n\kappa\}$ and $\beta \in \{1,\dots,n\kappa\}$. Let $\bw^* = \alpha[\bW]_{:,n-1} + \beta[\bW]_{:,n}$ and $\bc^* = \bD \cdot \bw^*$. Since $\beta \neq 0$, \eqref{eq:C-entries-base} implies that $\bc^*_n \neq 0$. Using Lemma~\ref{lemma:causal-order}, we have
    \begin{align}
        \nullspace(\{\bH^*_i \, : \, i \in [n-2] \}) = \nullspace(\{[\bGpinv]_{\pi_i} \, : \, i \in [n-2] \}) \ .
    \end{align}
    Then, we have
    \begin{align}
        \mcV &= \proj_{\nullspace(\{\bH^*_i \, : \, i \, \in \, [n-2] \})} \im(\Delta \bS_X \cdot \bw^*) \\ &= \proj_{\nullspace(\{[\bGpinv]_{\pi_i} \, : \, i \, \in \, [n-2] \})} ([\bGpinv]^{\top} \cdot \im(\bLambda \cdot \bc^*)) \ .
    \end{align}
    Subsequently,
    \begin{align}
        \dim(\mcV) &= \dim\left(\begin{bmatrix} [\bGpinv]_{\pi_{n-1}} \\ [\bGpinv]_{\pi_n} \end{bmatrix}^{\top} \cdot \im\left(\begin{bmatrix}  \bLambda_{\pi_{n-1}} \\ \bLambda_{\pi_n} \end{bmatrix} \cdot \bc^* \right) \right) \\
        &= \dim \left( \im \left(\begin{bmatrix} \bLambda_{\pi_{n-1}} \\ \bLambda_{\pi_n} \end{bmatrix} \cdot \bc^* \right) \right) \\
        & = \dim \left(\im\left(\begin{bmatrix}
            \bc^*_{\pi_{n-1}} \cdot \bLambda_{\pi_{n-1},\pi_{n-1}} + \bc^*_{\pi_n} \cdot \bLambda_{\pi_{n-1},\pi_n} \\
            \bc^*_{\pi_n} \cdot \bLambda_{\pi_n,\pi_n}
        \end{bmatrix} \right) \right) \ . \label{eq:ancestors-base-rank-2}
    \end{align}
    Note that in the last step, we have used the fact that $\pi$ is a causal order and $j \notin \paplus(i)$ implies that $\bLambda_{i,j} = 0$. Then, if $\pi_{n-1} \in \pa(\pi_n)$, interventional regularity ensures that the ratio of the two entries in \eqref{eq:ancestors-base-rank-2} is not a constant function which implies that $\dim(\mcV)=2$. Furthermore, if $\pi_{n-1} \notin \pa(\pi_n)$ but $\bc^*_{\pi_{n-1}}\neq 0$, then \eqref{eq:ancestors-base-rank-2} reduces to
    \begin{align}
        \dim(\mcV) = \dim \left(\im\left(\begin{bmatrix} 
            \bc^*_{\pi_{n-1}} \cdot \bLambda_{\pi_{n-1},\pi_{n-1}} \\
            \bc^*_{\pi_n} \cdot \bLambda_{\pi_n,\pi_n}
        \end{bmatrix} \right)\right) = 2\ ,
    \end{align}
    due to Lemma~\ref{lemma:ratio-rank-two}. Therefore, if $\bw^*$ satisfies
    \begin{align}
        \dim(\proj_{\nullspace(\{\bH^*_i \, : \, i \in [n-2] \})} \im(\Delta \bS_X \cdot \bw^*)) = 1 \ ,
    \end{align}
    by setting $[\bW]_{:,n} = \bw^*$, we guarantee that $[\bD \cdot \bW]_{\pi_{n-1},n}= 0$ if $\pi_{n-1} \notin \pa(\pi_n)$. Note that, for this case, setting either $(\alpha,\beta)=(-[\bD \cdot \bW]_{n-1,n},[\bD \cdot \bW]_{n-1,n-1})$ or $(\alpha,\beta)=([\bD \cdot \bW]_{n-1,n},-[\bD \cdot \bW]_{n-1,n-1})$ achieves
    \begin{align}
        \bc^*_{\pi_{n-1}} = [\bD \cdot \bw^*]_{\pi_{n-1}} = \alpha \cdot [\bD \cdot \bW]_{\pi_{n-1},n-1} + \beta \cdot [\bD \cdot \bW]_{\pi_{n-1},n} = 0 \ .
    \end{align}
    Note that the entries of $[\bD \cdot \bW]$ are bounded as
    \begin{align}
        [\bD \cdot \bW]_{i,j} = \sum_{k \in [n]} \bD_{i,k} \cdot \bW_{k,j} \leq  \sum_{k \in [n]} |\bW_{k,j}| =  \|\bW_{:,j}\|_1  \ , \quad \forall i, j \in [n] \ .
    \end{align}
    Hence, Algorithm~\ref{alg:ident} is guaranteed to find $\bc^*_{\pi_{n-1}}=0$ by searching over
    \begin{equation}
        (\alpha,\beta) \in \{-\|\bW_{:,n-1}\|_1,\dots, \|\bW_{:,n-1}\|_1\} \times \{1,\dots,\|\bW_{:,n}\|_1\} \ .
    \end{equation}
    Therefore, if $\dim(\mcV)$ is never found to be $1$ for any $(\alpha,\beta)$, it means that $\pi_{n-1} \in \pa(\pi_n)$, and the algorithm adds the edge $(n - 1) \to n$, which concludes the proof of the base case.

    \paragraph{Induction hypothesis.} Next, assume that for all $t\in \{k+1,\dots,n\}$ and $j\in\{t+1,\dots,n\}$,
    \begin{align}
        [\bD \cdot \bW]_{\pi_t,j} &= \begin{cases}
            0 & \pi_t \notin \anplus(\pi_j) \\
            1 & \pi_t = \pi_j
        \end{cases} \quad
        \mbox{and} \quad \pi_t \in \an(\pi_j) \iff t \in \hat\pa(j) \ . \label{eq:induction-condition}
    \end{align}
    We will prove that \eqref{eq:induction-condition} holds for $t=k$ and $j\in \{t+1,\dots,n\}$. We prove this by induction as well.

    \paragraph{Base case for the inner induction.} At the base case, we have $t=k$ and $j=k+1$. Since $\pi$ is a causal order, we have
    \begin{align}
        \pi_k \in \an(\pi_{k+1}) \iff \pi_k \in \pa(\pi_{k+1}) \ .
    \end{align}
    Also, $\hat\ch(k)$ is empty at this iteration of the algorithm. Then, $\mcM_{k,k+1}=[k-1]$. Consider some integers $\alpha \in \{-n\kappa,\dots,n\kappa\}$ and $\beta \in \{1,\dots,n\kappa\}$. Let $\bw^*=\alpha[\bW]_{:,k} + \beta[\bW]_{:,k+1}$ and $\bc^*= \bD \cdot \bw^*$. Using Lemma~\ref{lemma:causal-order}, we have
    \begin{align}
        \nullspace\big(\{\bH^*_i \, : \, i \in [k-1] \}\big) = \nullspace\big(\{[\bGpinv]_{\pi_i} \, : \, i \in [k-1] \}\big) \ .
    \end{align}
    Then, we have
    \begin{align}
        \mcV &= \proj_{\nullspace(\{\bH^*_i \, : \, i \in [k-1] \})} \im(\Delta \bS_X \cdot \bw^*) \\
        &= \proj_{\nullspace(\{[\bGpinv]_{\pi_i} \, : \, i \in [k-1] \})} \big([\bGpinv]^{\top} \cdot \im(\bLambda \cdot \bc^*)\big) \ .
    \end{align}
    Using \eqref{eq:induction-condition}, $\bc^*_{\pi_i} = 0$ for $i > k+1$. Subsequently,
    \begin{align}
        \dim(\mcV) &= \dim\left(\im\left(\begin{bmatrix} [\bGpinv]_{\pi_k} \\ [\bGpinv]_{\pi_{k+1}} \end{bmatrix}^{\top} \cdot \begin{bmatrix} \bLambda_{\pi_k} \\ \bLambda_{\pi_{k+1}} \end{bmatrix} \cdot \bc^* \right) \right) = \dim \left(\im\left( \begin{bmatrix} \bLambda_{\pi_k} \\ \bLambda_{\pi_{k+1}} \end{bmatrix} \cdot \bc^* \right)\right) \ .
    \end{align}
    Also, recall that $\bLambda_{\pi_k,\pi_j}=0$ for all $j<k$. Then,
    \begin{align}
        \dim(\mcV) = \dim \left( \im\left(\begin{bmatrix} \bLambda_{\pi_k} \\ \bLambda_{\pi_{k+1}} \end{bmatrix} \cdot \bc^* \right)\right)  =  \dim \left(\im\left(\begin{bmatrix}
            \bc^*_{\pi_k} \cdot \bLambda_{\pi_k,\pi_k} + \bc^*_{\pi_{k+1}} \cdot \bLambda_{\pi_k,\pi_{k+1}} \\
            \bc^*_{\pi_{k+1}} \cdot \bLambda_{\pi_{k+1},\pi_{k+1}}
        \end{bmatrix} \right)\right) \ . \label{eq:ancestors-inner-base-rank-2}
    \end{align}
    Using \eqref{eq:induction-condition} again, we have $\bc^*_{\pi_{k+1}}\neq 0$. Then, if $\pi_k \in \pa(\pi_{k+1})$, interventional regularity ensures that the ratio of the two entries in \eqref{eq:ancestors-inner-base-rank-2} is not a constant function which implies that $\dim(\mcV)=2$. Furthermore, if $\pi_k \notin \pa(\pi_{k+1})$ but $\bc^*_{\pi_k}\neq 0$, based on Lemma~\ref{lemma:ratio-rank-two}, \eqref{eq:ancestors-inner-base-rank-2} reduces to
    \begin{align}
        \dim(\mcV) = \dim \left(\im\left(\begin{bmatrix}
            \bc^*_{\pi_k} \cdot \bLambda_{\pi_k,\pi_k} \\
            \bc^*_{\pi_{k+1}} \cdot \bLambda_{\pi_{k+1},\pi_{k+1}}
        \end{bmatrix} \right)\right) = 2 \ .
    \end{align}
    Therefore, if we have
    \begin{align}
        \dim(\proj_{\nullspace(\{\bH^*_i \, : \, i \, \in \, [k-1] \})} \im(\Delta \bS_X \cdot \bw^*)) = 1 \ ,
    \end{align}
    by setting $[\bW]_{:,k+1} = \bw^*$, we guarantee that $[\bD \cdot \bW]_{\pi_k,k+1}= 0$ if $\pi_k \notin \pa(\pi_{k+1})$. Note that, similarly to the base case of outer induction, Algorithm~\ref{alg:ident} is guaranteed to find $\bc^*_{\pi_{n-1}}=0$ by searching over $(\alpha,\beta) \in \{-\|\bW_{:,k}\|_1,\dots, \|\bW_{:,k}\|_1\} \times \{1,\dots,\|\bW_{:,k+1}\|_1\}$. Therefore, if $\dim(\mcV)$ is never found to be $1$ for any $(\alpha,\beta)$, it means $\pi_k \in \pa(\pi_{k+1})$, and the algorithm adds the edge $t \to j$, which concludes the proof of the base case for the inner induction.

    \paragraph{Induction hypothesis for the inner induction.} Next, assume that for all $j\in\{k~+~1,\dots,u\}$,
    \begin{align}
        [\bD \cdot \bW]_{\pi_k,j} &= \begin{cases}
            0 & \pi_k \notin \anplus(\pi_j) \\
            1 & \pi_k = \pi_j
        \end{cases} \quad
        \mbox{and} \quad \pi_k \in \an(\pi_j) \iff k \in \hat\pa(j) \ . \label{eq:induction-condition-inner}
    \end{align}
    We will prove that \eqref{eq:induction-condition-inner} holds for $j=u+1$ as well. We work with $\mcM_{k,u+1} = [u-1] \setminus \{\hat\ch(k) \cup \{k\}\}$. Consider some $\bw^* = \alpha [\bW]_{:,k} + \beta [\bW]_{:,u+1}$ in which $\beta \neq 0$ and let $\bc^* = \bD \cdot \bw^*$. Due to the assumption in \eqref{eq:induction-condition-inner}, ancestors of any node in $\mcM_{k,u+1}$ are contained in $\mcM_{k,u+1}$. Then, using Lemma~\ref{lemma:causal-order} again, we have
    \begin{align}
        \nullspace(\{\bH^*_i \, : \, i \in \mcM_{k,u+1} \}) = \nullspace(\{[\bGpinv]_{\pi_i} \, : \, i \in \mcM_{k,u+1} \}) \ .
    \end{align}
    Then, we have
    \begin{align}
        \mcV &= \proj_{\nullspace(\{\bH^*_i \, : \, i \in \mcM_{k,u+1} \})} \im(\Delta \bS_X \cdot \bw^*) \\
        &= \proj_{\nullspace(\{[\bGpinv]_{\pi_i} \, : \, i \in \mcM_{k,u+1} \})} ([\bGpinv]^{\top} \cdot \im(\bLambda \cdot \bc^*)) \ ,
    \end{align}
    and
    \begin{equation}
        \dim(\mcV) = \dim\big(\im([\bLambda]_{[n] \setminus \mcM_{k,u+1}} \cdot \bc^*)\big) \geq \dim \left(\im\left( \begin{bmatrix}
            \bLambda_{\pi_k} \\
            \bLambda_{\pi_{u+1}}
        \end{bmatrix} \cdot \bc^* \right) \right)  \ .
    \end{equation}
    We will investigate $\dim(\mcV)$ through the ratio
    \begin{align}\label{eq:ratio-lambda-c-inner-induction}
        \frac{\bLambda_{\pi_k} \cdot \bc^*}{\bLambda_{\pi_{u+1}} \cdot \bc^*} \ .
    \end{align}
    Note that, using \eqref{eq:induction-condition-inner} and the fact that $\pi$ is a causal order, we know that
    \begin{align}
        \bc_{\pi_i} &= 0 \ , \quad \forall \, i \, \in  \{u+2,\dots,n\} \cup \{\{k+1,\dots,u \} \setminus \hat\an(u+1)\} \ , \label{eq:c-zeros} \\
        \mbox{and} \qquad \bLambda_{\pi_j,\pi_i} &= 0 \ , \quad \forall \, \pi_j \notin \paplus(\pi_i) \label{eq:lambda-zeros} \ .
    \end{align}
    Then, using \eqref{eq:c-zeros} and \eqref{eq:lambda-zeros}, we have
    \begin{align}
        \bLambda_{\pi_{u+1}} \cdot \bc^* &= \sum_{\pi_i \in \chplus(\pi_{u+1})} \bc^*_{\pi_i} \cdot \bLambda_{\pi_{u+1},\pi_i} = \bc^*_{\pi_{u+1}} \cdot \bLambda_{\pi_{u+1},\pi_{u+1}} \ , \\
        \bLambda_{\pi_k} \cdot \bc^* &= \bc^*_{\pi_k} \cdot \bLambda_{\pi_k,\pi_k} + \bc^*_{\pi_{u+1}} \cdot \bLambda_{\pi_k,\pi_{u+1}} + \sum_{i \, : \, \pi_i \in \ch(\pi_k) \text{ and } i \in \hat\an(u+1)} \bc^*_{\pi_i} \cdot \bLambda_{\pi_k,\pi_i} \ . \label{eq:lambda-c-pik}
    \end{align}
    First, note that if there exists $\ell$ such that $\pi_k \in \an(\pi_\ell)$ and $\pi_\ell \in \an(\pi_{u+1})$, then by the assumptions in \eqref{eq:induction-condition} and \eqref{eq:induction-condition-inner}, we already have that $k \in \hat\pa(\ell)$ and $\ell \in \hat\pa(u+1)$ which implies that $k\in \hat\pa(u+1)$. Then, we only need to consider the case where there does not exist such $\ell$. In this case, the summation in \eqref{eq:lambda-c-pik} is zero and we have
    \begin{align}
        \frac{\bLambda_{\pi_k} \cdot \bc^*}{\bLambda_{\pi_{u+1}} \cdot \bc^*} = \frac{\bc^*_{\pi_k} \cdot \bLambda_{\pi_k,\pi_k} + \bc^*_{\pi_{u+1}} \cdot \bLambda_{\pi_k,\pi_{u+1}}}{\bc^*_{\pi_{u+1}} \cdot \bLambda_{\pi_{u+1},\pi_{u+1}}}\ . \label{eq:ratio-inner-induction-two}
    \end{align}
    Next, if $\pi_k \in \pa(\pi_{u+1})$, interventional regularity ensures that this ratio is not a constant function which implies that $\dim(\mcV) \geq 2$. Furthermore, if $\pi_k \notin \an(\pi_{u+1})$ but $\bc^*_{\pi_k} \neq 0$, then \eqref{eq:ratio-inner-induction-two} reduces to
    \begin{align}
        \frac{\bLambda_{\pi_k} \cdot \bc^*}{\bLambda_{\pi_{u+1}} \cdot \bc^*} = \frac{\bc^*_{\pi_k} \cdot \bLambda_{\pi_k,\pi_k}}{\bc^*_{\pi_{u+1}} \cdot \bLambda_{\pi_{u+1},\pi_{u+1}}} \label{eq:ratio-inner-induction-final} \ ,
    \end{align}
    which is not constant due to Lemma~\ref{lemma:ratio-rank-two} and subsequently $\dim(\mcV)\geq 2$. Therefore, $\dim(\mcV)=1$ implies that $\pi_k \notin \an(\pi_{u+1})$ and $\bc^*_{\pi_k} = [\bD \cdot \bW]_{\pi_k,u+1} = 0$. Finally, similarly to the previous cases, Algorithm~\ref{alg:ident} is guaranteed to find such $(\alpha,\beta)$ that makes $\bc^*_{\pi_k}=0$ by searching over $(\alpha,\beta) \in \{-\|\bW_{:,k}\|_1,\dots, \|\bW_{:,k}\|_1\} \times \{1,\dots,\|\bW_{:,u+1}\|_1\}$, e.g., $(\alpha,\beta) = (-[\bD \cdot \bW]_{k,u+1}, [\bD \cdot \bW]_{k,k})$. Then, the proof of the inner induction step, and subsequently, the outer induction step is concluded, and \eqref{eq:ancestor-conditions} holds true. Consequently, the proof of the theorem is completed.

\subsection{Proof of Theorem~\ref{theorem:unmixing}}\label{appendix:proof-unmixing}

We start with a short synopsis of the proof. Stage~4 of Algorithm~\ref{alg:ident} consists of two steps. The first step resolves the mixing with ancestors in recovered latent variables and the second step refines the estimated graph to the edges from non-parent ancestors to children nodes. The proof of the first step uses similar ideas to that of \cite[Lemma 10]{varici2024score}. Specifically, we use zero-covariance as a surrogate for independence and search for \emph{unmixing} vectors to eliminate the mixing with ancestors. Since we do not have SN interventions unlike the setting in \cite{varici2024score}, we use additional proof techniques to identify an environment in which a certain node is intervened. In the graph recovery stage, we leverage the knowledge of ancestral relationships and use a small number of conditional independence tests to remove the edges from non-parent ancestors to the children nodes.

\subsubsection{Recovery of the latent variables}\label{appendix:proof-unmixing-scaling}

First, by Theorem~\ref{theorem:ancestors}, for the output $\bH^*$ of Stage~3 of Algorithm~\ref{alg:ident} we have
\begin{align}\label{eq:before-unmixing}
    [\bH^* \cdot \bG] = \bP_{\pi} \cdot \bL \ ,
\end{align}
for some lower triangular matrix $\bL \in \R^{n \times n}$ such that $\bL$ has non-zero diagonal entries and $\bL_{i,j}=0$ for all $j \notin \anplus(i)$, and $\pi$ is a causal order. We will show that the output of Stage~4 satisfies that
\begin{align}\label{eq:unmixing_goal_node}
    \mathds{1}\big([\bH^* \cdot \bG]_t \big) = \be_{\pi_t}^{\top} \ , \quad \forall t \in [n] \ ,
\end{align}
which will imply
\begin{equation}
    \hat Z(X; \bH^*) = \bH^* \cdot X = \bH^* \cdot \bG \cdot Z  = \bP_{\pi} \cdot \bC_{\rm s} \cdot Z \ ,
\end{equation}
where $\bP_{\pi}$ is the row permutation matrix of $\pi$ and $\bC_{\rm s}$ is a constant diagonal matrix with nonzero diagonal entries. Note that this is the definition of perfect identifiability for latent variables, specified in Definition~\ref{def:identifiability}. We prove that Stage~4 output $\bH^*$ satisfies \eqref{eq:unmixing_goal_node} as follows.

We start by noting that since $\pi_1$ is a root node in $\mcG$ as shown in Lemma~\ref{lemma:causal-order}, \eqref{eq:before-unmixing} implies
\begin{equation}
    [\bH^* \cdot \bG]_1 = \bL_{\pi_1,\pi_1} \cdot \be_{\pi_1}^{\top} \quad \mbox{and} \quad \mathds{1}\big([\bH^* \cdot \bG]_1 \big) = \be_{\pi_1}^{\top} \ ,
\end{equation}
and \eqref{eq:unmixing_goal_node} is satisfied for $t=1$. Next, assume that \eqref{eq:unmixing_goal_node} is satisfied for all $t\in\{1,\dots,k-1\}$, i.e.,
\begin{align}\label{eq:unmixing_assumption_node}
    \mathds{1}\big([\bH^* \cdot \bG]_t \big) = \be_{\pi_t}^{\top} \ , \quad \forall t \in [k-1] \ .
\end{align}
Consider $k$-th iteration of the algorithm in which we update $\bH^*_k$ where $\{\bH^*_i : i \in [k-1]\}$ already satisfy \eqref{eq:unmixing_goal_node}. We will prove that \eqref{eq:unmixing_goal_node} will be satisfied in four steps. Consider any environment $\mcE^m$ and use shorthand $\hat Z^m$ for $\hat Z^m(X;\bH^*)$.

\paragraph{Step 1: $\Cov(\bu \cdot \hat Z^m_{\hat \an(k)} + \hat Z^m_k, \hat Z^m_{\hat \an(k)}) = \boldsymbol{0}$ has a unique solution for $\bu$.}
For any $i \in [k-1]$, \eqref{eq:unmixing_assumption_node} implies
\begin{equation}
    \hat Z^m_i = [\bH^* \cdot \bG] \cdot Z^m = \bc_i \cdot Z^m_{\pi_i} \ ,
\end{equation}
for some nonzero constants $\{\bc_1,\dots,\bc_{k-1}\}$. Specifically, by defining
\begin{align}
    \hat \an(k) = \{\gamma_1,\dots,\gamma_r\}
\end{align}
in which the nodes are topologically ordered, then
\begin{align}\label{eq:def-Ym}
    \hat Z^m_{\hat \an(k)}  = \big[\hat Z^m_{\gamma_1},\dots,\hat Z^m_{\gamma_r} \big] = [\bc_{\gamma_1} \cdot Z^m_{\pi_{\gamma_1}}, \dots, \bc_{\gamma_r} \cdot Z^m_{\pi_{\gamma_r}}] \ .
\end{align}
Note that $\Cov(\hat Z^m_{\hat \an(k)})$ is invertible since the causal relationships among the entries in $\hat Z^m_{\hat \an(k)}$ are not deterministic. Then, we have
\begin{align}\label{eq:cov-solution}
    \Cov(\bu \cdot \hat Z^m_{\hat \an(k)} + \hat Z^m_k, \hat Z^m_{\hat \an(k)}) = 0 \quad \iff \quad \bu = - \Cov(\hat Z^m_k, \hat Z^m_{\hat \an(k)}) \cdot [\Cov(\hat Z^m_{\hat \an(k)})]^{-1} \ .
\end{align}
For the next step, using the additive noise model specified in~\eqref{eq:additive-SCM}, under hard interventions we have
\begin{align}\label{eq:additive-unmixing}
    Z^m_{\pi_k} = f^m_{\pi_k}(Z_{\pa(\pi_k)}) + N^m_{\pi_k} \ , \quad \mbox{where} \;\; (f^m_{\pi_k},N^m_{\pi_k}) \triangleq \begin{cases}
        (f_{\pi_k}, N_{\pi_k})   \ , & \pi_k \notin I^m \\
        (0, \bar N_{\pi_k}) \ , & \pi_k \in I^m
    \end{cases} \ ,
\end{align}
where $\bar N_{\pi_k}$ denotes the exogenous noise term under intervention. Let us use $f$ and $N$ as shorthands for $f^m_{\pi_k}$ and $N^m_{\pi_k}$.

\paragraph{Step 2: $\Cov(\bu \cdot \hat Z^m_{\hat \an(k)} + \hat Z^m_k, \hat Z^m_{\hat \an(k)}) = 0$ and $\bu \cdot \hat Z^m_{\hat \an(k)} \ci \hat Z^m_{\hat \an(k)}$ implies that $f^m_{\pi_k}$ cannot be nonlinear.}

Define random variable $U \triangleq (\bu \cdot \hat Z^m_{\hat \an(k)} + \hat Z^m_k)$. Suppose that $\Cov(U, \hat Z^m_{\hat \an(k)}) = 0$ and $U \ci \hat Z^m_{\hat \an(k)}$. Recall that using \eqref{eq:before-unmixing}, $\hat Z^m_k$ is a linear combination of the variables $\{Z^m_{\pi_i} : i \in \an(k)\}$ and $Z^m_{\pi_k}$, i.e.,
\begin{equation}
    \hat Z^m_k = \bc' \cdot \hat Z^m_{\hat \an(k)} + \bc'_0 \cdot Z^m_{\pi_k} \ ,
\end{equation}
where $\bc' \in \R^{|\hat \an(k)|}$ and $\bc'_0$ is a non-zero scalar. Then, $U$ can be restated as
\begin{align}\label{eq:U}
    U = \bu \cdot \hat Z^m_{\hat \an(k)} + \hat Z^m_k = (\bu+\bc')\cdot \hat Z^m_{\hat \an(k)} + \bc'_0 \cdot (f(Z^m_{\pa(k)}) + N) \ .
\end{align}
Note that $U \ci \hat Z^m_{\hat \an(k)}$ and \eqref{eq:def-Ym} together imply that $U$ is independent of any function of the variables in $\{Z^m_{\pi_i} : i \in \an(k)\}$. Hence,
\begin{align}\label{eq:unmixing_ind_1}
    (\bu+\bc')\cdot \hat Z^m_{\hat \an(k)} + \bc'_0 \cdot f(Z_{\pa(k)}) \ci (\bu+\bc')\cdot \hat Z^m_{\hat \an(k)} + \bc'_0(f(Z_{\pa(k)}) + N)\ ,
\end{align}
where the right-hand size is $U$ and the left-hand side is a function of $\{Z^m_{\pi_i} : i \in \an(k)\}$. Also, since $N$ is the exogenous noise variable associated with $Z_{\pi_k}$, we have
\begin{align}\label{eq:unmixing_ind_2}
    (\bu+\bc')\cdot \hat Z^m_{\hat \an(k)} + \bc'_0 \cdot f(Z_{\pa(k)}) \ci \bc'_0 \cdot N \ .
\end{align}
\eqref{eq:unmixing_ind_1} and \eqref{eq:unmixing_ind_2} imply that $(\bu+\bc')\cdot \hat Z^m_{\hat \an(k)} + \bc'_0 \cdot f(Z_{\pa(k)})$ is a constant function of $\hat Z^m_{\hat \an(k)}$. Since $(\bu+\bc')\cdot \hat Z^m_{\hat \an(k)}$ is a linear function (or constant zero) and $\bc'_0 \neq 0$, $f$ cannot be a nonlinear function which concludes the proof of this step. Next, we consider two possible cases, $f$ is zero, i.e., $\pi_k \in I^m$ case, and $f$ is a linear function.

\paragraph{Step 3: $\pi_k \in I^m$ and $f=0$.}
In this case, we have $Z^m_{\pi_k} = N$. Note that for $\bu=-\bc'$, \eqref{eq:U} becomes
\begin{equation}
    U = \bc'_0 \cdot N = \bc'_0 \cdot Z^m_{\pi_k}  \ ,
\end{equation}
which implies
\begin{align}
    ( \bH^*_k + \bu \cdot \bH^*_{\hat \an(k)} ) \cdot \bG \cdot Z^m &= \hat Z^m_k + \bu \cdot \hat Z^m_{\hat \an(k)} = U = \bc'_0 \cdot Z^m_{\pi_k} \\
    \mbox{and} \qquad \mathds{1} \big( ( \bH^*_k + \bu \cdot \bH^*_{\hat \an(k)} ) \cdot \bG \big) &= \be_{\pi_k}^{\top} \ .
\end{align}
Also note that, any $\bu$ vector that satisfies $U \ci \hat Z^m_{\hat \an(k)}$ also satisfies $\Cov(U, \hat Z^m_{\hat \an(k)}) = \boldsymbol{0}$. In Step 1, we have shown that $\Cov(U, \hat Z^m_{\hat \an(k)}) = \boldsymbol{0}$ has a unique solution. Therefore, $U \ci \hat Z^m_{\hat \an(k)}$ also has a unique solution, which we have found in \eqref{eq:cov-solution}. Hence, if $\pi_k \in I^m$, then Algorithm~\ref{alg:ident} updates $\bH^*_k$ correctly.

\paragraph{Step 4: $\pi_k \notin I^m$ and $f$ is linear.} Only remaining case to check is $\pi_k \notin I^m$ and $f$ is a linear function. Let
\begin{equation}
    f(Z_{\pa(\pi_k)}) \triangleq \bc'' \cdot \hat Z^m_{\hat \an(k)} \ ,
\end{equation}
in which $\bc'' \in \R^{|\hat \an(k)|}$ has nonzero entries only at the coordinates corresponding to $\pa(\pi_k)$. Then, for $(\bu+\bc')\cdot \hat Z^m_{\hat \an(k)} + \bc'_0 \cdot f(Z_{\pa(k)})$ to be a constant function, we need to have $(\bu+\bc'+\bc'_0 \cdot \bc'') = \boldsymbol{0}$. Note that, for $\bc'' \neq \boldsymbol{0}$ case, i.e., $f \neq 0$ and $\pi_k \notin I^m$, we have already found $\bu^{\rm obs} = - \bc' - \bc'_0 \cdot \bc''$ using the observational environment. Since we are searching for $\bu = -\bc'$, which is required to achieve scaling consistency, we compare the solution $\bu^m$ at environment $\mcE^m$ to $\bu^{\rm obs}$ and if they are distinct, we update $\bH^*_k$.

To sum up, Stage~4 updates $\bH^*_k$ correctly by identifying an environment $\mcE^m$ in which $Z_{\pi_k}$ is intervened and eliminating the effect of $\bH^*_{\hat \an(k)}$. Finally, we note that such an environment is guaranteed to exist among $\{\mcE^m : \bW_{m,k} \neq 0\}$. To see this, recall that $[\bD \cdot \bW]_{\pi_k,k} \neq 0$ due to \eqref{eq:ancestor-conditions}, proven in Theorem~\ref{theorem:ancestors}. This implies that there exists $m$ such that $\bD_{\pi_k,m}=1$ and $\bW_{m,k}\neq 0$, and $\bD_{\pi_k,m}=1$ implies that $\pi_k \in I^m$. This completes the proof of \eqref{eq:unmixing_goal_node}, and subsequently the proof of the perfect latent recovery.

\subsubsection{Recovery of the latent graph}\label{appendix:proof-unmixing-graph}
After the first step of Stage~4, we have
\begin{equation}
    \hat Z_t = \bc_t \cdot Z_{\pi_t} \ , \quad \forall t \in [n] \ ,
\end{equation}
where $\{\bc_t : t \in [n]\}$ are nonzero constants. Then,
\begin{equation}\label{eq:ci-equivalence}
    \hat Z_t \ci \hat Z_j \mid \{\hat Z_i : i \in \mcS\} \quad \iff \quad Z_{\pi_t} \ci Z_{\pi_j} \mid \{Z_{\pi_i} : i \in \mcS \} \ .
\end{equation}
Consider node $t \in [n]$ and node $j \in \hat \ch(t)$. If $\pi_t \in \pa(\pi_j)$, given the adjacency-faithfulness assumption, \eqref{eq:ci-equivalence} implies that $\hat Z_t$ and $\hat Z_j$ cannot be made conditionally independent for any conditioning set. On the other hand, note that for any set $\mcS$ that contains all the nodes in $\pa(\pi_j)$ and does not contain a node in $\de(\pi_j)$ satisfies
\begin{equation}
    Z_{\pi_t} \ci Z_{\pi_j} \mid \{Z_{\pi_i} : i \in \mcS\} \ ,
\end{equation}
and subsequently,
\begin{equation}
    \hat Z_t \ci \hat Z_j \mid \{\hat Z_i : i \in \mcS\} \ .
\end{equation}
Finally, if $\pi_t \notin \pa(\pi_j)$, then $\hat \pa(j) \setminus \{t\}$ contains all the nodes in $\pa(\pi_j)$ and does not contain a node in $\de(\pi_j)$. Hence, the second stage of Step~4 of Algorithm~\ref{alg:ident} successfully eliminates all spurious edges between $t$ and $j \in \hat \ch(t)$.

\subsection{Proofs of Theorem~\ref{theorem:main-hard} and Theorem~\ref{theorem:main-soft}}

Under Assumption~\ref{assumption:full-rank-environments} and interventional regularity, Lemma~\ref{lemma:causal-order} and Theorem~\ref{theorem:ancestors} show that using UMN soft interventions, outputs of Algorithm~\ref{alg:ident} satisfy identifiability up to ancestors. Hence, identifiability up to ancestors is possible using UMN soft interventions.

Furthermore, note that Lemma~\ref{lemma:causal-order} and Theorem~\ref{theorem:ancestors} are valid for both soft and hard interventions. Then, Theorem~\ref{theorem:unmixing} shows that using UMN hard interventions, Algorithm~\ref{alg:ident} outputs satisfy perfect identifiability. Hence, perfect identifiability is possible using UMN hard interventions.

\subsection{Proof of Lemma~\ref{lemma:ratio-rank-two}}\label{appendix:proof-ratio-rank-two}

We start by showing that $[\Lambda(z)]_{i,i}$ cannot be a constant function in $z$. This is shown in the proof of \cite[Lemma~7]{varici2023score}. For our paper to be self-contained, we repeat the proof steps in~\cite{varici2023score} as follows.
For $i \in [n]$ define
\begin{equation}
        h(z_i, z_{\pa(i)}) \triangleq \frac{p_i(z_i\mid z_{\pa(i)})}{q_i(z_i\mid z_{\pa(i)})} \ . \label{eq:h-pq-parent-dependence}
\end{equation}
We prove by contradiction that $h(z_i, z_{\pa(i)})$ varies with $z_i$. Assume the contrary, i.e., let $h(z_i, z_{\pa(i)}) = h(z_{\pa(i)})$. By rearranging \eqref{eq:h-pq-parent-dependence} we have
\begin{align}\label{eq:misc1}
    p_i(z_i \mid z_{\pa(i)}) = h(z_{\pa(i)}) \cdot q_i(z_i \mid z_{\pa(i)}) \ .
\end{align}
Fix a realization of $z_{\pa(i)} = z_{\pa(i)}^*$, and integrate both sides of \eqref{eq:misc1} with respect to $z_i$. Since both $p_i$ and $q_i$ are pdfs, we have
\begin{align}
    1 &= \int_{\R} p_i(z_i \mid z_{\pa(i)}^*)\mathrm{d}{z_i}
    = \int_{\R}  h(z_{\pa(i)}^*) \cdot q_i(z_i \mid z_{\pa(i)}^*) \mathrm{d}{z_i} \\
    &= h(z_{\pa(i)}^*) \int_{\R} q_i(z_i \mid z_{\pa(i)}^*) \mathrm{d}{z_i} \\
    &= h(z_{\pa(i)}^*)\ .
\end{align}
This identity implies that $p_i(z_i \mid z_{\pa(i)}^*) = q_i(z_i \mid z_{\pa(i)}^*)$ for any arbitrary realization $z_{\pa(i)}^*$, which contradicts with the premise that observational and interventional causal mechanisms are distinct. Consequently,
\begin{equation}
    [\Lambda(z)]_{i,i} = \frac{\partial}{\partial z_i} \log\frac{p_i(z_i \mid z_{\pa(i)})}{q_i(z_i \mid z_{\pa(i)})}
\end{equation}
is not a constant in $z$. Note that for a fixed realization of $z_j = z^*_j$, $z_{\pa(i)}=z^*_{\pa(i)}$, and $z_{\pa(j)}=z^*_{\pa(j)}$, $[\bLambda(z)]_{j,j}$ becomes constant whereas $[\bLambda(z)]_{i,i}$ varies with $z_i$. Hence, their ratio is not a constant in~$z$. Finally, note that $\bLambda(z)$ is an upper-triangular matrix since $(1,\dots,n)$ is a valid causal order and for all $i\in[n]$,
\begin{equation}
    \nabla \log\frac{p_i(z_i \mid z_{\pa(i)})}{q_i(z_i \mid z_{\pa(i)})}
\end{equation}
is a function of only $\{z_k : k \in \pa(i) \cup \{i\}\}$. Together with the fact that diagonal entries of $\bLambda(z)$ are not constantly zero, the columns (and rows) of $\bLambda$ are linearly independent vector-valued functions.

\subsection{Insufficiency of strongly separating sets}\label{appendix:insufficient-separating-set}

\paragraph{Background.} Recently, \cite{bing2024identifying} has shown the identifiability of latent representations under a linear transformation using \texttt{do} interventions. Specifically, they have shown that a \emph{strongly separating set} of multi-node interventions are sufficient for identifiability. It is well-known that strongly separating sets can be constructed using $2\ceil{\log_2 n}$ elements. Hence, identifiability can be achieved using $2\ceil{\log n}$ \texttt{do} interventions. In this section, we show that a similar result is \emph{impossible} when using stochastic hard interventions.

\begin{lemma}[Impossibility]\label{lemma:impossibility}
    A strongly separated set of $2\ceil{\log_2 n}$ stochastic hard interventions are not guaranteed to be sufficient for perfect identifiability. In fact, they are not even sufficient for identifiability up to ancestors.
\end{lemma}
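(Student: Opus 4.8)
The plan is to prove impossibility by a counterexample: for a suitable number $n$ of latent variables and a suitable strongly separating family $\{I^1,\dots,I^M\}$ with $M=2\ceil{\log_2 n}$, I would exhibit two linear Gaussian latent causal models with additive noise, each equipped with regular stochastic hard interventions on that family, that induce \emph{identical} distributions of $X$ in all $M+1$ environments but whose latent bases are related by an invertible matrix that is not of the form $\bP_\pi\bC_{\rm an}$ appearing in Definition~\ref{def:identifiability}. Since a CRL procedure sees only the observed distributions, no algorithm can output an estimate $\hat Z$ meeting the criterion of Definition~\ref{def:identifiability} for both models, so neither perfect identifiability nor identifiability up to ancestors can be guaranteed from a strongly separating set of $2\ceil{\log_2 n}$ stochastic hard interventions.

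The first step is to move the construction into the latent space. In the linear Gaussian regime the observed law in $\mcE^m$ is $\mathcal{N}(0,\bG\,\Sigma_Z^m\,\bG^{\top})$, with $\Sigma_Z^m$ the latent covariance. A second model with transformation $\tilde\bG=\bG\bT$ for an invertible $\bT\in\R^{n\times n}$ and latent covariances $\tilde\Sigma_Z^m$ is observationally indistinguishable in every environment if and only if $\tilde\Sigma_Z^m=\bT^{-1}\,\Sigma_Z^m\,\bT^{-\top}$ for all $m$; here one uses that $\bG$ has full column rank, so $\bG\bA\bG^{\top}=0$ forces $\bA=0$. Hence it suffices to produce a $\bT$ that is not of the form $\bP_\pi\bC_{\rm an}$ (for any permutation $\pi$ and any ancestor-pattern matrix), such that for \emph{every} $m$ the matrix $\bT^{-1}\,\Sigma_Z^m\,\bT^{-\top}$ is again the covariance of a genuine hard intervention on one fixed DAG.

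The source of the slack is the rank deficiency of the intervention signature matrix. For the canonical strongly separating family --- built from the $\ceil{\log_2 n}$ binary digits of the node indices, each digit contributing the set of nodes whose digit is $1$ and the set where it is $0$ --- the columns of $\bD_{\rm int}$, i.e., the indicator vectors of the $I^m$, all lie in the span of the $\ceil{\log_2 n}$ digit-set indicators together with the all-ones vector, so $\rank(\bD_{\rm int})\le\ceil{\log_2 n}+1$, which is already $<n$ for $n=4$. Consequently there is a nonzero vector $\boldsymbol{\alpha}$ with $\boldsymbol{\alpha}^{\top}\bD_{\rm int}=0$: a latent direction whose aggregate intervention effect cancels in every environment. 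This is exactly the configuration ruled out by Assumption~\ref{assumption:full-rank-environments}, and it is what should make a nontrivial $\bT$ invisible at the level of the observed second moments.

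The main obstacle --- and the heart of the proof --- is to turn this invisibility into a genuinely different \emph{valid} model: $\bT$ must be chosen jointly with the two DAGs and their interventional mechanisms so that in \emph{every} environment $\bT^{-1}\,\Sigma_Z^m\,\bT^{-\top}$ is the covariance produced by the second DAG after deleting the incoming edges of some target set, because a generic $\bT$ maps a hard intervention to a distribution that is not itself a hard intervention on the second DAG. The delicacy here is precisely that strong separation forbids any two mixed coordinates from being co-intervened (or co-non-intervened) across all environments, so $\bT$ has to remain compatible with hard interventions that hit the mixed coordinates in every possible pattern; this is where the free interventional noise variances must be matched against $\bT$, with the null direction $\boldsymbol{\alpha}$ absorbing the residual. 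I would therefore search for $\mcG$, the separating family, and $\bT$ simultaneously, verifying (i) simultaneous validity over all $M+1$ environments and (ii) that $\bT$ induces a dependence of some node on a non-ancestor, so that $\hat Z=\bP_\pi\bC_{\rm an}Z$ necessarily fails. Step (i) is the crux; everything else is bookkeeping.
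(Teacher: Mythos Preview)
Your high-level strategy---exhibit two linear Gaussian latent models that induce identical observed laws in every environment yet are not related by any $\bP_\pi\bC_{\rm an}$---matches the paper's. But the paper's execution is far more elementary and, crucially, works from a different premise: it takes $n=2$, where the strongly separating family is simply $\{\{1\},\{2\}\}$, and it matches \emph{only} the two interventional environments, explicitly discarding the observational one. With that relaxation the argument reduces to a $2\times2$ computation: Model~1 has $\mcG:1\to2$ with $\bG=\bI$, Model~2 has the empty DAG with a generic $\bar\bG$; equating the two observed covariances yields six scalar equations in eight free parameters, which the paper solves explicitly. Since one DAG has an edge and the other none, identifiability up to ancestors already fails. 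No rank-deficiency argument enters---indeed for $n=2$ the signature matrix is $\bD_{\rm int}=\bI_2$, full rank, so your null-vector mechanism is not even available there.

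Your route is strictly more ambitious: you keep the observational environment in the matching, which forces you to $n\ge4$ where the canonical separating family first becomes rank-deficient (for $n=2,3$ it satisfies Assumption~\ref{assumption:full-rank-environments}, and then the paper's own Theorem~\ref{theorem:main-hard} gives identifiability, so your stronger claim would be false there). That is a reasonable and potentially more informative target, but it comes with a real cost that you have not paid.

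The genuine gap is that you have not constructed anything. You correctly isolate the crux---arranging that $\bT^{-1}\Sigma_Z^m\bT^{-\top}$ is, for \emph{every} $m$, the covariance of a bona fide hard-intervention model on a second fixed DAG, despite strong separation forcing every pair of coordinates to be split by some $I^m$---and you correctly note that the null direction of $\bD_{\rm int}$ is the only slack available. But ``I would search for $\mcG$, the separating family, and $\bT$ simultaneously, verifying (i) and (ii)'' is a restatement of the problem, not a solution; step~(i) is the entire content of the lemma, and nothing in the proposal shows that such a $\bT$ exists. If you want a complete proof, either drop the observational environment and write out the explicit $n=2$ construction, or actually exhibit $(\mcG,\tilde\mcG,\bT)$ and all variance assignments for your $n=4$ family and check the covariance identities environment by environment.
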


\begin{proof}
    We prove the claim for $n=2$ nodes. The smallest strongly separating set for two nodes is $\{\{1\}, \{2\}\}$. We will consider two distinct models of latent variables and latent graphs that are not distinguishable using interventional data of $I^1=\{1\}$ and $I^2 = \{2\}$ (without observational data). The key idea is that after the linear transformation in the first model is fixed, we can design the linear transformation in the second model such that observed variables in both models will have the same distributions. We construct a pair of indistinguishable models as follows.

    \paragraph{First model.} Let $\mcG$ consists of the edge $1 \rightarrow 2$. Consider a linear Gaussian latent model with the edge weight of $Z_1$ on $Z_2$ is set to $1$, and consider identity mapping, i.e.,
    \begin{align}
        \mcG : 1 \rightarrow 2 \ , \qquad \bG = \begin{bmatrix}
            1 & 0 \\ 0 & 1
        \end{bmatrix} \ , \quad
        Z^1 = \begin{bmatrix}
            N_1^* \\ N_1^* + N_2
        \end{bmatrix} \ , \quad
        Z^2 = \begin{bmatrix}
            N_1 \\ N_2^*
        \end{bmatrix}   \ , \\
         N_1 \sim \mcN(0, V_1) \ , \;\; N_1^* \sim \mcN(0, V_1^*) \ , \;\; N_2 \sim \mcN(0, V_2) \ , \;\; N_2^* \sim \mcN(0, V_2^*) \ .
    \end{align}
    where $V_1, V_1^*, V_2, V_2^*$ are nonzero variances of the exogenous noise terms such that $\bar V_1 \neq \bar V_1^*$ to ensure that interventional and observational mechanisms of node $1$ are distinct.  Then, the observed variables $X$ in two environments are given by
    \begin{align}\label{eq:example-model1-X1}
        X^1 &= \bG \cdot Z^1 \sim \mcN\left(0,\begin{bmatrix}
            V_1^* & V_1^* \\ V_1^* & V_1^* + V_2
        \end{bmatrix}\right) \ , \\
        \mbox{and} \quad X^2 &= \bG \cdot Z^2 \sim \mcN\left(0,\begin{bmatrix}
            V_1 & 0 \\ 0 & V_2^*
        \end{bmatrix}\right) \ . \label{eq:example-model1-X2}
    \end{align}

    \paragraph{Second model.} Let $\bar \mcG$ be the empty graph. Consider a linear Gaussian latent model under a non-identity mapping parameterized by
\begin{align}
    \bar \mcG : \mbox{empty} \ , \qquad \bar \bG = \begin{bmatrix}
        a & b \\ c & d
    \end{bmatrix} \ , \quad
    \bar Z^1 = \begin{bmatrix}
        \bar N_1^* \\ \bar N_2
    \end{bmatrix} \ , \quad
    Z^2 = \begin{bmatrix}
        \bar N_1 \\ \bar N_2^*
    \end{bmatrix}   \ , \\
    \bar N_1 \sim \mcN(0, \bar V_1) \ , \;\; \bar N_1^* \sim \mcN(0, \bar V_1^*) \ , \;\; \bar N_2 \sim \mcN(0, \bar V_2) \ , \;\; \bar N_2^* \sim \mcN(0, \bar V_2^*) \ .
\end{align}
where $\bar V_1, \bar V_1^*, \bar V_2, \bar V_2^*$ are nonzero variances of the exogenous noise terms such that $\bar V_1 \neq \bar V_1^*$ and $\bar V_2 \neq \bar V_2^*$ to ensure that interventional mechanisms are distinct from the observational ones.  Then, the observed variables $\bar X$ in two environments are given by
\begin{align}\label{eq:example-model2-X1}
    \bar X^1 &= \bar \bG \cdot \bar Z^1 \sim \mcN\left(0,\begin{bmatrix}
        a^2 \bar V_1^* + b^2 \bar V_2 & a c \bar V_1^* + b d \bar V_2 \\ a c \bar V_1^* + b d \bar V_2 & c^2 \bar V_1^* + d^2 \bar V_2
    \end{bmatrix}\right) \ , \\
    \mbox{and} \quad \bar X^2 &= \bar \bG \cdot \bar Z^2 \sim \mcN\left(0,\begin{bmatrix}
        a^2 \bar V_1 + b^2 \bar V_2^* & a c \bar V_1 + b d \bar V_2^* \\ a c \bar V_1 + b d \bar V_2^* & c^2 \bar V_1 + d^2 \bar V_2^*
    \end{bmatrix}\right) \ . \label{eq:example-model2-X2}
\end{align}

\paragraph{Non-identifiability.} We will show a nontrivial construction that ensures $X^1 = \bar X^1$ and $X^2 = \bar X^2$, which implies the non-identifiability from intervention set $\{\{1\},\{2\}\}$. First, using \eqref{eq:example-model1-X1}, \eqref{eq:example-model1-X2}, \eqref{eq:example-model2-X1}, and \eqref{eq:example-model2-X2}, we write all requirements for $X^1 = \bar X^1$ and $X^2 = \bar X^2$ to hold:
\begin{align}
    V_1^* &= a^2 \bar V_1^* + b^2 \bar V_2 \ , \label{eq:cond1} \\
    V_1^* &= ac \bar V_1^* + bd \bar V_2 \ , \label{eq:cond2} \\
    V_2 &= (c^2 - ac) \bar V_1^* + (d^2-bd) \bar V_2 \ , \label{eq:cond3} \\
    V_1 &= a^2 \bar V_1 + b^2 \bar V_2^* \ , \label{eq:cond4} \\
    V_2^* &= c^2 \bar V_1 + d^2 \bar V_2^* \ , \label{eq:cond5} \\
    0 &= ac \bar V_1 + bd \bar V_2^* \ . \label{eq:cond6}
\end{align}
Now, let ${\bar V_1, \bar V_1^*, \bar V_2, \bar V_2^*}$ take any values such that $\bar V_1 \bar V_2 \neq \bar V_1^* \bar V_2^*$. We want to show that there exist $\{a,b,c,d\}$ and $\{V_1, V_1^*, V_2, V_2^*\}$ values that satisfy all the six equations. Note that, the values of $V_1, V_2$, and $V_2^*$ are not constrained by multiple equations or additional conditions. Hence, for any given $\{a,b,c,d\}$ and $\{\bar V_1, \bar V_1^*, \bar V_2, \bar V_2^*\}$, we can readily set $V_1, V_2$, and $V_2^*$ to satisfy \eqref{eq:cond4}, \eqref{eq:cond3}, and \eqref{eq:cond5}, respectively. Therefore, we only need to ensure that we can choose $\{a,b,c,d\}$ such that the identities in \eqref{eq:cond1},\eqref{eq:cond2}, and \eqref{eq:cond6} are satisfied. Next, after substituting $d=1$, and rearranging, we only need to choose $\{a,b,c\}$ that satisfy
\begin{align}
    ac \bar V_1 + b \bar V_2^* = 0 \label{eq:cond8} \ , \\
    (a^2 - ac) \bar V_1^* + (b^2 - b) \bar V_2 =0 \ . \label{eq:cond7}
\end{align}
Substituting $ac = -b \dfrac{\bar V_2^*}{\bar V_1}$ into \eqref{eq:cond7}, we require
\begin{align}
    b^2 \cdot \bar V_2 - b (\bar V_2 - \bar V_2^* \frac{\bar V_1^*}{\bar V_1}) + a^2 \bar V_1^* = 0 \ . \label{eq:cond9}
\end{align}
\eqref{eq:cond9} has a real solution for $b$ if and only if
\begin{align}
    a^2 \leq \frac{\bar V_2}{4 \bar V_1^*}
    \left( 1 - \frac{\bar V_1^* \bar V_2^*}{\bar V_1 \bar V_2}\right)^2 \ .
\end{align}
This implies that, if $\bar V_1 \bar V_2 \neq \bar V_1^* \bar V_2^*$, we can choose $a$ and $b$ that satisfies \eqref{eq:cond9}, and $c$ is determined by \eqref{eq:cond8}. This means that, for any given $\{\bar V_1, \bar V_1^*, \bar V_2, \bar V_2^*\}$ such that $\bar V_1 \bar V_2 \neq \bar V_1^* \bar V_2^*$, there exists $\{a,b,c,d\}$ and $\{V_1, V_1^*, V_2, V_2^*\}$ that ensures $X^1 = \bar X^1$ and $X^2 = \bar X^2$. Hence, interventions on the strongly separating set $\{\{1\}, \{2\} \}$ are not sufficient to distinguish the first and second models, which completes the proof of non-identifiability.
\end{proof}

\subsection{Analysis of interventional regularity}\label{appendix:analysis-assumption2}

\begin{lemma}\label{lemma:assumption2-conditions}
    Consider additive noise models under hard interventions. Interventional regularity is satisfied if the post-intervention score function of $N_i$, denoted by $\bar{r}$, is analytic and one of the following is true.
    \begin{enumerate}[leftmargin=2em]
        \item $\dfrac{\partial f_i(z_{\pa(i)})}{\partial z_j}$ is not constant and there do not exist constants $\alpha_1 \neq 1$, $\alpha_2, \alpha_3 \in \R$ such that
            \begin{equation}
                \bar{r}(y) = \alpha_1 \cdot \bar{r}(y+\alpha_2) + \alpha_3 \ , \quad \forall y \in \R \ .
            \end{equation}

        \item $\dfrac{\partial f_i(z_{\pa(i)})}{\partial z_j}$ is constant
        and noise term $N_i$ remains unaltered after the intervention.
    \end{enumerate}
\end{lemma}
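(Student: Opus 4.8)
The plan is to convert the regularity ratio for a triplet $(i,j,c)$ into a single functional identity between the one-dimensional noise scores and then exploit the additive structure $Z_i=f_i(Z_{\pa(i)})+N_i$. Write $r_i\triangleq(\log p_{N_i})'$ for the observational noise score of node $i$ and $\bar r\triangleq(\log q_i)'$ for its post-intervention score, where $q_i$ is the density of the intervened noise. A hard intervention on $i$ yields $\log\frac{p_i(z_i\mid z_{\pa(i)})}{q_i(z_i)}=\log p_{N_i}\!\big(z_i-f_i(z_{\pa(i)})\big)-\log q_i(z_i)$, so writing $w\triangleq z_i-f_i(z_{\pa(i)})$ and $g\triangleq\partial f_i/\partial z_j$ we get
\[
[\bLambda]_{i,i}=r_i(w)-\bar r(z_i),\qquad [\bLambda]_{i,j}=-\,r_i(w)\,g,\qquad [\bLambda]_{j,j}=r_j\!\big(z_j-f_j(z_{\pa(j)})\big)-(\log q_j)'(z_j).
\]
The key structural remark is that $g$ and $[\bLambda]_{j,j}$, hence $c[\bLambda]_{j,j}$, do not depend on $z_i$: $g$ and $f_i$ are functions of $z_{\pa(i)}$ only, and since $j\in\pa(i)$ the node $i$ lies outside $\{j\}\cup\pa(j)$, so $[\bLambda]_{j,j}$ involves only $z_j$ and $z_{\pa(j)}$. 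Suppose for contradiction that $\big([\bLambda]_{i,j}+c[\bLambda]_{j,j}\big)/[\bLambda]_{i,i}\equiv\lambda$ is constant; rearranging gives $c[\bLambda]_{j,j}=r_i(w)(\lambda+g)-\lambda\,\bar r(z_i)$, and differentiating in $z_i$ annihilates the left side and leaves a $c$-free identity,
\[
r_i'(w)\,(\lambda+g)=\lambda\,\bar r'(z_i)\qquad\text{for all }z_i,\ z_{\pa(i)}.
\]
Call this $(\dagger)$; since it no longer contains $c$, the remaining argument handles all $c\in\mathbb Q$ simultaneously. It already forces $\lambda\neq 0$, for with $\lambda=0$ it reads $r_i'(w)\,g\equiv 0$, and evaluating at a $z_{\pa(i)}$ where $g\neq 0$ (such points exist in both cases) makes $r_i'$ vanish on all of $\R$, impossible for an integrable density $p_{N_i}$.

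Under Condition~2, $g\equiv g_0$ is a nonzero constant (a genuine parent) and $N_i$ is unaltered, so $\bar r=r_i$ and $(\dagger)$ becomes $(\lambda+g_0)\,r_i'\!\big(z_i-f_i(z_{\pa(i)})\big)=\lambda\,r_i'(z_i)$. If $\lambda+g_0=0$ this forces $r_i'\equiv 0$; otherwise, fixing $z_i$ and varying $z_{\pa(i)}$ — so that $z_i-f_i(z_{\pa(i)})$ sweeps all of $\R$, $f_i$ being affine in $z_j$ with slope $g_0$ — shows $r_i'\equiv\rho$ is constant, whence $(\lambda+g_0)\rho=\lambda\rho$ gives $\rho=0$. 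Either way $r_i'\equiv 0$, a contradiction. Under Condition~1, $g$ is non-constant; I would fix a base point $z^\ast$ with $\lambda+g(z^\ast)\neq 0$ and integrate $(\dagger)$ in $z_i$ to obtain $\bar r(y)=\tfrac{\lambda+g(z^\ast)}{\lambda}\,r_i\!\big(y-f_i(z^\ast)\big)+\mathrm{const}(z^\ast)$ for all $y$, then choose two such base points $z^{(1)},z^{(2)}$ with $g(z^{(1)})\neq g(z^{(2)})$ (possible because $g$ has an interval image with more than one point, at most one of which is $-\lambda$) and eliminate $r_i$ between the two formulas. This produces $\bar r(y)=\alpha_1\,\bar r(y+\alpha_2)+\alpha_3$ with $\alpha_1=\tfrac{\lambda+g(z^{(2)})}{\lambda+g(z^{(1)})}\neq 1$ and $\alpha_2=f_i(z^{(1)})-f_i(z^{(2)})$ — precisely the affine self-similarity ruled out by Condition~1 — the desired contradiction. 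The case $c=0$ is subsumed, the $[\bLambda]_{j,j}$ term simply dropping before differentiation, so interventional regularity holds for every triplet.

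I expect the main obstacle to be organisational rather than conceptual: one must keep precise track of which coordinates each entry of $\bLambda$ depends on, so that differentiating in $z_i$ genuinely removes $c[\bLambda]_{j,j}$ for all $c$ at once, and one should avoid arguing through invertibility of $z_j\mapsto f_i$, since ``$\partial f_i/\partial z_j$ non-constant'' and ``$f_i$ non-constant'' are logically independent — the clean route is the integrate-then-eliminate step, which turns $(\dagger)$ (which mixes $r_i$ and $\bar r$) into a statement purely about $\bar r$ matching the forbidden form verbatim. Two subsidiary points need care: the base points in Condition~1 must lie where $[\bLambda]_{i,i}\not\equiv 0$ in addition to $\lambda+g\neq 0$, which holds since $f_i$ is non-constant (so $p_i\neq q_i$, hence $[\bLambda]_{i,i}$ is not identically zero) and $g$ is continuous; and the analyticity of $\bar r$ is invoked to propagate identities obtained on open sets to all of $\R$ and to exclude $\bar r$ being merely locally affine, both needed to apply Condition~1 in its stated global form.
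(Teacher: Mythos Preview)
Your proposal is correct and follows essentially the same approach as the paper: assume the ratio equals a constant, rewrite as a functional identity, and in Case~1 evaluate at two base points with distinct values of $\partial f_i/\partial z_j$ to obtain the forbidden affine self-similarity $\bar r(y)=\alpha_1\bar r(y+\alpha_2)+\alpha_3$ with $\alpha_1\neq 1$, while in Case~2 use $\bar r=r_i$ to force the noise score to be affine and reach a contradiction. The execution differs only in that you differentiate in $z_i$ first to kill the $c$-term (the paper instead freezes $z_{\pa(i)\cup\pa(j)}$ at two realizations and eliminates $r_i(n_i)$ algebraically) and your Case~2 argument actually bypasses analyticity by exploiting that $f_i$, being affine in $z_j$ with nonzero slope, sweeps all of $\R$ --- whereas the paper only obtains constancy of $\bar r'$ on an open interval and then invokes analyticity to extend.
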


\begin{proof}
The additive noise model for nodes $i$ and $j$ are given by
\begin{equation}
    Z_i = f_i(Z_{\pa(i)}) + N_i \ , \quad \mbox{and} \quad Z_j = f_j(Z_{\pa(j)}) + N_j \ .
\end{equation}
When nodes $i$ and $j$ are intervened, respectively, $Z_i$ and $Z_j$ are generated according to
\begin{equation}
    Z_i = \bar N_i \quad \mbox{and} \quad Z_j = \bar N_j \ ,
\end{equation}
in which $\bar N_i$ and $\bar N_j$ correspond to exogenous noise terms for nodes $i$ and $j$ under intervention. Then, denoting the pdfs of $N_i, \bar N_i, N_j, \bar N_j$ by $h_i, \bar h_i, h_j, \bar h_j$, respectively, we have
\begin{align}
    p_i(z_i \mid z_{\pa(i)}) &= h_i(z_i - f_i(z_{\pa(i)})) \ , \\
    q_i(z_i) &= \bar h_i(z_i) \ , \\
    p_j(z_j \mid z_{\pa(j)}) &= h_j(z_j - f_j(z_{\pa(j)})) \ , \\
    q_j(z_j) &= \bar h_j(z_j) \ .
\end{align}
Then, by denoting the score functions associated with $h_i, \bar h_i, h_j, \bar h_j$ by $r_i, \bar r_i, r_j, \bar r_j$, respectively, we have
\begin{align}
    \frac{\partial}{\partial z_i} \log\frac{p_i(z_i \mid z_{\pa(i)})}{q_i(z_i)} &= r_i(n_i) - \bar r_i(n_i + f_i(z_{\pa(i)})) \ , \label{eq:misc-analysis-1} \\
    \frac{\partial}{\partial z_j} \log\frac{p_i(z_i \mid z_{\pa(i)})}{q_i(z_i)} &= -r_i(n_i) \cdot \frac{\partial f_i(z_{\pa(i)})}{\partial z_j} \ , \label{eq:misc-analysis-2} \\
    \frac{\partial}{\partial z_j} \log\frac{p_j(z_j \mid z_{\pa(j)})}{q_j(z_j)} &= r_j(n_j) - \bar r_j(n_j + f_j(z_{\pa(j)})) \ . \label{eq:misc-analysis-3}
\end{align}
Next, assume the contrary and let the ratio in \eqref{eq:ratio-UMN} be a constant $\alpha \in \R$. Then, substituting \eqref{eq:misc-analysis-1}, \eqref{eq:misc-analysis-2}, and \eqref{eq:misc-analysis-3} into \eqref{eq:ratio-UMN} and rearranging the terms, we have
\begin{equation}\label{eq:misc2}
    \left(\alpha + \frac{\partial f_i(z_{\pa(i)})}{\partial z_j}\right) \cdot r_i(n_i) - \alpha \cdot \bar r_i(n_i + f_i(z_{\pa(i)})) = c \cdot \left(r_j(n_j) - \bar r_j(n_j + f_j(z_{\pa(j)}))\right) \ .
\end{equation}

\paragraph{Case 1: $\frac{\partial f_i(z_{\pa(i)})}{\partial z_j}$ is not constant.} Consider two distinct realizations of $Z_{\pa(i) \cup \pa(j)}$ such that $\frac{\partial}{\partial z_j} f_i(z_{\pa(i)})$ takes values of $\beta_1$ and $\beta_2$ where $\beta_1 \neq \beta_2$. Then, \eqref{eq:misc2} implies
\begin{align}\label{eq:misc3}
    (\alpha + \beta_1) \cdot r_i(n_i) - \alpha \cdot \bar r_i(n_i + \gamma_1) = c \cdot u_1 \ , \\
    (\alpha + \beta_2) \cdot r_i(n_i) - \alpha \cdot \bar r_i(n_i + \gamma_2) = c \cdot u_2 \ ,
\end{align}
for some constants $\gamma_1, \gamma_2, u_1, u_2$ for all $n_i \in \R$, and $\beta_1 \neq \beta_2$. By rearranging the terms, we get rid of $r_i(n_i)$ terms and obtain
\begin{align}
    \alpha \cdot \left( (\alpha+\beta_2) \cdot \bar r_i(n_i + \gamma_1) - (\alpha + \beta_1) \cdot \bar r_i(n_i + \gamma_2) \right) = c \cdot \left( (\alpha+\beta_1) \cdot u_2 - (\alpha+\beta_2) \cdot u_1 \right) \ . \label{eq:misc-analysis-4}
\end{align}
Since $\beta_1 \neq \beta_2$, \eqref{eq:misc-analysis-4} implies that
\begin{align}
    \bar r_i(y) = \alpha_1 \cdot \bar r_i(y+\alpha_2) + \alpha_3 \ ,
\end{align}
where $\alpha_1, \alpha_2, \alpha_3$ are constants and $\alpha_1 \neq 1$. Therefore, if there does not exist such constants, then the ratio in \eqref{eq:ratio-UMN} cannot be a constant.

\paragraph{Case 2: $\frac{\partial f_i(z_{\pa(i)})}{\partial z_j}=\beta$ for some nonzero constant $\beta$.} In this case, \eqref{eq:misc2} becomes
\begin{equation}\label{eq:misc-analysis-5}
    (\alpha + \beta) \cdot r_i(n_i) - \alpha \cdot \bar r_i(n_i + f_i(z_{\pa(i)})) = c \cdot \left(r_j(n_j) - \bar r_j(n_j + f_j(z_{\pa(j)}))\right) \ .
\end{equation}
Note that the right-hand side of \eqref{eq:misc-analysis-5} does not contain $n_i$. Also note that since $f_i(z_{\pa(i)})$ is continuous, there exists an open interval $\Theta \subseteq \R$ such that $f_i(z_{\pa(i)})$ can take all values $\theta \in \Theta$. Then, by taking the derivative of both sides with respect to $n_i$ and varying $f_i(z_{\pa(i)})$ in $\Theta$, we find that $ \bar r_i'(n_i + \theta)$ is constant for all $\theta \in \Theta$. Since $\bar r_i$ is analytic, this means that $\bar r_i(y) = \alpha_1 \cdot y + \alpha_2$ for some constants $\alpha_1, \alpha_2$ for all $y \in \R$. Then, since the noise term is invariant under intervention, we have $r_i = \bar r_i$. Substituting this into \eqref{eq:misc-analysis-5}, we obtain
\begin{align}
    (\alpha + \beta) \cdot (\alpha n_i + \alpha_2) - \alpha \alpha_1 \cdot (n_i + f_i(z_{\pa(i)})) = c \cdot \left(r_j(n_j) - \bar r_j(n_j + f_j(z_{\pa(j)}))\right) \ .
\end{align}
Since $\beta \neq 0$, the left-hand side is a function of $n_i$ whereas the right-hand side is not, which is invalid. Hence, the ratio in \eqref{eq:ratio-UMN} cannot be constant in this case.
\end{proof}

\subsection{Computational complexity of UMNI-CRL algorithm}\label{appendix:comp-complexity}

UMNI-CRL (Algorithm~\ref{alg:ident}) consists of four stages that we elaborate on as follows.
\begin{description}
    \item[Stage 1:] The score difference estimation is only performed once before the subsequent main algorithm steps. Since our results and the algorithm do not rely on a specific score difference estimation technique, studying the computational complexity of this step is out of scope. 
    
    \item[Stage 2:] In this step, we check the dimension of $\mcV$, a projection of a subspace (see line 6 of Algorithm~\ref{alg:ident}) at most $n \times (2\kappa +1)^n$ times. Here, $\kappa$ denotes the maximum possible determinant of a matrix in $\{0,1\}^{(n-1)\times(n-1)}$. In the proof of Lemma~\ref{lemma:causal-order} in Appendix~\ref{appendix:proof-causal-order}, we discuss why this choice facilitates the identifiability guarantees.

    \item[Stage 3:] In this step, we check the dimension of $\mcV$ (see line 23 of Algorithm~\ref{alg:ident}) at most $\|\bW_{:,j}\|_1 \times \|\bW_{:,t}\|_1$ times for every $(t,j)$ in Stage 3. Note that $\bW$ is updated within the steps of Stage~3. Therefore, the exact computational complexity would be a function of the graph structure and the outputs of Stage~2.

    \item[Stage 4:] Unmixing procedure for hard interventions essentially operates as a post-processing step that does not pose additional computational challenges. For instance, the total number of conditional independence tests in Stage~4 is $\mcO(n^2)$.
\end{description}

\paragraph{Bounds on $\kappa$.} In Section~\ref{sec:algorithm}, we defined $\kappa$ as the maximum determinant of a binary matrix $\{0,1\}^{(n-1)\times(n-1)}$ and noted that the computational complexity of UMNI-CRL algorithm depends on $n$ and $\kappa$ as seen in Stage~2 above. In general, using the well-known bound for the determinant of $\{0,1\}$ matrices~\cite{brenner1972hadamard}, we have
\begin{equation}\label{eq:general-det-bound}
    \kappa \leq \floor{2 (n/4)^n} \ .
    \Big|\big[\operatorname{adj}(\bD)\big]_{i, j}\Big| \leq \frac{n^{n/ 2}}{2^{n-1}} = 2 \left(\frac{n}{4}\right)^n \ .
\end{equation}
However, for specific cases, we have much smaller upper bounds for $\kappa$. For instance, for the choices of $n$ being $2,3,4,5,6,7$, $\kappa$ is known to be upper bounded by  $1, 1, 2, 3, 5, 9$, respectively~\cite{brenner1972hadamard}.  Furthermore, \cite{araujo2022maximum} shows that the determinant of a matrix in $\{0,1\}^{n \times n}$ with $n+k$ nonzero entries is bounded by $2^{k/3}$. Hence, if $\bD$ has $n+k$ nonzero entries, then
\begin{equation}\label{eq:sparse-det-bound}
    \kappa \leq \floor{2^{k/3}} \ .
\end{equation}
This implies that $\kappa$ can be quite small for sparse UMN interventions. For instance, if we take the exhaustive set of SN interventions and only add two additional intervened nodes in total, then we have $\kappa=1$.

\paragraph{Empirical tricks.} Finally, we note that various empirical tricks can be used to reduce the algorithm's computational complexity. For instance, after every update to $\bW$, we can divide the columns of $\bW$ by the greatest common divisor of its entries. Furthermore, even though $\kappa$ can grow quickly as $n$ becomes larger, in our experiments with up to $n=8$ nodes, we observe that setting $\kappa=2$ usually suffices for a good performance. Hence, we set $\kappa=2$ in the simulations in Section~\ref{sec:simulations}.

\section{Additional simulations}\label{appendix:simulations}

\paragraph{Details of evaluation metrics.} For the graph recovery via hard interventions, we use conditional independence tests in Stage~4 of Algorithm~\ref{alg:ident}. Since we adopt a linear Gaussian SEM latent model, we use a partial correlation test and set the significance level to $\alpha=0.05$. For the latent variable recovery error metrics $\ell_{\sf hard}$ and $\ell_{\sf soft}$ defined in~\eqref{eq:incorrect-mixing}, we first pass the entries of $\bH^* \cdot \bG$ through a threshold of $0.1$, then compute the incorrect mixing metrics.

\subsection{Simulations with nonlinear latent causal models and sensitivity to noisy scores}

\begin{figure}[t]
    \centering
    \begin{subfigure}{0.45\textwidth}
        \centering
        \includegraphics[width=\linewidth]{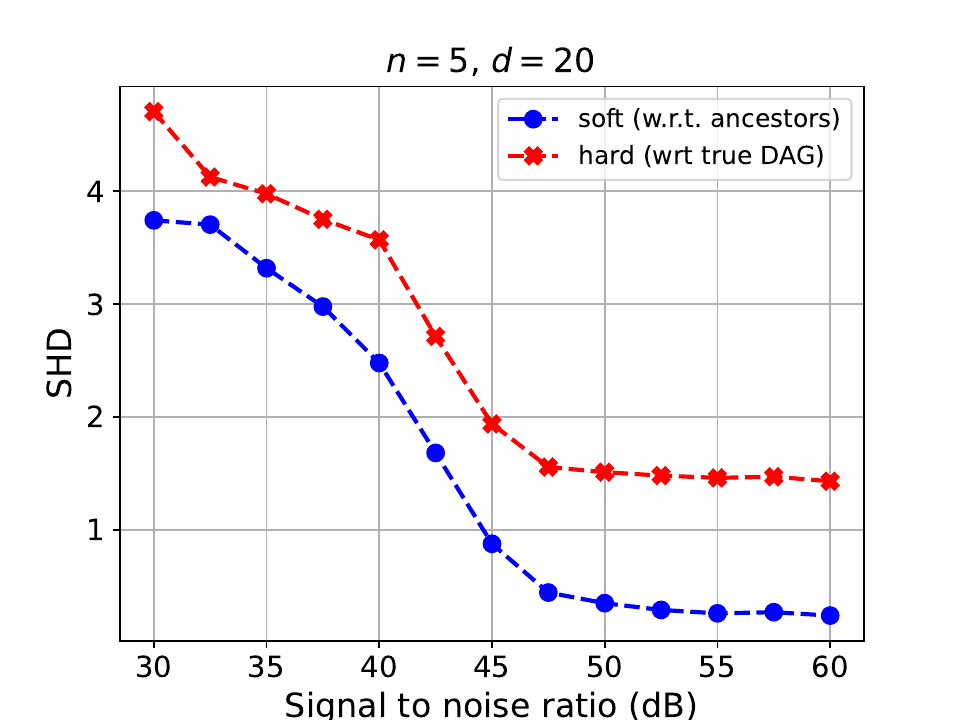}
        \caption{Graph recovery}
        \label{fig:vary_n}
    \end{subfigure}%
    \hfill
    \begin{subfigure}{0.45\textwidth}
        \centering
        \includegraphics[width=\linewidth]{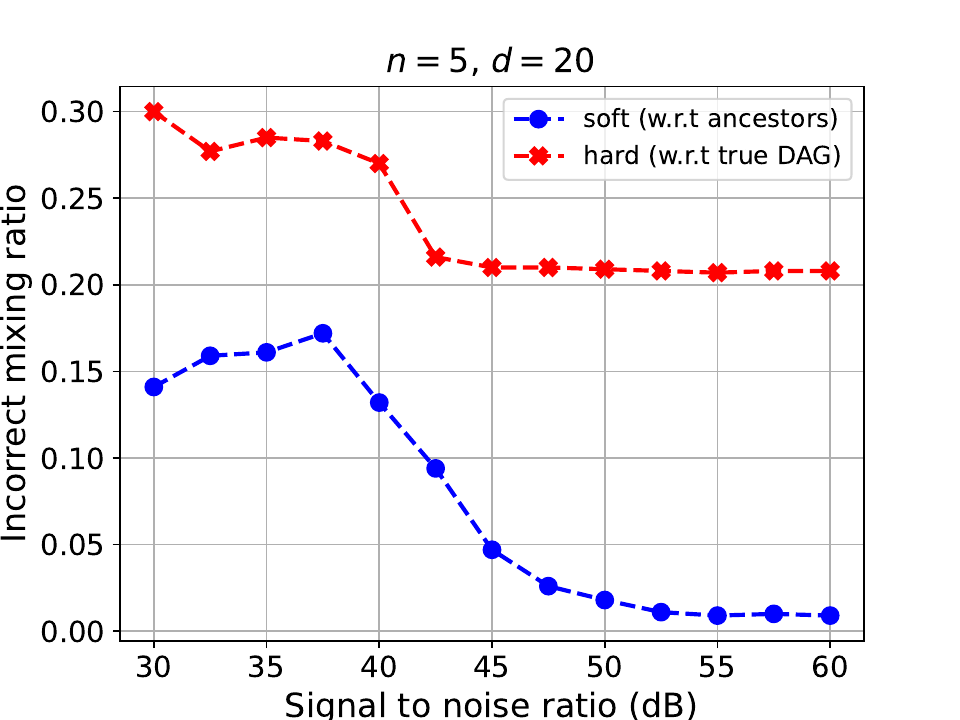}
        \caption{Latent variable recovery}
    \label{fig:vary_K}
    \end{subfigure}
    \caption{Sensitivity analysis of UMNI-CRL algorithm for quadratic latent causal models. The results are for $n=5$ latent nodes and $d=20$ observed variables, $10^4$ samples, and for average of 100 runs. \textbf{(a):} ${\rm SHD}(\mathcal{G}_{\rm tc}, \hat{\mathcal{G}})$ versus SNR (soft) and ${\rm SHD}(\mathcal{G}, \hat{\mathcal{G}})$ versus SNR (hard). \textbf{(b):} Incorrect mixing ratio $\ell_{\rm soft}$ versus SNR (soft) and $\ell_{\rm hard}$ versus SNR (hard).}      \label{fig:noise-sensitivity}
\end{figure}

In Section~\ref{sec:simulations}, we have adopted linear Gaussian SEMs as latent causal models for which the score estimation can be done via estimating the precision matrices of the observed variables. In this section, we perform additional simulations to study nonlinear latent causal models to investigate the relation between the algorithm's performance and the accuracy of the score function estimates. To this end, we adopt a quadratic latent causal model under varying amounts of noise in score functions. Specifically, we follow the experimental setup in \cite{varici2024score} and adopt a quadratic latent causal model with additive noise as follows
\begin{align}
Z_i =\sqrt{Z_{\pa(i)}^{\top} \cdot \bA_{i} \cdot Z_{\pa(i)}} + N_{i}\ , \label{eq:quadratic-model--general-experiments}
\end{align}
where $\{\bA_{i}:i\in[n]\}$ are positive-definite matrices, and the noise terms are zero-mean Gaussian variables with variances $\sigma_{i}^2$ sampled randomly from ${\rm Unif}([0.5,1.5])$. For an intervention on node $i$, $Z_i$ is set to $N_i/2$. This causal model admits a closed-form score function (see \cite[Appendix E.2]{varici2024score} for details), which enables us to obtain a score oracle. In our MN intervention setup, we use this score oracle and introduce varying levels of artificial noise according to 
\begin{equation}
    \hat s_X(x; \sigma^2) = s_X(x) \cdot \big( 1 + \Xi \big) \ , \quad \mbox{where} \quad \Xi \sim \mcN(0, \sigma^2 \cdot \bI_{d \times d}) 
\end{equation}
to test the behavior of our algorithm under different noise regimes $\sigma \in [10^{-3},10^{-1.5}]$. Figure~\ref{fig:noise-sensitivity} demonstrates the results for the latent graph recovery and latent variable recovery.

\end{document}